\PassOptionsToPackage{hyperfootnotes=false}{hyperref}
\RequirePackage{iftex}
\ifPDFTeX
  \errmessage{This manuscript requires XeLaTeX or LuaLaTeX with OpenType text fonts}
\fi
\documentclass[a4paper,fleqn]{cas-sc}

\usepackage[no-math]{fontspec}
\usepackage[expansion=false]{microtype}
\usepackage{graphicx}
\usepackage{amssymb}
\usepackage{amsmath}
\usepackage{amsthm}
\usepackage{pifont}
\usepackage{algorithm}
\usepackage{algpseudocode}
\usepackage{placeins}
\usepackage[table]{xcolor}

\definecolor{textmain}{HTML}{202124}
\usepackage{booktabs}
\usepackage{multirow}
\usepackage{makecell}
\usepackage{threeparttable}
\usepackage{etoolbox}

\usepackage{tabularx}
\usepackage{array}
\usepackage{ragged2e}
\usepackage{siunitx}
\robustify\bfseries
\usepackage[numbers,sort&compress,square]{natbib}
\setcitestyle{numbers,square,comma}

\usepackage{xurl}
\usepackage{hyperref}
\hypersetup{
unicode=true,
pdfencoding=auto,
colorlinks=true,
hypertexnames=false,
linkcolor=textmain,
filecolor=textmain,
urlcolor=textmain,
citecolor=textmain,
breaklinks=true,
pdftitle={Not All Agreement Counts as Corroboration: Provenance-Conserving Multi-View Fusion for Typed Action Admission in Human--Robot Collaboration},
pdfauthor={Jin et al.},
pdfsubject={Preprint},
pdfkeywords={low-quality multi-view fusion, provenance-aware fusion, evidence countability, multimodal foundation models, selective action admission, human--robot collaboration}
}

\usepackage[nameinlink]{cleveref}

\Crefname{figure}{Fig.}{Figs.}
\crefname{figure}{Fig.}{Figs.}
\Crefname{equation}{Eq.}{Eqs.}
\crefname{equation}{Eq.}{Eqs.}
\Crefname{table}{Table}{Tables}
\crefname{table}{Table}{Tables}
\Crefname{section}{Section}{Sections}
\crefname{section}{Section}{Sections}
\Crefname{algorithm}{Algorithm}{Algorithms}
\crefname{algorithm}{Algorithm}{Algorithms}

\usepackage{needspace}

\newcolumntype{Q}[1]{>{\RaggedRight\hyphenpenalty=10000\exhyphenpenalty=10000\arraybackslash}p{#1}}
\newcolumntype{M}[1]{>{\centering\arraybackslash}p{#1}}
\newcolumntype{Y}{>{\RaggedRight\arraybackslash}X}
\newcommand{\IFTableSetup}{%
\centering
\small
\rmfamily
\renewcommand{\arraystretch}{1.08}%
\setlength{\tabcolsep}{3.5pt}%
}
\newlength{\PACTPrimaryTableWd}
\newcommand{\cmark}{\ding{51}}
\newcommand{\xmark}{\ding{55}}

\newtheorem{theorem}{Theorem}
\newtheorem{proposition}{Proposition}
\newtheorem{corollary}{Corollary}
\newtheorem{assumption}{Assumption}
\newtheorem{definition}{Definition}
\Crefname{proposition}{Proposition}{Propositions}
\crefname{proposition}{Proposition}{Propositions}
\Crefname{definition}{Definition}{Definitions}
\crefname{definition}{Definition}{Definitions}
\Crefname{assumption}{Assumption}{Assumptions}
\crefname{assumption}{Assumption}{Assumptions}
\Crefname{corollary}{Corollary}{Corollaries}
\crefname{corollary}{Corollary}{Corollaries}
\Crefname{theorem}{Theorem}{Theorems}
\crefname{theorem}{Theorem}{Theorems}

\makeatletter
\patchcmd{\section}{15pt}{12pt}{}{\PackageWarning{pact-layout}{Section spacing patch failed; class spacing retained}}
\patchcmd{\subsection}{10pt}{7pt}{}{\PackageWarning{pact-layout}{Subsection spacing patch failed; class spacing retained}}
\patchcmd{\subsubsection}{10pt}{7pt}{}{\PackageWarning{pact-layout}{Subsubsection spacing patch failed; class spacing retained}}
\patchcmd{\paragraph}{10pt}{7pt}{}{\PackageWarning{pact-layout}{Paragraph spacing patch failed; class spacing retained}}
\makeatother

\begin{document}
\let\printorcid\relax
\let\WriteBookmarks\relax
\renewcommand{\floatpagefraction}{0.85}
\renewcommand{\textfraction}{0.05}
\renewcommand{\topfraction}{0.95}
\renewcommand{\bottomfraction}{0.85}
\setlength{\textfloatsep}{7pt plus 2pt minus 2pt}
\setlength{\floatsep}{6pt plus 2pt minus 2pt}
\setlength{\intextsep}{6pt plus 2pt minus 2pt}
\setlength{\abovecaptionskip}{3pt}
\setlength{\belowcaptionskip}{1pt}
\ExplSyntaxOn
\cs_set:Npn \__make_tbl_caption:nn #1#2
{
  \l_tbl_align_tl
  \skip_vertical:N \l_tbl_abovecap_skip
  \parbox{ \l_tbl_width_dim }
  {\RaggedRight\rmfamily\small\textbf{\color{scolor}#1.}~#2\par\vskip4pt }
  \skip_vertical:N \l_tbl_belowcap_skip
}
\cs_set:Npn \__make_fig_caption:nn #1#2
{
  \l_fig_align_tl
  \skip_vertical:N \l_fig_abovecap_skip
  \parbox{ \l_fig_width_dim }
  {\RaggedRight\rmfamily\small\textbf{\color{scolor}#1:}~#2\par }
  \skip_vertical:N \l_fig_belowcap_skip
}
\cs_set:Npn \__reset_fig:
{
  \tl_set:Nx \l_fig_pos_tl { t }
  \tl_set:Nx \l_fig_cols_tl { 1 }
  \tl_set:Nn \l_fig_align_tl { \raggedright }
  \skip_set:Nn \l_fig_abovecap_skip { 6pt }
  \skip_set:Nn \l_fig_belowcap_skip { 6pt }
  \skip_set:Nn \l_fig_abovefig_skip { 6pt }
  \skip_set:Nn \l_fig_belowfig_skip { 6pt }
}
\ExplSyntaxOff
\makeatletter
\def\fps@figure{htbp}
\def\fps@table{htbp}
\makeatother

\shorttitle{PACT}
\shortauthors{Jin et al.}

\makeatletter
\@ifundefined{frontmatter}{\newenvironment{frontmatter}{}{\relax}}{}
\makeatother
\begin{frontmatter}

% ===== Title =====
\title[mode=title]{\texorpdfstring{{\fontsize{15}{17}\selectfont\textls[-35]{Not All Agreement Counts as Corroboration: Provenance-Conserving\\
Multi-View Fusion for Typed Action Admission in Human--Robot Collaboration}}}{Not All Agreement Counts as Corroboration: Provenance-Conserving Multi-View Fusion for Typed Action Admission in Human--Robot Collaboration}}
\tnotemark[1]

% ===== Authors =====
\author[1]{Zekai Jin}
\ead{zekai.jin@mail.mcgill.ca}
\author[2]{Hanrong Zhang}
% \ead{hzhan135@uic.edu}
\author[1,3]{Yihong Tang}
% \ead{yihong.tang@mail.mcgill.ca}
\author[4]{Fei Hu}
% \ead{fei.hu@ntnu.no}
\author[5,6]{Zhen Dong}
% \ead{zhendong@berkeley.edu}
\author[1,7]{Yi Shao}
\cormark[1]
\ead{yi.shao@ubc.ca}
\tnotetext[1]{Project page \href{https://github.com/ZekaiJ/PACT}{github.com/ZekaiJ/PACT}}

% ===== Addresses =====
\address[1]{Department of Civil Engineering, McGill University, 817 Sherbrooke Street West, Montreal, Quebec, Canada}
\address[2]{Department of Computer Science, University of Illinois Chicago, 850 West Taylor Street, Chicago, Illinois 60607, United States}
\address[3]{Mila -- Quebec AI Institute, 6666 Saint-Urbain Street, Montreal, Quebec, Canada}
\address[4]{Department of Structural Engineering, Norwegian University of Science and Technology, Richard Birkelands vei 1A, Trondheim, Norway}
\address[5]{Department of Computer Science, University of California, Santa Barbara, 2104 Harold Frank Hall, Santa Barbara, California, United States}
\address[6]{NVIDIA Corporation, 2788 San Tomas Expressway, Santa Clara, California, United States}
\address[7]{Department of Civil Engineering, University of British Columbia, 6250 Applied Science Lane, Vancouver, British Columbia, Canada}
\cortext[1]{Corresponding author.}

% ===== Abstract =====
\begin{abstract}
Improved probability scores do not by themselves establish that evidence has been counted correctly. Repeated inference over a single acquired observation can improve predictions without introducing an additional evidential origin. Source-local numerical attributes alone are in general insufficient to distinguish repeated derivations from separately countable acquisitions. PACT (\textbf{P}rovenance-\textbf{A}ware evidence \textbf{C}onservation and \textbf{T}yped action admission) represents this distinction through a supplied provenance partition that separates the magnitude of evidence from its countability. Under singleton fidelity and insertion non-amplification, the coordinatewise meet is the unique pointwise greatest admissible within-component rule. Accumulation across components further relies on stated assumptions of commensurability and separate-component additivity. Matched reassignments vary the supplied counting relation while holding numerical outputs fixed. In four of the 12 replicated-source tests on HandWritten, false refinement relative to the reference grouping lowers the macro-averaged negative log-likelihood and the Brier score while raising the normalized common-support area under the risk--coverage curve (ncsAURC). In the controlled handover benchmark, once the constructed adversarial-consensus condition is removed, aggregation over the provenance partition lowers ncsAURC by 0.056 relative to singleton aggregation under the same score functional. The corroboration contrast then disappears, and the ranking of methods remains dependent on the selection score. The offline, reference-based human--robot collaboration study employs four prompts per camera. With all other admission inputs held fixed, duplicating each prompt output within its camera from multiplicity one to eight leaves all 720 typed responses of PACT per checkpoint unchanged. Posterior quality and evidence countability therefore require separate evaluation.
\end{abstract}
\begin{keywords}
Low-quality multi-view fusion \sep
Provenance-aware fusion \sep
Evidence countability \sep
Multimodal foundation models \sep
Selective action admission \sep
Human--robot collaboration
\end{keywords}

% ===== infoauthors =====
\gdef\infoauthors{Jin et al.}

% ===== maketitle =====
% Local tolerance for the CAS frontmatter email box.
\hfuzz=118pt
\maketitle
\hypersetup{pdfsubject={Preprint}}
\hfuzz=.2pt
\end{frontmatter}

% ===========================
% SECTION 1: Introduction
% ===========================
\section{Introduction}
\label{sec:introduction}

Repeated inference can increase agreement without the addition of an acquired observation. Re-prompting, resampling, and test-time candidate generation may improve prediction or search from the same input~\citep{Wang2023SelfConsistency,Kwok2025RoboMonkey,Zhao2026TapSampling}, and related models may also share correlated errors~\citep{Kim2025CorrelatedErrors}. Once agreement is interpreted as corroboration, the number of outputs alone becomes insufficient. Whether their support may be counted separately depends on how the outputs were acquired or derived. This relation is what we term \emph{evidence countability}.

When fused predictions determine whether an embodied system may proceed, an error in the counting of evidence can propagate into action admission. Human--robot collaboration (HRC) places robots in shared workspaces, where a single decision may combine commands, scene geometry, path risk, and the outputs of several vision--language models~\citep{Lasota2015HumanAware}. The decision layer selects an action from these outputs, accounts for which support may accumulate separately under the supplied relation, and checks that the remaining release conditions are satisfied. This pre-execution authorization is referred to as \emph{action admission}. A \emph{typed} rule maps the first unmet condition to \textsc{hold}, \textsc{confirm}, or \textsc{fallback}. Physical safety nevertheless continues to depend on controllers, execution monitoring, and safeguards that remain active during execution~\citep{Lasota2017SafeHRI}.

\begin{figure}[pos=t]
\includegraphics[width=\textwidth]{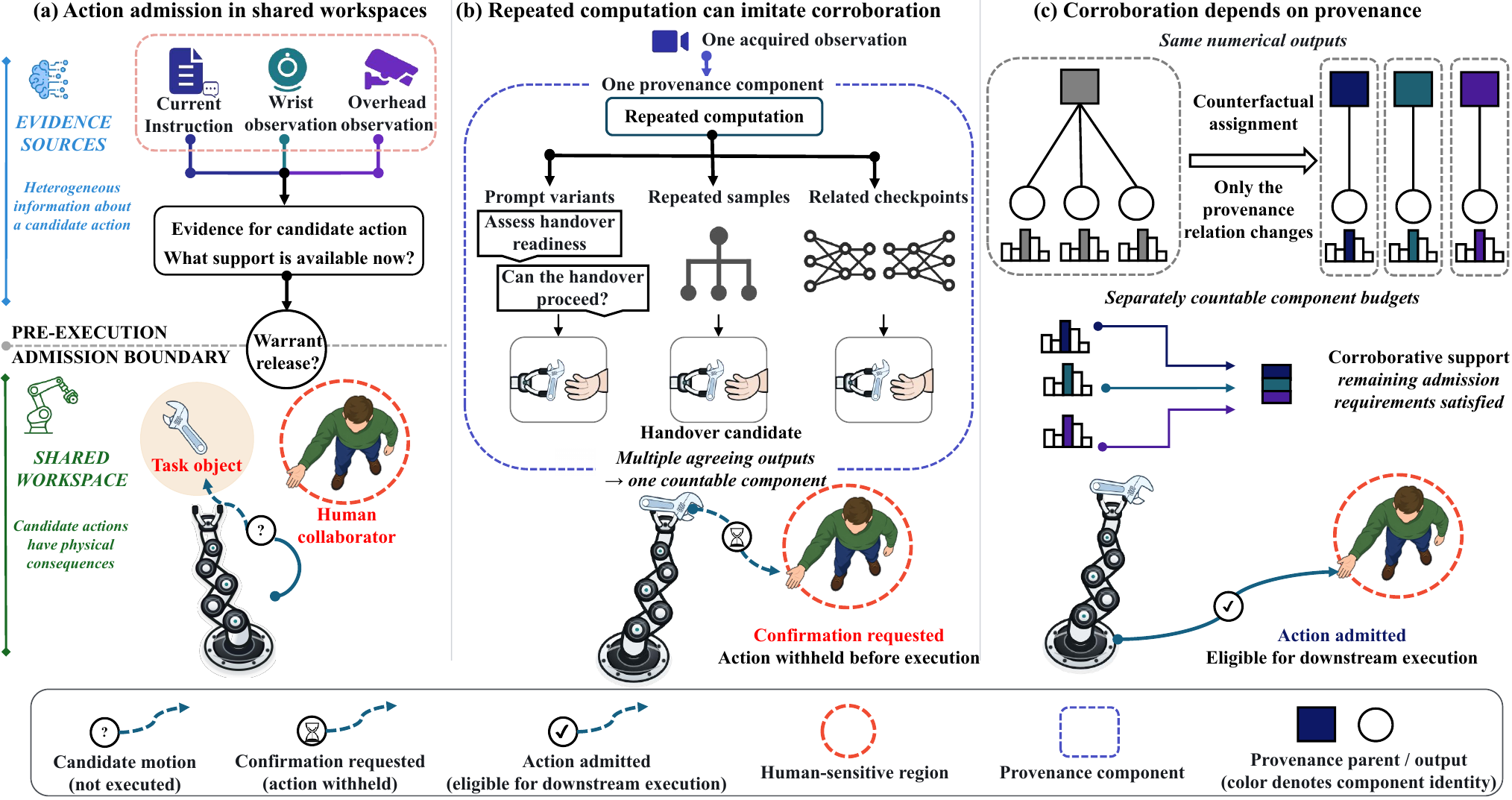}
\caption{Why agreement need not constitute corroboration. (a) Heterogeneous evidence informs whether a candidate handover may proceed to execution in a shared workspace. (b) Repeated inference over a single acquired observation can yield agreeing candidates without the addition of a further evidential origin. The admission policy illustrated here requests confirmation. (c) Reassigning the same numerical outputs to different parents changes which support may accumulate separately. Admission still depends on the remaining release conditions.}
\label{fig:pact_concept}
\end{figure}

Prior work has long recognized that relationships among sources can alter the semantics of fusion. J{\o}sang et al.\ distinguish cumulative fusion for separately acquired observations from averaging when sensors observe the same outcomes, and beliefs derived by bootstrapping from a single training set have likewise been treated as non-independent~\citep{Josang2010CumulativeAveraging,Francois2003ResampleCombine}. More generally, cautious, idempotent, and homogeneous fusion rules limit reinforcement whenever source distinctness or common information is uncertain~\citep{Denoeux2008Cautious,Dubois2016BasicPrinciples,Taylor2019Homogeneous}. PACT (\textbf{P}rovenance-\textbf{A}ware evidence \textbf{C}onservation and \textbf{T}yped action admission) isolates a different identifiability problem. Two identical predictions may arise either from repeated processing of one camera image or from separately acquired images. Even when the source-local numerical attributes are identical, the two cases may require different counting assignments under the supplied acquisition or derivation relation. No fusion map whose input is restricted to those local attributes can recover the distinction. \Cref{prop:opinion_only_indistinguishability} formalizes this value-only indistinguishability.

Evidence countability is encoded in PACT through a supplied provenance partition. Source-local adapters map outputs to nonnegative evidence magnitudes, whereas overlaps among the supplied parent sets induce the components that determine which outputs may accumulate separately.

Within each component, the coordinatewise minimum of the member evidence vectors is retained. Under the stated assumptions of commensurability and separate-component additivity, the retained budgets are then added across components that are supplied as separately countable.

Matched reassignments hold source-local evidence fixed, while exact-copy insertion tests conservation as the source set expands. After removing the constructed adversarial-consensus condition, provenance-partition aggregation lowers normalized common-support area under the risk--coverage curve (ncsAURC) by 0.056 relative to singleton aggregation under the common \(1-u\) score functional. The corroboration contrast disappears. Learned multi-view and multi-camera outputs reproduce the structural budget response, while predictive consequences depend on the model, dataset, and score.

The offline HRC study tests computational multiplicity under acquisition-based grouping. With four prompts per camera and all other admission inputs held fixed, exact within-camera duplication from multiplicity one to eight leaves all 720 typed responses of PACT per checkpoint unchanged. Under the target-identity and strict positive-change temporal requirements, camera-grouped PACT admits 47 of the 57 reference-consistent candidates produced by Qwen3-VL-32B. The retention ordering varies across the compared counting rules and across checkpoints.

Evidence countability is formalized in PACT as relational information that cannot in general be recovered from source-local numerical attributes alone. Singleton fidelity and insertion non-amplification characterize the coordinatewise meet as the unique pointwise greatest admissible within-component rule, and accumulation across components requires explicit assumptions of commensurability and additivity. Matched interventions on constructed and on learned outputs indicate that counting replicated outputs separately can increase the retained evidence even when the negative log-likelihood (NLL) and the Brier score improve. In some cases, the expected calibration error computed with ten equal-width bins (ECE$_{10}$) improves as well. The offline HRC study examines how acquisition-based counting affects typed admission.

% =======================
% SECTION 2: Related Work
% =======================
\section{Related Work}
\label{sec:related_work}

% ==========================================================
% SECTION 2.1: Low-quality multi-view fusion under uncertainty
% ==========================================================
\subsection{Low-quality multi-view fusion under uncertainty}
\label{subsec:rw_low_quality_fusion}

Evidence theory and evidential learning provide explicit representations of support and of uncertainty, whereas low-quality multimodal learning addresses noise, missingness, quality imbalance, and sample-dependent degradation~\citep{Shafer1976Evidence,Sensoy2018EDL,Zhang2024LowQualitySurvey}. Modern multi-view methods adapt the effective contribution of a view according to estimated uncertainty, reliability, conflict, or learned quality~\citep{Han2021TMC,Zhang2023QMF,Xu2024RCML,Liu2024CCML}. These mechanisms regulate the strength of a contribution through evidence representations, source discounting, or numerical weights.

A separate line of work models the information shared across views. Zhang et al.\ formalize multi-view common information through a G{\'a}cs--K{\"o}rner-inspired variable that can be recovered deterministically from every view, and they separate common representations from view-unique ones~\citep{Zhang2024CUMI}. Common information concerns shared representational content, while evidence countability concerns which outputs may contribute support separately. In PACT, source-local adapters specify nonnegative evidence magnitudes, whereas a supplied acquisition or derivation relation specifies the counting structure.

% =======================================================
% SECTION 2.2: Dependence-aware fusion and provenance structure
% =======================================================
\subsection{Dependence-aware fusion and provenance structure}
\label{subsec:rw_dependent_information}

The relationships among sources have long been understood to affect the semantics of fusion. Covariance intersection remains conservative when the cross-covariances are unknown, and data-incest methods exploit the structure of information flow in order to prevent recursive reuse~\citep{Julier1997CovarianceIntersection,Hamdi2013DataIncest}. Taylor and Bishop address unknown common information by means of degree-one homogeneous functionals, under which a common probabilistic factor enters only once. The normalized power-mean family that they propose includes linear and geometric pooling, with the pointwise minimum arising as a limiting case~\citep{Taylor2019Homogeneous}.

Work on belief functions makes the role of source relationships explicit. J{\o}sang et al.\ distinguish the cumulative fusion of observations collected in separate periods, averaging when sensors observe the same outcomes, and conjunctive fusion for orthogonal constraints. The worked hierarchy that they present averages simultaneous observations of the same aspect, combines an orthogonal aspect conjunctively, normalizes the resulting observation, and only then accumulates observations that are separate~\citep{Josang2010CumulativeAveraging}. Fran{\c{c}}ois et al.\ treat belief structures produced from bootstrap resamples of a single training set as non-independent and average them at the credal level. Such repeated computation may nevertheless improve the representation of uncertainty, the rejection of unreliable cases, and subsequent fusion~\citep{Francois2003ResampleCombine}. The cautious rule of Den{\oe}ux addresses bodies of evidence that are reliable but not distinct, and hierarchical fusion in the transferable belief model likewise varies the combination according to the structure of the sources~\citep{Denoeux2008Cautious,HaDuong2008Hierarchical}.

Dubois et al.\ formulate fusion through representation-independent principles that include minimal commitment, discuss idempotence as appropriate whenever source independence has not been established, and reject majority reinforcement as a universal requirement of fusion on the grounds that sources may be redundant~\citep{Dubois2016BasicPrinciples}. Idempotent, cautious, and hierarchical combination therefore rest on established precedents whose interpretation depends on the represented information and on the assumed relationships among sources.

The provenance semirings of Green et al.\ propagate input annotations through computational derivations~\citep{Green2007ProvenanceSemirings}. Kowalski and Martin apply provenance after belief combination in order to attribute a final decision to the contributing basic belief assignments of the sources~\citep{KowalskiMartin2018Provenance}.

In PACT, the counting units are made explicit inputs to fusion. Supplied acquisition or derivation records define these units before the numerical evidence is combined. Within each unit, singleton fidelity and insertion non-amplification characterize the coordinatewise meet as the pointwise greatest admissible rule on nonnegative evidence. Across components, the retained budgets are added under the commensurability and separate-component additivity assumptions stated in Section~\ref{subsec:pre_execution_interface}. The components do not certify statistical independence, and the meet does not estimate latent shared information.

% =======================================================
% SECTION 2.3: Selective prediction and action admission
% =======================================================
\subsection{Selective prediction and action admission}
\label{subsec:rw_selective_risk}

Selective prediction addresses the question of whether a prediction should be retained. Risk--coverage formulations and SelectiveNet formalize abstention~\citep{ElYaniv2010SelectiveClassification,Geifman2019SelectiveNet}, whereas Adaptive Rejection removes low-quality views prior to combination~\citep{Liu2025AdaptiveRejection}. Conformal prediction yields prediction sets with coverage guarantees that hold under its own assumptions~\citep{Angelopoulos2023Conformal}. In embodied decisions, KnowNo and Act or Ask employ uncertainty in order to request help or to defer a decision~\citep{Ren2023KnowNo,Huang2026ActOrAsk}.

Hierarchical fusion together with rejection also predates current embodied models. Suutala and R{\"o}ning combine classifier posteriors obtained from several feature representations of the same footstep, normalize the first-stage result, aggregate multiple footsteps at a second stage by means of fixed pooling rules, and then apply rejection thresholds selected on validation data~\citep{Suutala2008FloorReject}. That method distinguishes same-event fusion, cross-event fusion, and downstream rejection. In PACT, the counting partition is an explicit input that may vary independently of the numerical outputs, and a within-component budget rule governs how those outputs contribute. Typed admission addresses the conditions under which a candidate action is authorized, which is a different question from whether a prediction should be retained.

% =========================================================================
% SECTION 2.4: Embodied foundation models and repeated computation
% =========================================================================
\subsection{Embodied foundation models and repeated computation}
\label{subsec:rw_vla_candidate_evidence}

Embodied foundation models render computational multiplicity particularly visible. Vision--language--action (VLA) policies and structured planners map multimodal context to actions, skills, or control references~\citep{Kim2024OpenVLA,Ichter2023SayCan,Huang2023VoxPoser,Li2026GFVLA}. Test-time methods may then generate additional candidates from the same observation. RoboMonkey samples and perturbs VLA proposals before verification, whereas TapSampling evaluates sampled actions through predicted task progress~\citep{Kwok2025RoboMonkey,Zhao2026TapSampling}. Computation of this kind can improve prediction or candidate selection without constituting a further acquisition.

Safety-oriented work addresses complementary aspects of embodied decision making. SafeVLA constrains policy learning, EMBGuard evaluates action-conditioned hazards, and UNISafe combines epistemic uncertainty with a reachability-based safety filter~\citep{Zhang2025SafeVLA,Choi2026EMBGuard,Seo2025UNISafe}. The focus of PACT lies in how supplied countability enters pre-execution admission, while downstream control, execution monitoring, and physical safeguards remain distinct responsibilities~\citep{Lasota2017SafeHRI}.

% ================================================================
% SECTION 3: PACT Fusion and Typed Action Admission
% ================================================================
\section{PACT Fusion and Typed Action Admission}
\label{sec:methodology}

% ==========================
% SECTION 3.1: Method overview
% ==========================
\subsection{Method overview}
\label{subsec:method_overview}

An evidence magnitude is assigned to each source in PACT, and provenance is used to determine which outputs may be counted separately. Source adapters produce nonnegative evidence vectors, whose coordinatewise meet is retained within each provenance component, and the component budgets are then added under the assumptions stated below. The resulting evidence determines the posterior and, in the controlled benchmark, the selection score applied before typed admission (\Cref{fig:pact_concept}).

The term \emph{PACT fusion} denotes this provenance-conditioned evidence accounting together with posterior projection and scoring, whereas \emph{PACT} denotes the complete procedure by which a contract is selected and its admission is determined. Algorithm~\ref{alg:pact_decision_path} specifies the score-thresholded path of the controlled benchmark. \Cref{tab:notation} summarizes the notation, and Appendix~\ref{sec:proofs} provides the proofs and the perturbation bounds.

\begin{algorithm}[t]
\small
\caption{PACT fusion and typed action admission in the score-thresholded benchmark}
\label{alg:pact_decision_path}
\begin{algorithmic}[1]

\Require Decision instance $x$, expected sources $\mathcal S(x)=(\xi_i(x))_{i=1}^{J}$, evidence adapters $(\Phi_i)_{i=1}^{J}$, base-eligibility predicate $\zeta_{\mathrm{base}}$, threshold $\tau$, base rate $\mathbf a$, prior strength $W$, and typed admission rule $\mathcal V$
\Ensure Final decision $Y(x)$
\Statex \textbf{Construct evidence and countability}
\State Identify observed, numerically valid sources using \Cref{eq:observed_source_set}
\State Construct $\Pi_P(x)$ from parent-set overlaps using \Cref{eq:provenance_partition}
\For{each expected source $i=1,\ldots,J$}
\State Map $\xi_i(x)$ to nonnegative evidence $\mathbf e_i\gets\Phi_i(\xi_i(x))$
\EndFor
\Statex \textbf{Conserve and accumulate support}
\For{each component $C\in\Pi_P(x)$}
\State $\mathbf b_C\gets\bigwedge_{i\in C}\mathbf e_i$
\EndFor
\State $\mathbf E_{\Pi}(x)\gets\sum_{C\in\Pi_P(x)}\mathbf b_C$
\Statex \textbf{Select and admit}
\State Compute $(\boldsymbol\pi,s,\widehat k)$ using \Cref{eq:pact_cumulative_posterior,eq:pact_prediction_vacuity}
\State $y^{(0)}\gets\mathcal A_{\tau}(\widehat k,s,\zeta_{\mathrm{base}}(x))$
\State \Return $Y(x)\gets\mathcal V(y^{(0)},x)$
\end{algorithmic}
\end{algorithm}

% ===================================================================
% SECTION 3.2: Evidence, provenance, and the identifiability requirement
% ===================================================================
\subsection{Evidence, provenance, and the identifiability requirement}
\label{subsec:pre_execution_interface}

Let $\mathcal K=\{k_1,\ldots,k_K\}$, with $K\geq2$, be a finite action-contract set equipped with a fixed tie-breaking order. Each contract pairs a candidate action with the constraint that governs its release, and admission determines whether the selected contract is eligible for execution. Compound labels such as \emph{hold--confirm} name action contracts and are distinct from the \textsc{hold}, \textsc{confirm}, and \textsc{fallback} responses of typed admission.

For a decision $x$, let $\mathcal S(x)=(\xi_1(x),\ldots,\xi_J(x))$ denote the $J\geq1$ expected sources. Each expected source $i$ is associated with a categorical opinion $\mathbf p_i\in\Delta^{K-1}\cup\{\mathbf0\}$, quality and conflict scores $q_i,d_i\in[0,1]$, unavailability and numerical-validity indicators $z_i,v_i\in\{0,1\}$, and a supplied finite nonempty parent set $P_i(x)$. The indicator $z_i=1$ denotes a source that is unavailable, whereas $v_i=1$ indicates that all required numerical attributes are present and satisfy their validity constraints. The evidence vector is determined by the source-local attributes, while the supplied counting structure is determined by the relations among parent sets. The adapter $\Phi_i$ may depend on the role of a source, and timing together with other action-context attributes enters admission separately.

A source-specific adapter $\Phi_i$ maps the local source representation to a nonnegative evidence vector,
\[
\mathbf e_i(x)=\Phi_i\!\left(\xi_i(x)\right)\in\mathbb R_+^K.
\]
Reliability-scaled categorical evidence is used in the simulation benchmark, whereas the external studies employ the evidence mappings specified for their respective evaluations.

Let $\mathcal I=\{1,\ldots,J\}$ denote the expected source indices. The observed, numerically valid subset is
\begin{equation}
\mathcal I_{\mathrm{obs}}(x)
:=
\left\{
i:\ z_i=0,\ v_i=1,\ \mathbf p_i\in\Delta^{K-1}
\right\}.
\label{eq:observed_source_set}
\end{equation}

For $i\in\mathcal I_{\mathrm{obs}}(x)$, let $h_i(x)=\arg\max_{k\in\mathcal K}p_{ik}$, with ties resolved by the order of $\mathcal K$. For a candidate $k$, define the supporting-source set
\begin{equation}
\mathcal I_k(x)
:=
\left\{
i\in\mathcal I_{\mathrm{obs}}(x):h_i(x)=k
\right\}.
\label{eq:supporting_source_set}
\end{equation}

The admission policies in \Cref{subsec:admission_evaluation} use $\mathcal I_k(x)$ to identify the observed sources whose highest-probability contract is $k$. Membership indicates categorical support, while the magnitude of evidence is determined separately by the evidence mapping and by component aggregation.

The counting relation is supplied by acquisition or derivation records. Sources whose parent sets overlap are linked, and the connected components of this relation form the provenance components used in PACT. Numerical predictions are not used in order to infer the relation. Where direct acquisition or derivation records are unavailable, checkpoint or model-family labels are evaluated as surrogate grouping hypotheses in the external studies.

Formally, parent-set overlaps induce the provenance partition
\begin{equation}
G_{\mathrm{prov}}(x)
:=
\bigl(\mathcal I,\mathcal E_P(x)\bigr),
\qquad
(i,j)\in\mathcal E_P(x),\ i\neq j
\Longleftrightarrow
P_i(x)\cap P_j(x)\neq\varnothing,
\qquad
\Pi_P(x)=\operatorname{CC}\!\left(G_{\mathrm{prov}}(x)\right).
\label{eq:provenance_partition}
\end{equation}

The operator $\operatorname{CC}$ returns the connected components. A provenance component is defined by connectivity rather than by direct parent overlap alone, so an indirect bridge may place sources without any direct parent overlap in the same component. Every expected source remains a vertex, including sources that are unavailable or numerically invalid, and the parent links supplied for it are retained. Distinct components are the units that are permitted to accumulate separately under the supplied relation, and they are not certificates of statistical or sensor independence.

The resolution of the supplied relation is determined by parent granularity. When a single acquisition defines a shared origin, the outputs derived from that observation inherit a common parent. A later acquisition receives a distinct parent when it is intended to remain separately countable, unless some other recorded relation connects the two.

The induced partition fixes the counting units. Two further assumptions govern how the evidence budgets of these units are compared and combined.

\begin{assumption}[Commensurate evidence units]
\label{assump:commensurate_evidence}
All evidence vectors combined within a decision instance are expressed in commensurate additive units.
\end{assumption}

\begin{assumption}[Separate-component additivity]
\label{assump:separate_component_additivity}
For distinct provenance components represented in commensurate evidence units, their retained budgets combine additively.
\end{assumption}

Commensurability concerns the scale of the evidence, whereas separate-component additivity governs accumulation across distinct provenance components. Neither assumption asserts statistical independence. Commensurability is enforced by construction in the benchmark. The external studies either analyze each dataset--model pair under a fixed evidence mapping or adopt a common explicit mapping whenever multiple checkpoints are combined.

For the benchmark, the source reliability factor $\rho_i$ discounts the categorical opinion and $\kappa_i>0$ sets the scale of its evidence. The resulting evidence mapping is
\begin{equation}
\mathbf e_i
:=
\begin{cases}
\rho_i\kappa_i\mathbf p_i,
&
z_i=0,\ v_i=1,\ \mathbf p_i\in\Delta^{K-1},
\\[2pt]
\mathbf 0,
&
\text{otherwise}.
\end{cases}
\label{eq:pact_evidence_adapter}
\end{equation}

Appendix~\ref{sec:proofs} gives the benchmark construction of $\rho_i$. A component $C$ is complete for an instance $x$ when $C\subseteq\mathcal I_{\mathrm{obs}}(x)$. Under the primary expected-source convention, expected sources that are unavailable or invalid remain in their components as zero vectors, so that the coordinatewise meet of any incomplete component is zero (\Cref{eq:pact_component_budget}). Separate analyses compare this convention with alternative treatments of missing sources.

\begin{proposition}[Value-only indistinguishability]
\label{prop:opinion_only_indistinguishability}

Let $\mathbb A$ be a space of source-local numerical attributes that excludes provenance parent sets. Its evidence coordinate lies in $\mathbb R_+^K$ and may be accompanied by reliability, conflict, availability, and validity. Let $\mathcal M(\mathbb A)$ denote the finite multisets over $\mathbb A$, and let $H:\mathcal M(\mathbb A)\to\mathcal Y$ be any value-only fusion map, where $\mathcal Y$ is an arbitrary output space. Consider two inputs that present the same numerical attribute multiset $\{\boldsymbol\alpha,\boldsymbol\alpha\}$, where the evidence coordinate of $\boldsymbol\alpha$ is a nonzero vector $\mathbf e$. In one input, the second source is a duplicate that carries the parent set of the first. In the other, the two identically valued sources carry disjoint parent sets and therefore lie in separate components. No value-only map can distinguish these inputs, since both present $\{\boldsymbol\alpha,\boldsymbol\alpha\}$ to $H$ and therefore receive the same output.

\end{proposition}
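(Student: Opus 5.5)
The plan is to make the two inputs explicit, show that their provenance partitions \Cref{eq:provenance_partition} really differ, and then observe that any value-only map $H$ sees the same multiset in both cases. The argument is a direct construction plus a functional-determinacy observation, so most of the work is in setting up the inputs so that the hypotheses hold.

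First I would fix $\boldsymbol\alpha\in\mathbb A$ with nonzero evidence coordinate $\mathbf e$. To keep both sources counted, I would take its availability and validity flags to be $z=0$ and $v=1$, and its opinion in $\Delta^{K-1}$, so both sources lie in $\mathcal I_{\mathrm{obs}}$ from \Cref{eq:observed_source_set}. Then I would define two decision instances $x$ and $x'$, each with $J=2$ expected sources whose local attributes both equal $\boldsymbol\alpha$.

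Next come the parent sets. In $x$, I would set $P_1(x)=P_2(x)=\{o\}$ for a single origin $o$; this is the duplicate case. In $x'$, I would set $P_1(x')=\{o\}$ and $P_2(x')=\{o'\}$ with $o\neq o'$. By \Cref{eq:provenance_partition}, $G_{\mathrm{prov}}(x)$ then has the edge $(1,2)$ and $\Pi_P(x)=\{\{1,2\}\}$. By contrast, $G_{\mathrm{prov}}(x')$ has no edges, and $\Pi_P(x')=\{\{1\},\{2\}\}$.

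To show the difference matters and is not merely formal, I would compute the aggregated evidence. For $x$ it is $\mathbf e\wedge\mathbf e=\mathbf e$. For $x'$, which relies on \Cref{assump:separate_component_additivity}, it is $2\mathbf e$. These differ because $\mathbf e\neq\mathbf 0$, so the supplied relation demands different counting in the two cases.

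The indistinguishability step is then immediate. Since $\mathbb A$ excludes parent sets, the value-only presentation of each instance is the multiset of its local attributes, $\{\boldsymbol\alpha,\boldsymbol\alpha\}$ in both cases. Because $H$ is a function on $\mathcal M(\mathbb A)$, it satisfies $H(\{\boldsymbol\alpha,\boldsymbol\alpha\})=H(\{\boldsymbol\alpha,\boldsymbol\alpha\})$, so $x$ and $x'$ get identical outputs for every output space $\mathcal Y$. It follows that no value-only map can reproduce a provenance-dependent rule such as PACT fusion on both inputs at once: it would need to output something corresponding to $\mathbf e$ on one input and $2\mathbf e$ on the other.

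The main obstacle is not technical but definitional. The hard part is being precise that the value-only presentation is a map from instances to $\mathcal M(\mathbb A)$ that forgets both the parent sets and the source indices. This has to be explicit so that no hidden identifier in $\mathbb A$ could encode provenance. I would state that assumption explicitly, as the statement's exclusion of parent sets suggests, and take care that the two instances agree on every coordinate of $\mathbb A$, including reliability and conflict.
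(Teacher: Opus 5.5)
Your proposal is correct and takes the same approach as the paper. The paper's proof is only the observation that both inputs present the same multiset $\{\boldsymbol\alpha,\boldsymbol\alpha\}\in\mathcal M(\mathbb A)$ to the function $H$ and so receive the same output; your explicit parent sets and the PACT budgets $\mathbf e$ versus $2\mathbf e$ are not needed for the proof, but they usefully confirm that the two indistinguishable inputs require different counting.
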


Reliability, conflict, availability, and validity can regulate the strength of a contribution, yet identical source-local values cannot distinguish the two counting structures in \Cref{prop:opinion_only_indistinguishability}. Provenance supplies relational information concerning how inputs participate in acquisitions or derivations~\citep{Green2007ProvenanceSemirings}, and that supplied relation is what defines the counting partition in PACT. Multiple supporting outputs may therefore correspond to fewer separately countable units even where local reliability is high and conflict is low.

% =========================================================
% SECTION 3.3: Provenance-conserving fusion and guarantees
% =========================================================
\subsection{Provenance-conserving fusion and guarantees}
\label{subsec:pact_fusion}

Given the supplied partition, singleton fidelity preserves the evidence of a singleton, and insertion non-amplification prevents an added member from increasing the budget of a component. These requirements bound the evidence that each component may retain.

Equip $\mathbb R_+^K$ with the coordinatewise partial order $\mathbf r\preceq\mathbf s$ if and only if $r_k\leq s_k$ for every $k\in\mathcal K$. Starting from any member $\mathbf e_i$ as a singleton and inserting the remaining members shows that the output $\mathbf g_C$ of any admissible rule must satisfy $\mathbf g_C\preceq\mathbf e_i$ for every $i\in C$. The coordinatewise meet is the greatest vector that satisfies all of these inequalities. A bound on the total evidence mass alone would permit support to shift across contracts and would not enforce the same coordinatewise constraints.

For each provenance component $C\in\Pi_P(x)$, define
\begin{equation}
\mathbf b_C
:=
\bigwedge_{i\in C}\mathbf e_i,
\qquad
(\mathbf b_C)_k
:=
\min_{i\in C}e_{ik}.
\label{eq:pact_component_budget}
\end{equation}
The meet itself satisfies singleton fidelity and insertion non-amplification, so this pointwise bound is attained within the admissible class of rules. The resulting budget need not reproduce any individual opinion.

\begin{proposition}[Within-component characterization]
\label{prop:pact_within_component_characterization}
Let $g$ map every finite nonempty multiset $\mathcal E$ of vectors in $\mathbb R_+^K$ to $\mathbb R_+^K$, and let $\uplus$ denote multiset insertion. Among all rules satisfying singleton fidelity, $g(\{\mathbf e\})=\mathbf e$, and insertion non-amplification, $g(\mathcal E\uplus\{\mathbf e\})\preceq g(\mathcal E)$, the coordinatewise meet is the unique pointwise greatest rule.
\end{proposition}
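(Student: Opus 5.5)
The plan has three steps: show the meet is admissible, bound every admissible rule by the meet, and deduce uniqueness from antisymmetry of the coordinatewise order. Write $m(\mathcal E):=\bigwedge_{\mathbf e\in\mathcal E}\mathbf e$ for the coordinatewise meet of a finite nonempty multiset $\mathcal E$.

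\emph{Admissibility of the meet.} Singleton fidelity holds because $m(\{\mathbf e\})=\mathbf e$. For insertion non-amplification, for every $k$ we have
\[
m(\mathcal E\uplus\{\mathbf e\})_k=\min\bigl(m(\mathcal E)_k,e_k\bigr)\le m(\mathcal E)_k .
\]
Nonnegativity is preserved, so $m$ maps into $\mathbb R_+^K$.

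\emph{Domination of every admissible rule.} Let $g$ be any admissible rule and let $\mathcal E=\{\mathbf e_1,\ldots,\mathbf e_n\}$, listed with multiplicity. Fix an index $i$. Build $\mathcal E$ by starting from the singleton $\{\mathbf e_i\}$ and inserting the remaining members one at a time in any order. This gives a chain $\mathcal E^{(1)}=\{\mathbf e_i\}$, $\mathcal E^{(2)},\ldots,\mathcal E^{(n)}=\mathcal E$, where each step adds exactly one element. Multiset insertion is order-independent as a multiset, so the endpoint is exactly $\mathcal E$ no matter how the remaining members are ordered. Applying insertion non-amplification at each step and using transitivity of $\preceq$ gives $g(\mathcal E)\preceq g(\{\mathbf e_i\})$, and singleton fidelity turns the right-hand side into $\mathbf e_i$. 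Since $i$ was arbitrary, $g(\mathcal E)_k\le e_{ik}$ for all $i$ and $k$. Taking the minimum over $i$ gives $g(\mathcal E)\preceq m(\mathcal E)$. This is precisely the greatest-lower-bound property of the meet in the lattice $(\mathbb R_+^K,\preceq)$.

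\emph{Uniqueness.} The two steps above show that $m$ is admissible and pointwise dominates every admissible rule, so it is a pointwise greatest element. If $g^\ast$ were another pointwise greatest admissible rule, then $g^\ast\preceq m$ and $m\preceq g^\ast$ hold pointwise on every input. Antisymmetry of the coordinatewise order then forces $g^\ast=m$.

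The main obstacle is the multiplicity bookkeeping in the domination step, in particular a multiset containing repeated copies of the same vector. I would handle this by an induction on $|\mathcal E|$: the inductive claim is that $g(\mathcal E)\preceq\mathbf e$ for every $\mathbf e\in\mathcal E$. The base case $|\mathcal E|=1$ is singleton fidelity. For the inductive step, write a multiset with at least two elements as $\mathcal E'\uplus\{\mathbf e'\}$ with $\mathcal E'$ nonempty. Any member of $\mathcal E'\uplus\{\mathbf e'\}$ lies in $\mathcal E'$ for some choice of $\mathbf e'$, because with at least two elements we can choose which element to peel off. Insertion non-amplification together with the inductive hypothesis then closes the step.
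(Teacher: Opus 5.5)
Your proposal is correct and follows the paper's own argument. The paper also starts from a singleton $\{\mathbf v\}$ and inserts the remaining occurrences to get $g(\mathcal E)\preceq\mathbf v$ for every member, takes the meet over members, and notes that the meet is itself admissible; your induction on $|\mathcal E|$ for repeated elements and your antisymmetry argument for uniqueness spell out details the paper leaves implicit.
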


Exact-copy invariance alone does not identify the meet, since the coordinatewise maximum is likewise invariant under duplicate insertion. Insertion non-amplification prevents even a nonidentical member with predictive value from increasing the retained evidence of an existing component. This requirement is adopted in PACT as an accounting requirement within a component, not as a criterion of predictive optimality.

Across components, the retained budgets are added under \Cref{assump:commensurate_evidence,assump:separate_component_additivity}. For the extensions of the source set considered below, the existing evidence vectors and parent assignments are held fixed.

\begin{proposition}[Greatest admissible provenance-aware evidence budget]
\label{prop:pact_common_budget}
Under \Cref{assump:commensurate_evidence,assump:separate_component_additivity}, consider cumulative budgets of the form $\sum_{C\in\Pi_P(x)}\mathbf g_C$, where each $\mathbf g_C\in\mathbb R_+^K$ satisfies $\mathbf g_C\preceq\mathbf e_i$ for every $i\in C$. The unique greatest budget in this admissible class is
\begin{equation}
\sum_{C\in\Pi_P(x)}\mathbf b_C
=:
\mathbf E_{\Pi}(x).
\label{eq:pact_cumulative_evidence}
\end{equation}
Adding a source that joins one or more existing components cannot increase $\mathbf E_{\Pi}$ coordinatewise. An exact duplicate carrying the original provenance parent set leaves $\mathbf E_{\Pi}$ invariant. By contrast, a source forming a disconnected component $D$ gives $\mathbf E_{\Pi}^{+}=\mathbf E_{\Pi}+\mathbf b_D\succeq\mathbf E_{\Pi}$.
\end{proposition}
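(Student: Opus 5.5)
The plan is to split the statement into its four claims (greatest budget, monotonicity under joining, duplicate invariance, and growth under a disconnected addition) and handle each through the componentwise structure of $\Pi_P(x)$ and the lattice properties of the coordinatewise meet on $\mathbb R_+^K$.

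\textbf{Greatest budget.} For any admissible family $(\mathbf g_C)$, the constraint $\mathbf g_C\preceq\mathbf e_i$ for all $i\in C$ gives $(\mathbf g_C)_k\le\min_{i\in C}e_{ik}=(\mathbf b_C)_k$, so $\mathbf g_C\preceq\mathbf b_C$ for each component. Summing over $C\in\Pi_P(x)$ preserves the coordinatewise order, which \Cref{assump:separate_component_additivity} licenses and \Cref{assump:commensurate_evidence} makes meaningful. Hence every admissible budget satisfies $\sum_C\mathbf g_C\preceq\mathbf E_\Pi(x)$. The choice $\mathbf g_C=\mathbf b_C$ is itself admissible, since $\mathbf b_C\in\mathbb R_+^K$ and $\mathbf b_C\preceq\mathbf e_i$ for every $i\in C$. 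So $\mathbf E_\Pi(x)$ lies in the class and dominates every element of it, which makes it the greatest element. It is unique because $\preceq$ is antisymmetric. This is consistent with \Cref{prop:pact_within_component_characterization} applied componentwise.

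\textbf{Joining existing components.} The main point is bookkeeping for how the partition changes. Let a new source $s$ have parent set $P_s$, with $\mathbf e_s$ and all existing parent sets held fixed. The new edges of $G_{\mathrm{prov}}$ are exactly those between $s$ and existing sources whose parents intersect $P_s$. The components not touched by $s$ are therefore unchanged. The touched components $C_1,\dots,C_m$ with $m\ge1$ merge with $s$ into a single component $C^\ast=\{s\}\cup\bigcup_j C_j$. The new budget then differs from the old one only through the replacement of $\sum_j\mathbf b_{C_j}$ by $\mathbf b_{C^\ast}$. Since $C^\ast\supseteq C_1$, we have $\mathbf b_{C^\ast}\preceq\mathbf b_{C_1}$. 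Nonnegativity gives $\mathbf b_{C_1}\preceq\sum_j\mathbf b_{C_j}$. Together these yield $\mathbf E_\Pi^+\preceq\mathbf E_\Pi$. The step that needs care is checking that no other components merge. This holds because any new path must pass through $s$, and all components adjacent to $s$ are already among the $C_j$.

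\textbf{Exact duplicate.} Here $P_s=P_i$ and $\mathbf e_s=\mathbf e_i$ for some existing $i$. Since $P_i$ is nonempty, $P_s\cap P_i\neq\varnothing$, so $s$ joins the component $C$ of $i$. Any $j$ with $P_j\cap P_s\neq\varnothing$ also has $P_j\cap P_i\neq\varnothing$, so $j$ is already in $C$, and no merger occurs ($m=1$). Idempotence of the minimum then gives $\mathbf b_{C\cup\{s\}}=\mathbf b_C\wedge\mathbf e_i=\mathbf b_C$, so $\mathbf E_\Pi$ is invariant.

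\textbf{Disconnected addition.} If $P_s$ meets no existing parent set, then $D=\{s\}$ is a new component and all other components are unchanged, so $\mathbf E_\Pi^+=\mathbf E_\Pi+\mathbf b_D$. Because $\mathbf b_D\succeq\mathbf 0$, it follows that $\mathbf E_\Pi^+\succeq\mathbf E_\Pi$. The same argument covers a disconnected component $D$ added as a group of sources.
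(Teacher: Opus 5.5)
Your proof is correct and follows the paper's argument essentially step for step. Both use componentwise domination $\mathbf g_C\preceq\mathbf b_C$ with $\mathbf b_C$ itself admissible, bound the merged-component meet by any constituent budget and hence by their nonnegative sum, show that an exact duplicate with the original parents causes no new merger and preserves the meet by idempotence of the minimum, and add the nonnegative term $\mathbf b_D$ for a disconnected source. Your extra bookkeeping, that untouched components are unchanged and every new path passes through the inserted source, makes explicit what the paper's proof leaves implicit.
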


Provenance-aware evidence conservation is defined in \Cref{def:provenance_evidence_conservation}. Which outputs may be counted separately is specified by the supplied partition, while the meet constrains the budget they retain without estimating latent shared information and without separating shared from source-specific contributions.

\begin{corollary}[Partition monotonicity]
\label{cor:pact_partition_monotonicity}
For a given source index set and evidence collection, let $\Pi_2$ be a coarsening of $\Pi_1$. Then
\begin{equation}
\mathbf E_{\Pi_2}\preceq\mathbf E_{\Pi_1}.
\label{eq:pact_partition_monotonicity}
\end{equation}
Coarsening the provenance partition cannot increase cumulative evidence. Refinement may increase it.
\end{corollary}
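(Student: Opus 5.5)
The plan is to reduce \Cref{cor:pact_partition_monotonicity} to a componentwise comparison and then sum, using only the definition of the meet in \Cref{eq:pact_component_budget} and the additive form of \Cref{eq:pact_cumulative_evidence}. Since $\Pi_2$ coarsens $\Pi_1$, every block $D\in\Pi_2$ is a disjoint union of blocks of $\Pi_1$. Write $D=C_1\sqcup\cdots\sqcup C_m$ with $C_j\in\Pi_1$ and $m\geq1$; every block of $\Pi_1$ occurs in exactly one such decomposition. Both budgets are then sums over the same underlying collection of $\Pi_1$-blocks, grouped by the blocks of $\Pi_2$, and it suffices to compare the contributions block by block.

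Fix $D\in\Pi_2$ with this decomposition. For each coordinate $k$,
\[
(\mathbf b_D)_k=\min_{i\in D}e_{ik}=\min_{1\le j\le m}\min_{i\in C_j}e_{ik}=\min_{1\le j\le m}(\mathbf b_{C_j})_k .
\]
Because all entries lie in $\mathbb R_+$, the minimum of $m$ nonnegative numbers is at most their sum. Hence $\mathbf b_D\preceq\sum_{j=1}^m\mathbf b_{C_j}$. One could instead note that $\mathbf b_D$ is an admissible choice of $\mathbf g_C$ for every $C\subseteq D$, but the direct inequality $\min\le\sum$ is cleaner. Nonnegativity is the one ingredient that does real work here, and it holds because the adapters map into $\mathbb R_+^K$.

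Summing over $D\in\Pi_2$ and regrouping the right-hand side gives $\mathbf E_{\Pi_2}=\sum_{D}\mathbf b_D\preceq\sum_{D}\sum_{C\subseteq D,\,C\in\Pi_1}\mathbf b_C=\mathbf E_{\Pi_1}$. The regrouping is valid because $\Pi_1$-blocks are partitioned among the $\Pi_2$-blocks, and $\preceq$ is preserved under addition. I expect no real obstacle. The only care needed is to state explicitly that the index set and the evidence collection are held fixed, so that the same $\mathbf e_i$ appear under both partitions, including the zero vectors of unavailable sources.

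The remark that refinement "may increase" evidence can be justified by a two-source example. Take two sources with disjoint parent sets, so $\Pi_1$ consists of two singletons, and the same nonzero $\mathbf e$. Under $\Pi_1$ the budget is $2\mathbf e$, while under the coarsening $\Pi_2=\{\{1,2\}\}$ it is $\mathbf e$. This example echoes \Cref{prop:opinion_only_indistinguishability}.
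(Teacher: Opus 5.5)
Your proof is correct and is essentially the paper's argument: each coarse block is a union of fine blocks, its meet is bounded by any constituent fine-block meet and hence by their nonnegative sum, and summing over the disjoint coarse blocks gives the result. Your two-source example for the ``refinement may increase it'' remark is a valid addition, since the paper states that remark without an explicit witness.
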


Partition monotonicity holds the source index set and numerical evidence fixed. Perturbation analysis instead fixes the source set, partition, prior strength, and base rate. Bounded perturbations of the source evidence yield bounded changes in the cumulative budget, posterior, and selection score. With threshold and base eligibility fixed, sufficient margins preserve the predicted contract and score-selection decision. Appendix~\ref{sec:proofs} gives the bounds and margin conditions.

Near-copy insertion enlarges the source set without merging distinct components. Appendix~\ref{sec:proofs} bounds the change in the receiving component's budget, which cannot increase. Stable predictions and selection scores alone do not ensure unchanged typed responses, which also depend on the other admission inputs.

% ==============================================
% SECTION 3.4: Selection scores and typed admission
% ==============================================
\subsection{Selection scores and typed admission}
\label{subsec:selective_admission_method}

Fusion quantifies support, whereas admission determines whether a candidate may proceed to execution. In the controlled benchmark, an action contract is selected by the fused posterior, and the retained evidence mass ranks candidates for selective retention. Retained candidates then undergo the ordered checks of typed admission.

Let $B_{\Pi}(x):=\lVert\mathbf E_{\Pi}(x)\rVert_1$, let $\mathbf a\in\Delta^{K-1}$ be a base-rate vector, and let $W>0$ be the total prior strength. All experiments use the uniform base rate $a_k=1/K$ and one unit of prior strength per class, so that $W=K$. The cumulative evidence is projected into the categorical distribution
\begin{equation}
\boldsymbol\pi(x)
:=
\frac{W\mathbf a+\mathbf E_{\Pi}(x)}
{W+B_{\Pi}(x)}.
\label{eq:pact_cumulative_posterior}
\end{equation}

The predicted contract, the vacuity, and the selection score of PACT are
\begin{equation}
\widehat k(x)
:=
\arg\max_{k\in\mathcal K}\pi_k(x),
\qquad
u(x)
:=
\frac{W}
{W+B_{\Pi}(x)},
\qquad
s(x)=1-u(x),
\label{eq:pact_prediction_vacuity}
\end{equation}
with ties in $\widehat k$ resolved by the order of $\mathcal K$. Taken together, \Cref{eq:pact_component_budget,eq:pact_cumulative_evidence,eq:pact_cumulative_posterior,eq:pact_prediction_vacuity} specify the fusion map $F_{\Pi}$ and the predicted contract. An exact copy carrying the original evidence vector and parent set leaves these outputs unchanged. For fixed $W$, the score $s=1-u$ is monotone in $B_{\Pi}$ and therefore ranks instances by retained evidence mass. This is a score of evidence mass rather than a calibrated probability of correctness, and its ability to rank candidates by correctness is evaluated empirically.

Two limiting partitions clarify the role of provenance. If every source forms its own component, then $\mathbf E_{\Pi}=\sum_i\mathbf e_i$. If all sources share one component, then $\mathbf E_{\Pi}=\bigwedge_i\mathbf e_i$. With $W=K$, a uniform base rate, and an all-singleton partition, $\boldsymbol\pi$ and $u$ reduce to the summed-evidence Dirichlet projection and vacuity~\citep{Sensoy2018EDL}. The aggregation structure between these two limiting cases is determined by the supplied partition.

In the controlled benchmark, score-based selection applies a threshold $\tau\in[0,1]\cup\{+\infty\}$ to the method-specific score together with a common eligibility condition $\zeta_{\mathrm{base}}(x)\in\{0,1\}$. The candidate $\widehat k$ is selected when $s\geq\tau$ and $\zeta_{\mathrm{base}}(x)=1$, and is withheld otherwise. The compared fusion rules share the same base-eligibility requirement. \Cref{subsec:admission_evaluation} specifies $\zeta_{\mathrm{base}}$, and Appendix~\ref{sec:proofs} gives the exact selection map $\mathcal A_{\tau}$.

Typed admission evaluates the release conditions that are not represented by the selection score. In the benchmark path, a candidate withheld by score-based selection remains withheld. A retained candidate is tested for command consistency, source validity, risk support, and component corroboration, in that order, with the corresponding failure responses \textsc{hold}, \textsc{fallback}, \textsc{hold}, and \textsc{confirm}. The response is determined by the first failed check, while a candidate that passes all four checks is admitted. Appendix~\ref{sec:proofs} defines the corresponding failure indicators and the case map $\mathcal V$.

Writing $y^{(0)}(x)=\mathcal A_{\tau}\!\left(\widehat k(x),s(x),\zeta_{\mathrm{base}}(x)\right)$ for the preliminary selection decision, the complete benchmark path in Algorithm~\ref{alg:pact_decision_path} is the composition
\begin{equation}
\begin{aligned}
(\boldsymbol\pi(x),s(x))
&=
F_{\Pi_P(x)}\!\left((\mathbf e_i(x))_{i=1}^{J}\right),
\qquad
\widehat k(x)=\arg\max_{k\in\mathcal K}\pi_k(x),
\\
Y(x)
&=
\mathcal V\!\left(
\mathcal A_{\tau}\!\left(\widehat k(x),s(x),\zeta_{\mathrm{base}}(x)\right),
x
\right).
\end{aligned}
\label{eq:complete_decision_map}
\end{equation}

An exact copy inserted within a component leaves the complete decision in \Cref{eq:complete_decision_map} unchanged, provided that it inherits the original evidence vector and parent set and alters neither base eligibility nor any other admission input.

The offline HRC studies in Section~\ref{subsec:habit_target_temporal_transfer} use the same fusion operator together with application-specific candidate, target-identity, and temporal requirements. No evidence-mass cutoff is applied in these HRC studies. The score-thresholded procedure described above applies to the controlled benchmark.

% ==================================================================
% SECTION 3.5: Computational complexity and provenance misspecification
% ==================================================================
\subsection{Complexity and provenance misspecification}
\label{subsec:selection_properties}

With bounded parent-set cardinality, naive pairwise discovery of parent overlaps followed by evidence aggregation has an expected $O(J^2+JK)$ upper bound on time for the stated hash-based implementation. Once the provenance graph is available in sparse form, connected-component traversal and aggregation require $O(J+\lvert\mathcal E_P(x)\rvert+JK)$ time and space. Appendix~\ref{sec:proofs} gives the full bounds and the corresponding assumptions on representation.

Relative to a reference grouping, false refinement separates members of one component, while over-coarsening merges separately countable units. An unrecorded shared parent can cause false refinement. An erroneous overlap or transitive bridge can cause over-coarsening. With numerical evidence fixed, refinement cannot reduce cumulative evidence and coarsening cannot increase it (\Cref{cor:pact_partition_monotonicity}). \Cref{subsec:results_provenance_analysis} tests their predictive and admission consequences, which these budget directions alone do not determine.

% ============================================================
% SECTION 4: Experimental Design
% ============================================================
\section{Experimental Design}
\label{sec:bench_construction}

The controlled benchmark examines whether evidence is preserved under exact-copy insertion in PACT and whether the response to fixed-output provenance reassignments follows the predicted structural direction. It further measures how such changes affect prediction, selective retention, calibration, and typed admission. Tests on learned multi-view and multi-camera outputs examine whether the predicted response of the budget persists, and the offline HRC study then evaluates the consequences for admission.

% ============================================================
% SECTION 4.1: Evaluation logic and benchmark design
% ============================================================
\subsection{Matched-reassignment strategy and benchmark design}
\label{subsec:experimental_rqs}

Source values and provenance assignments are allowed to vary separately in the controlled benchmark, which comprises 48 NVIDIA Isaac Sim scenes~\citep{NVIDIAIsaacSim} balanced across three handover geometries. Each decision combines language ($L$), geometry ($G$), and path-risk ($R$) opinions over five action contracts $\mathcal K=\{N,S,H,T,U\}$. The reference provenance partition $\{\{L\},\{G,R\}\}$ assigns language to one component and groups geometry with risk through a shared scene-context parent.

The same pre-perturbation label serves as the evaluation reference and also initializes the language opinion. Language-only controls, evaluated with and without base eligibility, measure the effect of using this label for both purposes. Evidence accounting, selective retention, and admission are tested in the benchmark using constructed opinions. \Cref{fig:curated_benchmark_examples} shows how a candidate is evaluated, and \Cref{tab:benchmark_definition} defines the action contracts.

\begin{figure}[pos=t]
\includegraphics[width=\textwidth]{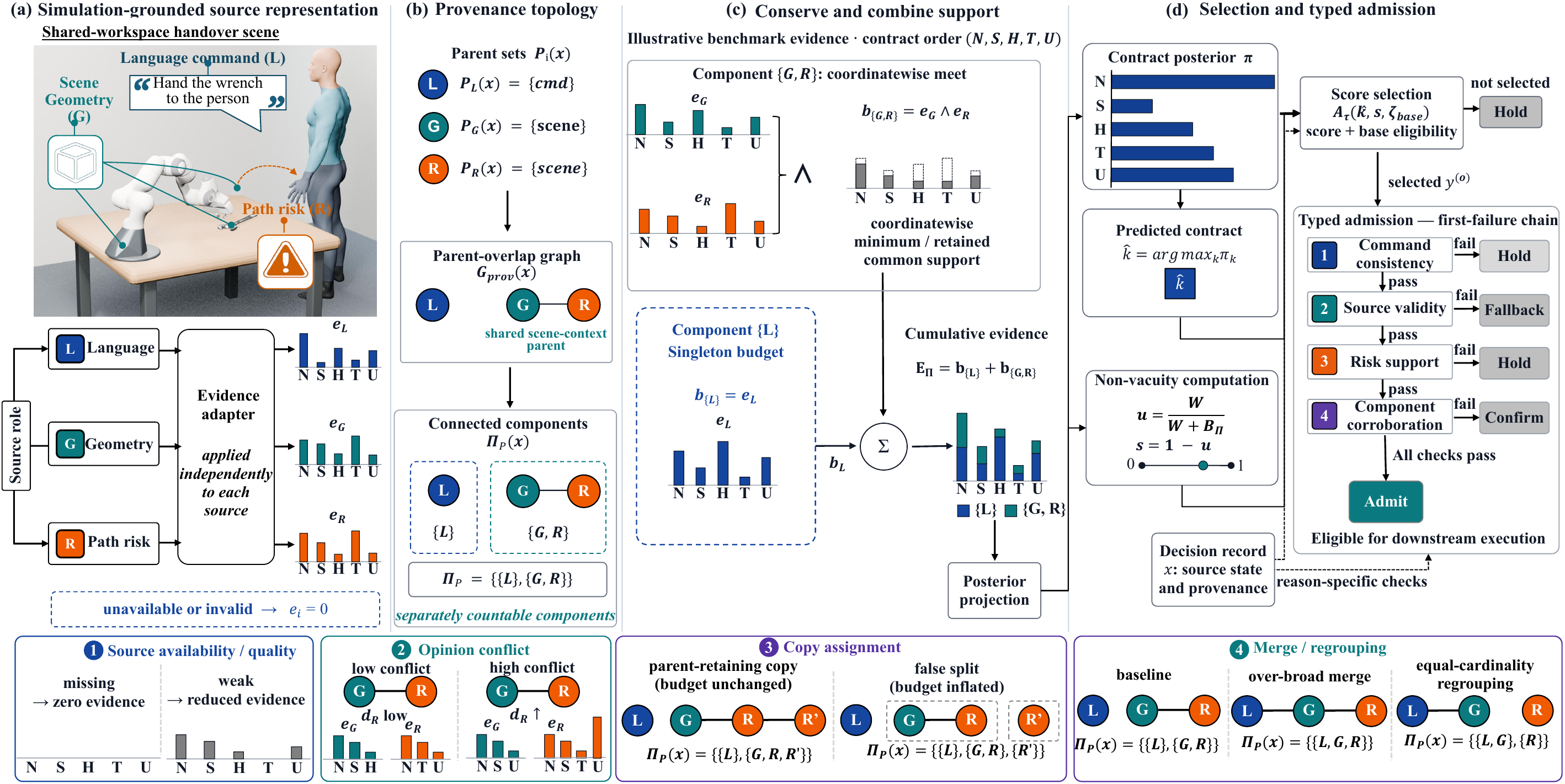}
\caption{PACT architecture and paired interventions in the simulation benchmark. (a) Language, geometry, and path-risk outputs are mapped to classwise evidence. Unavailable or invalid sources contribute zero evidence. (b) Parent-set overlap induces the provenance graph and the connected components $\{\{L\},\{G,R\}\}$. (c) The coordinatewise meet within $\{G,R\}$ is retained, the singleton budget $\{L\}$ is preserved, and the result is summed across separately countable components. (d) The posterior determines the action contract, the score $1-u$ orders instances for retention, and the ordered admission checks stop at the first failed requirement. The bottom row shows the paired intervention families. Admission denotes eligibility for execution.}
\label{fig:curated_benchmark_examples}
\end{figure}

\begin{table}[pos=t]
\centering
\caption{Action contracts used in the simulation benchmark. Contract names denote candidate action constraints and are distinct from the typed admission responses \textsc{hold}, \textsc{confirm}, and \textsc{fallback}.}
\label{tab:benchmark_definition}
\IFTableSetup
\setlength{\tabcolsep}{3.8pt}
\renewcommand{\arraystretch}{1.08}
\begin{tabularx}{\linewidth}{@{}M{1.05cm} Q{2.15cm} Y Q{1.85cm}@{}}
\toprule
\textbf{Symbol} & \textbf{Action contract} & \textbf{Execution constraint} & \textbf{Benchmark label} \\
\midrule
$N$ & Normal & Execute the nominal transfer without an added constraint. & Neutral \\
$S$ & Slow-clearance & Proceed at reduced speed with additional clearance. & Cautious \\
$H$ & Hold--confirm & Remain stationary while requesting or awaiting confirmation. & Uncertain \\
$T$ & Retreat--fallback & Withdraw from the transfer or invoke a fallback before reattempting. & Corrective \\
$U$ & Bounded--urgent & Act within defined time, resource, and state bounds. & Time-critical \\
\bottomrule
\end{tabularx}
\end{table}

The benchmark contains 31,200 evaluations nested within 48 scene clusters. Thirteen source conditions vary availability, validity, quality, conflict, timing, and urgency while the reference task structure is preserved. Of these evaluations, 19,200 involve three valid sources, 7,200 involve exactly one unavailable source, 2,400 involve three unavailable sources, and 2,400 involve one invalid source. Resampling of the 31,200 evaluations is performed at the scene level.

One condition constitutes a designed adversarial-consensus stress test. Its 2,400 cases present high-confidence agreement on an incorrect contract, while the specified corroboration rule recognizes fewer separately countable supporting components than admission requires. The remaining conditions test evidence degradation and disagreement without recourse to this constructed failure mode. Appendix~\ref{sec:additional_experimental_details} gives the complete perturbation and comparator specifications.

Each intervention targets a distinct part of the accounting mechanism. Exact and near copies test conservation within a provenance component. False refinement, over-coarsening, bridging, and regrouping alter the counting relation while the numerical evidence is kept fixed. Missing-source variants change the manner in which declared but unavailable evidence enters the component rule. For multiplicity transfer, an ordering score fitted at the baseline multiplicity is applied without refitting once additional copies have been introduced.

In the main grouped-scene study, nested-Dirichlet concentration and score cutoffs are selected only within each outer training fold. At the target coverages $0.10$, $0.13$, and $0.15$, each concentration is evaluated using its corresponding nested-Dirichlet candidates and score cutoff under the shared admission rule. Both cutoff fitting and cost evaluation rely on outer-training data. Concentrations are ranked according to the 10:1 wrong-admission-to-nonrelease cost in \Cref{eq:decision_cost}. This ratio is illustrative rather than estimated from an application-specific utility, and it is also used in the policy-cost comparison. The selected concentration is used unchanged in PACT, so that the corresponding evidence scale is preserved. Appendix~\ref{sec:additional_experimental_details} provides the cost definition, the selection procedure, and the tie-breaking rules.

A separate 256-point scrambled Sobol design~\citep{Owen1995RandomizedNets} evaluates sensitivity to opinion shape, generation concentration, quality, conflict, and occlusion strength.

% ============================================================
% SECTION 4.2: Admission, comparators, and evaluation
% ============================================================
\subsection{Admission, comparators, and evaluation}
\label{subsec:admission_evaluation}

In the controlled benchmark, the complete fusion methods operate on the same source instances under a common typed admission policy. They may differ in candidate formation, evidence representation, missing-source treatment, access to provenance at the fusion stage, native selection score, and fitted cutoff. These comparisons assess complete decision rules rather than isolated causal effects of the fusion operator. All methods use the same supplied provenance partition for component corroboration at the admission stage.

The base-eligibility indicator is $\zeta_{\mathrm{base}}(x)=\mathbb 1\{|\mathcal I_{\mathrm{obs}}(x)|\geq2\}$. At admission, $k^*=\widehat k(x)$ denotes the selected candidate, and $\mathcal U_3(k^*;x)$ indicates that all three expected sources are observed and support $k^*$. A component is complete when all of its expected members are observed and well formed, and each complete component counts once whenever at least one member supports the selected candidate. The number of complete supporting components is
\[
n_{\mathrm{comp}}^{\mathrm{complete}}(k^*;x)=
\left|
\left\{
C\in\Pi_P(x):
C\cap\mathcal I_{k^*}(x)\neq\varnothing,\ 
C\subseteq\mathcal I_{\mathrm{obs}}(x)
\right\}
\right|.
\]

This count of components is distinct from the magnitude of evidence derived from the meet. Appendix~\ref{sec:additional_experimental_details} defines the source-level probability, quality, and conflict summaries used in \Cref{tab:typed_admission_policy}.

The checks in \Cref{tab:typed_admission_policy} begin after base eligibility and score selection and stop at the first failed condition, with component corroboration as the final check. Under the primary rule, the requirement on component count is invoked only when $\mathcal U_3(k^*;x)$ and the listed probability, quality, and conflict conditions are satisfied. In that case, fewer than $\nu=3$ complete supporting components trigger \textsc{confirm}. The reference partition $\{\{L\},\{G,R\}\}$ contains only two components, so high-confidence unanimous support that meets these conditions is sent for confirmation whenever this final check is reached. Alternative policies vary the required number or combination of complete supporting components. The effects of those policies are evaluated in \Cref{subsec:results_verifier_source_support}, and \Cref{tab:component_corroboration_rules} reports the corresponding outcomes.

\begin{table}[pos=t]
\centering
\caption{Typed action admission and ordered failure responses.}
\label{tab:typed_admission_policy}
\begin{threeparttable}
\IFTableSetup
\begingroup
\footnotesize
\renewcommand{\arraystretch}{1.02}
\setlength{\tabcolsep}{2.6pt}
\begin{tabularx}{\linewidth}{@{}
>{\raggedright\arraybackslash}p{2.35cm}
>{\raggedright\arraybackslash}X
>{\centering\arraybackslash}p{1.25cm}@{}}
\toprule
Admission check & Observable condition & Response \\
\midrule
Command consistency
& Timing information is missing for a present language source, the current command differs from that underlying the evidence, or a stale-language state carries over the previous command
& \textsc{hold} \\
Source validity
& A present source has an invalid opinion vector, an unrecognized contract label, or a missing required opinion, quality, or conflict attribute
& \textsc{fallback} \\
Risk support
& $z_R=1$, $z_L=0$, $p_L^{\mathrm{peak}}\geq0.90$, $q_L\geq0.90$, and $\Delta_L\geq0.70$
& \textsc{hold} \\
Component corroboration
& $\mathcal U_3(k^{*};x)$, $p_{\min}^{\mathrm{peak,obs}}\geq0.40$, $q_{\min}^{\mathrm{obs}}\geq0.30$, $d_{\max}^{\mathrm{obs}}\leq0.15$, $n_{\mathrm{comp}}^{\mathrm{complete}}(k^{*};x)<\nu$, with $\nu=3$
& \textsc{confirm} \\
\bottomrule
\end{tabularx}
\endgroup
\begin{tablenotes}[flushleft]\footnotesize
\item[] \textit{Note.} Checks are evaluated after base eligibility and score selection, in the displayed order. The first failed condition determines the typed response.
\end{tablenotes}
\end{threeparttable}
\end{table}

The comparator set covers quality weighting, product fusion, evidential composition, cautious combination, provenance-conditioned alternatives, and a learned Set Transformer reference~\citep{Han2021TMC,Denoeux2008Cautious,SmetsKennes1994TBM,Lee2019SetTransformer}. \Cref{tab:comparator_specification} summarizes the analytic fusion rules. Two targeted ablations address properties of the component rule used in PACT. The first examines whether exact-copy invariance alone suffices for evidence conservation. The second retains the first source in each component under the common source order and examines whether a single representative recovers the coordinatewise meet.

\begin{table}[pos=t]
\centering
\caption{Compared fusion rules by evidence form, provenance use, missing-source treatment, and selection score.}
\label{tab:comparator_specification}
\begin{threeparttable}
\IFTableSetup
\begingroup
\footnotesize
\renewcommand{\arraystretch}{1.00}
\setlength{\tabcolsep}{2.4pt}
\begin{tabularx}{\linewidth}{@{}
>{\raggedright\arraybackslash}p{2.65cm}
>{\raggedright\arraybackslash}p{1.75cm}
>{\raggedright\arraybackslash}X
>{\centering\arraybackslash}p{0.70cm}
>{\centering\arraybackslash}p{1.45cm}
>{\centering\arraybackslash}p{1.05cm}@{}}
\toprule
Method & Evidence form & Combination rule & Prov. &
\makecell[c]{Missing\\source} & \makecell[c]{Selection\\score} \\
\midrule
Quality-weighted & Probability & Quality-weighted mean & \xmark & Omitted & Peak \\
Product & Probability & Reliability-discounted product & \xmark & Omitted & Peak \\
Nested Dirichlet & Dirichlet evidence & Sequential composition & \xmark & Omitted & $1-u$ \\
Global cautious & Belief weights & Global cautious minimum & \xmark & Omitted & Peak \\
Hierarchy-matched cautious & Belief weights & Component minimum, then conjunctive combination & \cmark & Vacuous mass & $1-m(\Theta)$ \\
Pooling without provenance & Log probability & Weighted log pool & \xmark & Penalized & Peak \\
Provenance-discounted pooling & Log probability & Edge-discounted log pool & \cmark & Penalized & Peak \\
\textbf{PACT fusion} & Classwise evidence & Meet, then sum & \cmark & Zero evidence & $1-u$ \\
\bottomrule
\end{tabularx}
\endgroup
\begin{tablenotes}[flushleft]\footnotesize
\item[] \textit{Note.} \cmark\ indicates that the supplied provenance partition enters the fusion rule. Prov. denotes use of provenance in fusion. Component corroboration uses the same supplied partition across all complete decision rules.
\end{tablenotes}
\end{threeparttable}
\end{table}

Native selection scores are evaluated alongside posterior-derived scores and a pooled correctness scorer. Within each outer fold, a single pooled correctness model is fitted only on outer-training outputs from product fusion, nested Dirichlet, global cautious fusion, and PACT, without method identity. The fitted model and the standardization of its training fold are then applied without refitting to provenance-discounted pooling and to hierarchy-matched cautious fusion. The shared scorer may produce different numerical scores and different rankings for each method. Cutoffs are fitted separately on the outer-training instances of each method, so that the retained sets may differ as well. Appendix~\ref{sec:additional_experimental_details} gives the comparator and scorer specifications.

Pre- and post-admission score cutoffs are fitted separately within each method, using outer-training instances that are eligible at the corresponding stage, and are transferred unchanged to the held-out scenes. The resulting risk--coverage curves therefore describe separately thresholded decision rules rather than the same retained set before and after admission.

Coverage is measured relative to all evaluation instances, and the reported operating point targets a coverage of $0.13$. Among $N$ instances, let $n_{\mathrm{ret}}$, $n_{\mathrm{err}}$, and $n_{\mathrm{cor}}$ count the retained decisions, the retained errors, and the correct retained decisions. Then $\gamma=n_{\mathrm{ret}}/N$, $R_{\mathrm{all}}=n_{\mathrm{err}}/N$, $R_{\mathrm{cond}}=n_{\mathrm{err}}/n_{\mathrm{ret}}$ when $n_{\mathrm{ret}}>0$, and $C_{\mathrm{all}}=n_{\mathrm{cor}}/N=\gamma-R_{\mathrm{all}}$. The principal metric, ncsAURC, averages the conditional risk over the shared coverage~\citep{ElYaniv2010SelectiveClassification}.
\begin{equation}
\operatorname{ncsAURC}_{[\gamma_{\min},\gamma_{\max}]}
=
\frac{1}{\gamma_{\max}-\gamma_{\min}}
\int_{\gamma_{\min}}^{\gamma_{\max}}R_{\mathrm{cond}}(\gamma)\,\mathrm d\gamma.
\label{eq:common_support_ncsaurc}
\end{equation}

A lower value of ncsAURC indicates lower conditional risk averaged over the shared coverage interval. Cross-method areas are integrated only over the coverage attained by every compared method, which avoids extrapolation beyond the observed support. The primary comparison uses $[0.10,0.39]$, whereas repeated fold assignments, the joint holdout, and source-evidence perturbations use $[0.10,0.35]$.

A score-independent random ordering provides the reference used in \Cref{subsec:results_selective_fusion}. Its expected conditional risk equals the empirical error rate among base-eligible instances. Appendix~\ref{sec:additional_experimental_details} gives the empirical denominator and the numerical integration procedure.

NLL and the multiclass Brier score assess probabilistic prediction through strictly proper scoring rules~\citep{Gneiting2007ProperScores}. ECE$_{10}$ summarizes top-confidence calibration~\citep{Guo2017Calibration}, using posterior peak and ten equal-width bins weighted by empirical frequency. The unnormalized Brier score lies in $[0,2]$. Appendix~\ref{sec:additional_experimental_details} gives the metric definitions.

An exploratory matched $2\times2$ comparison crosses singleton against provenance-partition aggregation, with component corroboration either enabled or disabled. The evidence mapping, the score functional, and the remaining admission conditions are held fixed across the four cells, although numerical scores and rankings may still differ. The resulting cells provide matched contrasts between complete decision rules and do not separate the causal contributions of partitioning and of corroboration. Since the adversarial-consensus condition is constructed so as to activate corroboration, a sensitivity analysis removes its 2,400 cases and evaluates the remaining 28,800.

Model selection, score fitting, threshold estimation, and uncertainty analyses all preserve scene grouping. Admission rules are fixed before held-out evaluation, and grouping assignments are defined independently of the held-out outcomes. The main grouped-scene analyses separate training from test data by scene fold. In the joint scene-and-condition holdout, score-cutoff fitting excludes both the outer test-scene fold and the held condition, whereas the fusion-concentration schedule is transferred from the main study without reselection for each held condition.

Sensitivity to scene allocation is assessed over 50 grouped fold assignments, with scenes remaining the sampling units. Paired bootstrap intervals resample the 48 held-out scene contributions. Appendix~\ref{sec:additional_experimental_details} gives the remaining numerical and uncertainty-analysis details.

% ============================================================
% SECTION 4.3: Evaluation on learned predictions
% ============================================================
\subsection{Matched reassignment on learned predictions}
\label{subsec:external_transfer_design}

The reassignment analyses vary the supplied counting structure while the learned predictions are held fixed. Changes in the budget and in the posterior therefore reflect evidence accounting rather than retraining or altered source outputs. Trusted Multi-View Classification (TMC)~\citep{Han2021TMC} and Reliable Conflictive Multi-View Learning (RCML)~\citep{Xu2024RCML} provide the per-view predictions for these tests, and the offline HRC analyses add camera-acquisition grouping together with application-specific admission evidence.

The multi-view analysis takes the identity of the original feature view as the reference counting assignment and compares the same predictions under false refinement and under all-view merging. The six-view HandWritten dataset, also denoted Mfeat, permits exhaustive analysis of all 203 view partitions and 856 single-merge relations for both TMC and RCML. Each single-merge relation joins two components of a partition while the remaining components are left unchanged. Predictions of TMC on Scene15 provide a second evaluation of accuracy and selective ordering, and predictions of RCML on the Pose, Illumination, and Expression (PIE) dataset extend the analysis of the evidence budget.

The Human-Aware Behavior and Interaction Training (HABIT) dataset provides the robot-manipulation observations. The analysis comprises 1,128 paired events drawn from 696 episodes across six tasks. Observations come from five synchronized cameras, and each image is evaluated with four prompts~\citep{Song2026HABIT}. Prompt outputs derived from the same image form one acquisition component, whereas distinct camera images are treated as separate acquisitions at the granularity evaluated here. This grouping does not assert statistical independence. The comparisons include per-output counting, acquisition grouping, identical-vector merging, all-view merging, and one output per camera.

Since readiness annotations are not provided in HABIT, subtask indices and gripper trajectories define the offline early and event-proximal evaluation windows. These windows determine evaluation times rather than deployment-time inputs. A separate analysis evaluates model-family labels as a surrogate grouping hypothesis by comparing them with the observed association of errors across eight vision--language checkpoints~\citep{Qwen3VL2025,InternVL32025,Li2024LLaVAOneVision,Marafioti2025SmolVLM}.

The HRC admission study uses 60 evaluation episodes that are disjoint from the 82 development episodes. Across two temporal windows, one episode target, and five counterfactual targets, the study contains 720 offline target--time evaluations per checkpoint, with the episode as the sampling unit. In the learned event-proximity interface, candidate evidence and the learned event-proximity signal share camera acquisitions and enter admission as complementary requirements.

For the event-proximity model, the primary evaluation tests unseen episodes from the same six tasks that are represented in development. A leave-one-task-out analysis additionally withholds each evaluated task from fitting. Absolute-threshold and within-episode-change analyses examine task-dependent score levels and within-episode temporal discrimination.

The joint fusion-and-admission design includes camera-grouped PACT, per-output counting, all-output merging, and one output per camera, using predictions of Qwen3-VL-8B and Qwen3-VL-32B obtained from the same 60 episode-level events at two temporal windows. Prompt multiplicity and exact copies vary repeated computation under fixed acquisitions. Camera removal changes acquisition availability and thereby changes the observations presented to the decision layer. The primary outcomes are reference-inconsistent admissions across 720 queries per checkpoint and recall among the 60 reference-\textsc{ready} episode targets. Reference consistency denotes agreement with the dataset reference action.

A separate analysis of the temporal mechanism uses the larger paired-event population rather than the 60-episode admission subset and tests early-to-event-proximal changes against block-exchangeable temporal variation. The permutation analysis excludes the sole singleton exchangeability block. Appendix~\ref{sec:additional_experimental_details} specifies the evidence mapping, the model, the thresholds, the resampling units, and the permutation design.

% ============================================================
% SECTION 5: Results: Conservation, Ordering, and Admission
% ============================================================
\section{Conservation, Ordering, and Admission Results}
\label{sec:admission_results}

Copy and provenance interventions examine whether the accumulation of evidence follows the supplied counting relation. The remaining analyses consider how changes in the budget affect predictions, selective ordering, and admission.

% ============================================================
% SECTION 5.1: Conservation under copy and provenance interventions
% ============================================================
\subsection{Evidence conservation under copy and provenance interventions}
\label{subsec:results_provenance_analysis}

\paragraph{Exact-copy invariance depends on the counting relation.}
No posterior drift arises for PACT or for hierarchy-matched cautious fusion when exact copies remain within their original components. Assigning the same copies to distinct components changes the predicted contract in 32.3\% of 93,600 comparisons, with a mean posterior $\ell_1$ drift of 0.396. These comparisons cover all 31,200 instances, each of the three source roles being duplicated in turn at multiplicity eight. Product fusion and nested Dirichlet do not use the provenance partition, so shared-parent and separate-parent assignments coincide numerically. Appendix~\ref{sec:additional_experimental_details} gives the remaining comparator change rates and drift statistics.

Preservation of the selected contract does not imply stability of the posterior, and neither property determines the ordering induced by the selection score.

\paragraph{Near copies perturb the posterior without adding support.}
Under the undegraded source condition, near-copy insertion changes no predicted contract across 960 comparisons that span 48 scenes, five benchmark labels, and four perturbation magnitudes. The maximum posterior $\ell_1$ drift is 0.011. A near copy inserted within a component may lower the budget of that component but cannot add support. The sweep measures the sensitivity of the posterior to perturbations of the opinions. Since the formal bounds concern perturbations in evidence space, the sweep does not test their numerical values directly. Appendix~\ref{sec:additional_experimental_details} reports the perturbation grid and the descriptive slope.

\paragraph{Exact-copy invariance alone does not characterize conservation.}
The coordinatewise maximum is likewise invariant under exact duplication and satisfies partition monotonicity, yet the insertion of a different member can raise the budget of its component. This occurs in 96.4\% of the 24,000 instances in which geometry and risk are observed and valid, spanning ten of the thirteen source conditions. The corresponding rate for the meet is 0\%.

Replacing the meet with the maximum raises the total evidence budget $B_{\Pi}$ by 35.4\% across the full benchmark and changes 16.2\% of the predicted contracts. It also raises ncsAURC from 0.6294 to 0.7918 (difference 0.1623, 95\% confidence interval (CI) $[0.154,0.171]$), whereas ECE$_{10}$ decreases from 0.245 to 0.193. A lower calibration error therefore does not imply compliance with the conservation requirements in \Cref{prop:pact_within_component_characterization}.

\paragraph{Representative selection does not generally recover conserved support.}
Retaining one designated source per component under the same outer-fold threshold procedure gives a pre-admission ncsAURC of 0.864, as against 0.629 for PACT fusion (difference 0.235, 95\% CI $[0.229,0.240]$). After admission, the corresponding values are 0.167 and 0.086. At the target coverage of 0.13, the representative rule yields 246 wrong admissions among 4,051 retained decisions, while no wrong admission is observed for PACT among 4,043 retained decisions.

The two rules may assign different evidence-mass scores even where they predict the same contract, since $1-u$ depends on the retained budget. A representative recovers the meet exactly only when its evidence vector equals the component minimum. Appendix~\ref{sec:additional_experimental_details} reports the frequencies with which scores change.

\paragraph{Provenance misspecification has asymmetric downstream consequences.}
With the numerical evidence fixed, false refinement, merging, recoverable edge deletion, and incorrect distinct-parent assignments alter the supplied counting structure. At a misspecification rate of 50\%, false refinement and incorrect distinct-parent assignments yield $R_{\mathrm{all}}=4.92\%$, whereas merging and recoverable deletion yield 1.06\%, as against 8.75\% for the provenance-unaware singleton reference. Here $R_{\mathrm{all}}$ denotes wrong admissions divided by all evaluation instances. These interventions follow the predicted directions of the budget, although budget monotonicity does not determine predictive or admission ordering. Appendix~\ref{sec:additional_experimental_details} reports the remaining misspecification grid.

False refinement becomes more consequential as output multiplicity increases. Copies inserted within a component and an all-source merge preserve their respective budgets through $m=32$, yet the alternative counting structures produce different decision metrics. Under singleton counting, the aggregate budget at $m=32$ reaches $10.73\times$ its value at $m=1$. Over the support $[0.10,0.39]$, $\Delta_{\mathrm{split}}$ increases from 0.024 (95\% CI $[0.022,0.027]$) at $m=1$ to 0.368 ($[0.353,0.380]$) at $m=8$ and 0.434 ($[0.421,0.447]$) at $m=32$. All six pointwise intervals exclude zero (\Cref{fig:topology_multiplicity_stress}).

The composition of the components also matters when their number is fixed. Grouping language with geometry while risk is left separate raises ncsAURC$_{[0.10,0.39]}$ by 0.199 relative to the reference provenance partition (95\% CI $[0.186,0.211]$). Grouping language with risk while geometry is left separate raises it by 0.201 ($[0.185,0.219]$).

\begin{figure}[pos=t]
\centering
\includegraphics[width=0.9\textwidth]{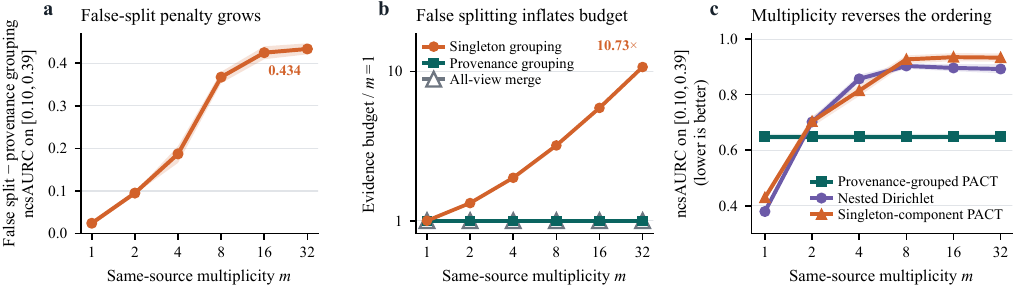}
\caption{False refinement increasingly inflates the evidence budget and increases ncsAURC under the transferred ordering score as multiplicity grows. (a) $\Delta_{\mathrm{split}}=\mathrm{ncsAURC}_{\mathrm{false\ split}}-\mathrm{ncsAURC}_{\mathrm{provenance}}$. (b) Budget relative to $m=1$. (c) ncsAURC under the score estimated at $m=1$. The singleton grouping in panel (b) is the singleton-component PACT rule in panel (c). Bands in panels (a, c) are 95\% paired scene-bootstrap intervals. Panel (b) is deterministic. All panels precede component corroboration.}
\label{fig:topology_multiplicity_stress}
\end{figure}

The multiplicity-transfer comparison uses an ordering score fitted at the baseline multiplicity. Nested Dirichlet has the lower ncsAURC at $m=1$, whereas PACT has the lower value from $m=2$ onward. The PACT-minus-nested difference reaches $-0.254$ at $m=8$ (\Cref{fig:topology_multiplicity_stress}). Removing the observed-source count from the score delays the crossover to $m=4$ without eliminating it. Neither score is refitted after duplication, so the crossover reflects a multiplicity-induced shift under a transferred ordering rule.

% ============================================================
% SECTION 5.2: Selective risk before and after admission
% ============================================================
\subsection{Selective risk before and after admission}
\label{subsec:results_selective_fusion}

\Cref{fig:risk_coverage_comparison} compares separately thresholded pre- and post-admission rules, not a common retained set. Fitting and scene bootstrapping follow Section~\ref{sec:bench_construction}.

\begin{figure}[pos=t]
\centering
\includegraphics[width=0.9\textwidth]{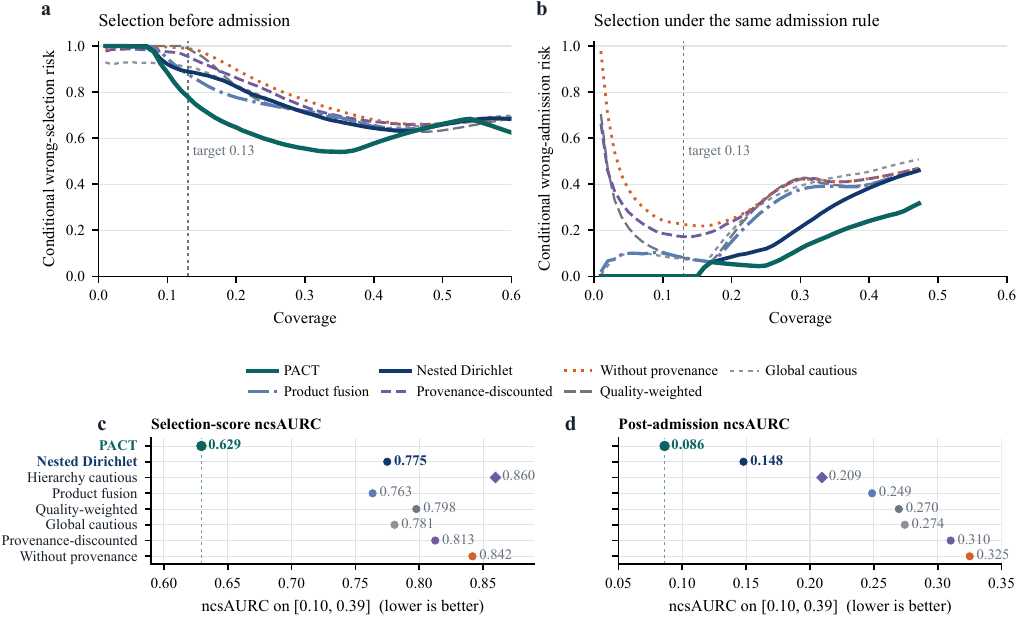}
\caption{Selective risk before and after typed admission. Compared rules are PACT, product fusion, quality-weighted fusion, nested Dirichlet, global cautious, hierarchy-matched cautious, and pooling with and without provenance discounting. Panels (a, b) show conditional risk, while panels (c, d) report ncsAURC$_{[0.10,0.39]}$. Dashed lines mark target coverage 0.13. Panel (c) compares fusion--score combinations before admission, and panel (d) compares the corresponding complete decision rules. Hierarchy-matched cautious appears only in panels (c, d) because its risk--coverage curve does not span the full range shown in panels (a, b).}
\label{fig:risk_coverage_comparison}
\end{figure}

% ============================================================
% SECTION 5.2.1: Fusion methods and selection scores before admission
% ============================================================
\subsubsection{Method ranking depends on the fusion--score pair}

Before admission, the leading fusion method depends strongly on the ordering score. PACT fusion attains the lowest ncsAURC$_{[0.10,0.39]}$ among the native method--score combinations. \Cref{tab:headline_fusion_only_csaurc} reports the quality of the posterior separately from selective ordering under native, posterior-derived, and pooled correctness scores.

\begin{table}[pos=t]
\centering
\caption{Prediction quality and score-dependent selective ordering before admission.}
\label{tab:headline_fusion_only_csaurc}
\begin{threeparttable}
\IFTableSetup
\begingroup
\footnotesize
\renewcommand{\arraystretch}{0.98}
\setlength{\tabcolsep}{2.6pt}
\setlength{\PACTPrimaryTableWd}{0.88\linewidth}
\begin{tabular*}{\PACTPrimaryTableWd}{@{}l@{\extracolsep{\fill}}*{5}{S[table-format=1.4]}@{}}
\toprule
\multicolumn{6}{@{}l}{\textbf{Panel A.} Prediction and probability quality (all instances)} \\
Fusion operator & {Acc.$\uparrow$} & {Macro-F1$\uparrow$} & {NLL$\downarrow$} & {Brier$\downarrow$} & {ECE$_{10}\downarrow$} \\
\midrule
Product & 0.3020 & 0.2996 & 2.7189 & 0.9903 & 0.2778 \\
Nested Dirichlet & 0.3011 & 0.2987 & 1.8146 & 0.8788 & \bfseries 0.0944 \\
Global cautious & 0.2787 & 0.2767 & 2.0974 & 0.9574 & 0.2391 \\
\textbf{PACT fusion} & \bfseries 0.3689 & \bfseries 0.3684 & \bfseries 1.6735 & \bfseries 0.8436 & 0.2454 \\
\midrule
\multicolumn{6}{@{}l}{\textbf{Panel B.} Pre-admission ncsAURC$_{[0.10,0.39]}$ ($\downarrow$) by selection score} \\
\multirow{2}{*}{Fusion operator} & {\multirow{2}{*}{Native}} & \multicolumn{3}{c}{Posterior-derived} & {\multirow{2}{*}{\makecell[c]{Pooled\\correctness$^{a}$}}} \\
\cmidrule(lr){3-5}
& & {Peak} & {Margin} & {Inv. entropy} & \\
\midrule
Product & 0.7635 & 0.7635 & 0.7675 & \bfseries 0.7628 & 0.5547 \\
Nested Dirichlet & 0.7749 & \bfseries 0.7547 & \bfseries 0.7649 & 0.7961 & 0.3796 \\
Global cautious & 0.7806 & 0.7806 & 0.7763 & 0.7904 & 0.4984 \\
Provenance-discounted pooling & 0.8125 & 0.8125 & 0.8065 & 0.8256 & \bfseries 0.3711 \\
Hierarchy-matched cautious & 0.8597 & 0.7870 & 0.7840 & 0.7943 & 0.4659 \\
\textbf{PACT fusion} & \bfseries 0.6294 & 0.9172 & 0.9234 & 0.9158 & 0.6489 \\
\bottomrule
\end{tabular*}
\endgroup
\begin{tablenotes}[flushleft]\footnotesize
\item[] \textit{Note.} Panel A pools 13 source conditions (12 degraded). Acc. denotes accuracy, and Macro-F1 is the classwise average F1 score. NLL denotes negative log-likelihood, Brier denotes the Brier score, and ECE$_{10}$ denotes expected calibration error with ten equal-width bins. Peak, margin, and inverse entropy are $\max_k\pi_k$, $\pi_{(1)}-\pi_{(2)}$, and $1-H(\boldsymbol{\pi})/\log K$, respectively. Bold marks the column best from unrounded results.
\item[\textsuperscript{a}] Fitted on outer-training outputs from product, nested Dirichlet, global cautious, and PACT, then transferred without refitting to the two remaining comparators. Thresholds are method-specific and fitted on outer-training data.
\end{tablenotes}
\end{threeparttable}
\end{table}

Panel B shows that the choice of score can reverse the ranking of the fusion rules. PACT attains the lowest ncsAURC under its native $1-u$ score and the highest under posterior peak, top-two margin, inverse normalized entropy, and pooled correctness. For the same 28,800 base-eligible candidates over the common support $[0.10,0.39]$, the corresponding random-ordering reference is 0.6170. Peak, margin, and inverse entropy exceed that reference by approximately 0.300, 0.306, and 0.299, respectively, whereas pooled correctness exceeds it by approximately 0.032. The native value of 0.6294 for PACT also remains above this random-ordering reference, and product fusion likewise remains above its own method-specific random reference. The native score $1-u$ is monotone in the conserved budget $B_{\Pi}$. The favorable cross-method result applies to PACT together with its native score and does not establish a standalone benefit of $1-u$ for correctness ordering.

Under their native scores, product fusion attains the lowest ncsAURC among the non-PACT operators shown in \Cref{fig:risk_coverage_comparison}. PACT fusion lowers ncsAURC by 0.134 relative to product fusion, with a PACT-minus-product 95\% CI of $[-0.140,-0.128]$ and a negative difference in all 48 scenes. Over the common support $[0.10,0.35]$, the contrast remains negative across all 50 grouped split assignments. Language is the strongest single source on the training folds, yet its ncsAURC is 0.874 and remains unchanged under base eligibility, as against 0.629 for PACT fusion.

The gap of 0.134 between product fusion and PACT also admits a descriptive algebraic decomposition relative to the random-ordering references of the two methods. Of that gap, 0.073 (54.1\%) reflects the difference in candidate-error baselines under score-independent ordering. The remainder compares excess ncsAURC above those baselines for the observed method--score combinations. This decomposition does not identify separate causal contributions of fusion and of scoring. Appendix~\ref{sec:additional_experimental_details} gives the unrounded decomposition.

% ===================================================================
% SECTION 5.2.2: Fusion methods and selection scores under a common admission policy
% ===================================================================
\subsubsection{Identical error counts can conceal selective-risk differences}
\label{subsec:fusion_admission_comparison}

Under the common admission policy, the complete methods differ in their fused candidate, their native selection score, and their fitted threshold. With method-specific scores, PACT attains an ncsAURC of 0.086, as against 0.148 for nested Dirichlet (difference $-0.062$, 95\% CI $[-0.067,-0.056]$). At the target coverage of 0.13, the two methods retain nearly the same number of correct cases, and no wrong admission is observed for either (\Cref{tab:selective_performance}). The methods differ in ncsAURC over the common support $[0.10,0.39]$ but not in observed error counts at that operating point. These finite-sample zeros do not establish zero population risk.

Among instances with three valid sources, posterior-peak ranking narrows the post-admission ncsAURC difference between PACT and nested Dirichlet to $-0.0023$ (PACT-minus-nested difference, 95\% CI $[-0.0028,-0.0018]$). Once the constructed adversarial-consensus condition is removed, the two methods have equal reported ncsAURC under posterior-peak ranking. Product fusion has lower ncsAURC than both in each posterior-peak comparison. Use of the same functional does not fix numerical scores or rankings across methods. Appendix~\ref{sec:additional_experimental_details} reports the values and the subset sizes.

\begin{table}[pos=t]
\centering
\caption{Complete-rule performance and partition--corroboration sensitivity.}
\label{tab:selective_performance}
\begin{threeparttable}
\IFTableSetup
\begingroup
\footnotesize
\renewcommand{\arraystretch}{0.98}
\setlength{\tabcolsep}{2.6pt}
\setlength{\PACTPrimaryTableWd}{0.92\linewidth}
\begin{tabular*}{\PACTPrimaryTableWd}{@{}l@{\extracolsep{\fill}}*{4}{S[table-format=1.4]}@{}}
\toprule
\multicolumn{5}{@{}l}{\textbf{Panel A.} Complete rules with method-specific scores} \\
Method & {ncsAURC$\downarrow$} & {Coverage} & {$R_{\mathrm{all}}\downarrow$} & {$C_{\mathrm{all}}\uparrow$} \\
\midrule
Product & 0.2486 & 0.1301 & 0.0106 & 0.1195 \\
Nested Dirichlet & 0.1479 & 0.1302 & \bfseries 0.0000 & \bfseries 0.1302 \\
Hierarchy-matched cautious & 0.2094 & 0.1300 & 0.0074 & 0.1226 \\
Provenance-discounted pooling & 0.3101 & 0.1298 & 0.0224 & 0.1075 \\
\textbf{PACT} & \bfseries 0.0861 & 0.1296 & \bfseries 0.0000 & 0.1296 \\
\midrule
\multicolumn{5}{@{}l}{\textbf{Panel B.} Matched partition $\times$ corroboration under common $1-u$} \\
Fusion partition & \multicolumn{1}{c}{Corr.}
& {\makecell[c]{ncsAURC\\all instances}}
& \multicolumn{2}{c}{\makecell[c]{ncsAURC\\sources present}} \\
\midrule
\multirow{2}{*}{Singleton}
& \multicolumn{1}{c}{\xmark} & 0.4135 & \multicolumn{2}{c}{0.5214} \\
& \multicolumn{1}{c}{\cmark} & 0.1521 & \multicolumn{2}{c}{0.0426} \\
\multirow{2}{*}{Provenance}
& \multicolumn{1}{c}{\xmark} & 0.3891 & \multicolumn{2}{c}{0.5189} \\
& \multicolumn{1}{c}{\cmark} & \bfseries 0.0861 & \multicolumn{2}{c}{\bfseries 0.0151} \\
\midrule
\multicolumn{5}{@{}l}{\textbf{Panel C.} Sensitivity after adversarial-consensus removal} \\
& {Singleton} & {Provenance}
& \multicolumn{2}{c}{$\Delta$ partition $-$ singleton (95\% CI)} \\
\midrule
ncsAURC & 0.125 & 0.069
& \multicolumn{2}{c}{$-0.056\;[-0.061,-0.051]$} \\
\bottomrule
\end{tabular*}
\endgroup

\begin{tablenotes}[flushleft]\footnotesize
\item[] \textit{Note.} All panels use common support $[0.10,0.39]$. Panel A uses method-specific outer-fold thresholds targeting coverage 0.13, transferred to held-out scenes. $R_{\mathrm{all}}$ and $C_{\mathrm{all}}$ are wrong and correct admissions divided by all instances. A value of 0.0000 denotes no observed wrong admissions. Panels B--C use a common $1-u$ functional, which does not fix numerical scores or retained ordering. In Panel B, sources present comprises 19,200 three-valid-source instances and 2,400 instances with one present but invalid source (21,600 total). Panel C removes the 2,400 adversarial-consensus cases. Corroboration then leaves ncsAURC unchanged. Bold marks descriptive best outcomes from unrounded values. Coverage is not ranked.
\end{tablenotes}

\end{threeparttable}
\end{table}

The matched $2\times2$ comparison of complete decision rules shows that the partition contrast depends on whether component corroboration is applied. Relative to singleton aggregation, provenance partitioning changes ncsAURC by $-0.024$ (95\% CI $[-0.027,-0.022]$) without corroboration and by $-0.066$ ($[-0.072,-0.060]$) with corroboration, which yields an interaction of $-0.042$ ($[-0.045,-0.038]$). Component corroboration targets the constructed adversarial-consensus condition directly, and the response of the policy to that condition is examined in \Cref{subsec:results_verifier_source_support}.

Once the 2,400 adversarial-consensus cases are removed, toggling corroboration no longer changes ncsAURC on the remaining 28,800 evaluations. Aggregation over the provenance partition nevertheless lowers ncsAURC from 0.125 to 0.069 relative to singleton aggregation under the shared $1-u$ score functional (difference $-0.056$, 95\% CI $[-0.061,-0.051]$). The remaining contrast between the two aggregation rules reflects changes in candidate formation and in evidence-mass ordering, which are distinct from the targeted corroboration response.

% ===================================================================================
% SECTION 5.2.3: Admission responses conditional on fusion outputs and scores
% ===================================================================================
\subsubsection{A designed counterexample to agreement as corroboration}
\label{subsec:results_verifier_source_support}

In the 2,400 constructed adversarial-consensus cases, language, geometry, and risk give the same high-confidence support to an incorrect contract while satisfying the probability, quality, and conflict thresholds. Under the reference partition, geometry and risk share a scene-context parent, so the three agreeing outputs span only two complete provenance components. The corroboration check therefore finds $n_{\mathrm{comp}}^{\mathrm{complete}}(k^*;x)=2<\nu=3$ and requests confirmation. Disabling the check admits all 2,400 incorrect cases and gives $R_{\mathrm{all}}=7.69\%$. This stress test characterizes the specified response of the policy rather than the prevalence of that failure mode outside the benchmark.

Reassigning the same numerical opinions to three complete components satisfies the component requirement and admits the candidate. Only the supplied counting structure changes, while the numerical agreement remains the same.

The cross-role sensitivity rule also admits these 2,400 cases, notwithstanding that their support spans only two complete provenance components. Appendix~\ref{sec:additional_experimental_details} reports the full admission-check ablation, the alternative counting rules, and the illustrative decision-cost comparison.

% ============================================================
% SECTION 5.3: Source availability, score dependence, and calibration
% ============================================================
\subsection{Missing evidence, score dependence, and calibration}
\label{subsec:results_robustness_generalization}

% ==================================================
% SECTION 5.3.1: Expected-source availability
% ==================================================
\subsubsection{Missing-source treatment changes evidence accumulation and selective risk}

The treatment of missing evidence changes the fused prediction even where the observed source outputs are unchanged. Under the primary retained-zero convention, every declared source remains in its provenance component, so an incomplete component contributes no conserved support. Under this convention, ncsAURC is 0.629 before admission and 0.086 afterward. Excluding the zero vector from the component meet, removing the unavailable source, or isolating it instead gives 0.775 before admission and 0.188 afterward in each case. The latter post-admission value also exceeds the 0.148 obtained by nested Dirichlet under its own missing-source convention. \Cref{subsec:fusion_admission_comparison} gives the corresponding three-valid-source comparison.

The effect of zero evidence depends on the aggregation rule. Under the stated nested-Dirichlet composition, zero evidence for source $i$ gives $\mathbf b_i=\mathbf 0$ and $u_i=1$, a vacuous opinion that leaves the combined opinion unchanged. In the within-component meet of PACT, the same zero evidence vector sets the retained budget of that component to zero.

The increase in ncsAURC for PACT under these alternative missing-source conventions exceeds the pre-admission margin of 0.134 over product fusion. The three alternatives coincide here because unavailable sources do not bridge observed components that are otherwise disconnected. In this sensitivity analysis on source unavailability, excluding the zero changes 1,670 predictions among the 9,600 instances with an unavailable source, with a mean posterior $\ell_1$ drift of 0.097. No change of prediction occurs among the 21,600 instances without source unavailability. This set includes present-but-invalid instances and therefore differs from the three-valid-source subset.

Stratifying by availability and validity reveals a reversal in the ordering of the methods. Among the 19,200 instances with three valid sources, pre-admission ncsAURC is 0.617 for PACT and 0.725 for nested Dirichlet. Among the 12,000 instances with an unavailable or invalid source, PACT has the higher value at 0.931 against 0.867 for nested Dirichlet. The same held-out scores are ranked within each stratum, so the reversal does not arise from refitting of the score or the threshold. Results for product fusion and for all-valid eligibility are reported in Appendix~\ref{sec:additional_experimental_details}.

Score dependence persists on the three-valid-source subset. With the pooled correctness scorer, PACT has the highest ncsAURC among the four compared rules both before and after admission. Before admission, the value of 0.803 for PACT contrasts with 0.384 for the best-performing global cautious rule. After admission, PACT gives 0.367 against 0.048 for the best-performing nested Dirichlet rule (\Cref{tab:pooled_score_valid_sources}). This reverses the favorable pre-admission ordering obtained for PACT under $1-u$.

Equal source scales define the primary benchmark specification. Source-specific rescaling selected on the training folds preserves the ordering between PACT and nested Dirichlet before and after admission, although equal source scales yield the lower ncsAURC for PACT in both comparisons. Appendix~\ref{sec:additional_experimental_details} reports the selected scales and the sensitivity results.

% ===========================================================
% SECTION 5.3.2: Selective ordering and probability calibration
% ===========================================================
\subsubsection{Lower NLL and Brier scores coexist with higher ECE}
\label{subsec:selective_calibration}

Lower NLL and Brier scores coexist with higher top-confidence calibration error in the comparison between PACT and nested Dirichlet. PACT also attains the lower ncsAURC under the native selection scores of the two methods. The PACT-minus-nested differences are $-0.141$ for NLL (95\% CI $[-0.149,-0.133]$), $-0.035$ for Brier ($[-0.036,-0.034]$), and $+0.151$ for ECE$_{10}$ ($[0.141,0.158]$).

An empirical-prior reference estimated from the training portions of the outer folds reaches an NLL of 1.609 and a Brier score of 0.800, both lower than the corresponding values of 1.674 and 0.844 for PACT.

\subsubsection{Aggregate ordering masks condition-specific reversals}
\label{subsec:conditionwise_joint_holdout}

In the joint scene-and-condition holdout, method-specific score-cutoff fitting excludes both the outer test-scene fold and the held source condition. The concentration schedule is transferred from the main study without reselection for each held condition. \Cref{tab:joint_scene_configuration_holdout} separates the aggregate comparison from the condition-specific contrasts for which both methods attain the common support $[0.10,0.35]$.

\begin{table}[pos=t]
\centering
\caption{Joint scene-and-condition holdout on common support $[0.10,0.35]$.}
\label{tab:joint_scene_configuration_holdout}
\begin{threeparttable}
\IFTableSetup
\begingroup
\footnotesize
\renewcommand{\arraystretch}{0.98}
\setlength{\tabcolsep}{2.6pt}
\begin{tabularx}{0.94\linewidth}{@{}
>{\raggedright\arraybackslash}X
S[table-format=-1.4]
S[table-format=-1.4]
S[table-format=1.4]
S[table-format=1.4]@{}}
\toprule
\multicolumn{5}{@{}l}{\textbf{Panel A.} Complete rules under the common admission policy} \\
Method & {ncsAURC$\downarrow$} & {Coverage} & {$R_{\mathrm{all}}\downarrow$} & {$C_{\mathrm{all}}\uparrow$} \\
\midrule
\textbf{PACT} & \bfseries 0.0617 & 0.1538 & \bfseries 0.0000 & \bfseries 0.1538 \\
Nested Dirichlet & 0.1168 & 0.1538 & \bfseries 0.0000 & \bfseries 0.1538 \\
Hierarchy-matched cautious & 0.1775 & 0.1609 & 0.0071 & \bfseries 0.1538 \\
Product & 0.2247 & 0.1644 & 0.0106 & 0.1538 \\
Provenance-discounted pooling & 0.2828 & 0.1708 & 0.0225 & 0.1483 \\
\addlinespace[1pt]
\multicolumn{2}{@{}l}{Paired PACT $-$ nested: $\Delta$ ncsAURC}
& -0.0551
& \multicolumn{2}{c@{}}{$95\%$ CI $[-0.0595,-0.0507]$} \\
\midrule
\multicolumn{5}{@{}l}{\textbf{Panel B.} Condition-wise PACT--nested Dirichlet contrasts} \\
Held-out source condition
& {$\Delta$ ncsAURC}
& \multicolumn{2}{c}{$95\%$ CI}
& {Lower} \\
\midrule
Missing geometry under low-quality language
& -0.4108
& \multicolumn{2}{c}{$[-0.4575,-0.3734]$}
& \multicolumn{1}{l}{PACT} \\
Low-quality partial disagreement
& -0.0554
& \multicolumn{2}{c}{$[-0.0600,-0.0511]$}
& \multicolumn{1}{l}{PACT} \\
Missing language under noisy scene context
& 0.0252
& \multicolumn{2}{c}{$[0.0092,0.0411]$}
& \multicolumn{1}{l}{Nested Dirichlet} \\
High-confidence contradiction
& 0.1160
& \multicolumn{2}{c}{$[0.0920,0.1423]$}
& \multicolumn{1}{l}{Nested Dirichlet} \\
\bottomrule
\end{tabularx}
\endgroup

\begin{tablenotes}[flushleft]\footnotesize
\item[] \textit{Note.} Panel A aggregates 13 held-condition test subsets. Panel B reports the four conditions for which both methods attain common support. The main-study concentration schedule is transferred without reselection, and method-specific thresholds exclude the held condition and test scenes. $R_{\mathrm{all}}=0.0000$ denotes no observed wrong admissions. In Panel B, $\Delta$ is PACT minus nested Dirichlet. Negative values favor PACT. Intervals use 2,000 paired scene bootstraps. Bold follows unrounded values.
\end{tablenotes}

\end{threeparttable}
\end{table}

The four held-out conditions with common support split the ranking. PACT has the lower ncsAURC for missing geometry under low-quality language and for low-quality partial disagreement, whereas nested Dirichlet has the lower value for missing language under noisy scene context and for high-confidence contradiction. Thresholds are transferred without refitting to the individual test subsets, and coverage consequently shifts across conditions. At the reported aggregate operating point, no wrong admission is observed for either method and both retain the same number of correct cases, while their ncsAURC values differ over the common coverage interval.

Across all 256 Sobol settings, the aggregate post-admission ordering remains unchanged. This parameter sensitivity does not remove the condition-specific reversals observed under the joint holdout. Appendix~\ref{sec:additional_experimental_details} reports the corresponding ranges and further sensitivity results.

% =======================================================================
% SECTION 6: Structural Transfer to Learned Predictions and HRC Admission
% =======================================================================
\section{Structural Transfer to Learned Predictions and HRC Admission}
\label{sec:external_evidence}

The studies on learned outputs evaluate changes in the evidence budget separately from accuracy, selective ordering, and admission.

% ====================================================
% SECTION 6.1: Multi-view provenance interventions
% ====================================================
\subsection{Structural transfer on learned multi-view predictions}
\label{subsec:public_feature_view_transfer}

On the learned outputs of TMC and RCML, exact copies assigned within their reference component preserve the budget, false refinement expands it, and all-view merging contracts it. Across the HandWritten and PIE dataset--model pairs, PACT fusion remains invariant to exact within-component copies through multiplicity eight, and every paired evidence-budget interval for false refinement and for all-view merging excludes zero. HandWritten and Scene15 also serve to test accuracy and ncsAURC, whereas PIE is used only for the budget analysis. Appendix~\ref{sec:additional_experimental_details} reports the evidence-budget result on PIE.

On the Scene15--TMC pair, false refinement expands the mean evidence budget to $3.32\times$ its original-view value, while all-view merging retains only $0.053\times$. Both assignments reduce accuracy and increase ncsAURC relative to original-view grouping. Since a different common-support interval is used for HandWritten, ncsAURC is interpreted within each dataset rather than compared across datasets. Appendix~\ref{subsec:scene15_tmc_details} reports the replication accuracy, the paired intervals, and the normalization sensitivity.

\begin{table}[pos=t]
\centering
\caption{Effects of provenance reassignment on HandWritten/Mfeat at multiplicity $m=8$.}
\label{tab:public_outcome_closure}
\begin{threeparttable}
\IFTableSetup
\begingroup
\footnotesize
\renewcommand{\arraystretch}{1.02}
\setlength{\tabcolsep}{2.4pt}
\begin{tabularx}{\linewidth}{@{}
>{\raggedright\arraybackslash}p{2.85cm}
>{\centering\arraybackslash}X
>{\centering\arraybackslash}X
>{\centering\arraybackslash}X@{}}
\toprule
Dataset--model
& $\Delta$ ncsAURC
& $\Delta$ posterior $\ell_1$
& $\Delta$ accuracy (pp) \\
\midrule
\multicolumn{4}{@{}l}{\textit{False refinement} \; (reference: within-component copies)} \\
HandWritten--TMC
& $+0.0049\;[0.0004,0.0108]$
& $+0.246\;[0.243,0.249]$
& $-0.37\;[-0.83,+0.03]$ \\
HandWritten--RCML
& $+0.0024\;[0.0008,0.0044]$
& $+0.214\;[0.209,0.220]$
& $-1.25\;[-1.91,-0.72]$ \\
\addlinespace[2pt]
\multicolumn{4}{@{}l}{\textit{All-view merge} \; (reference: original views)} \\
HandWritten--TMC
& $+0.4894\;[0.4643,0.5155]$
& $+0.900\;[0.876,0.924]$
& $-62.60\;[-65.80,-59.50]$ \\
HandWritten--RCML
& $+0.0182\;[0.0091,0.0286]$
& $+1.137\;[1.117,1.155]$
& $-7.05\;[-9.35,-5.00]$ \\
\bottomrule
\end{tabularx}
\endgroup
\begin{tablenotes}[flushleft]\footnotesize
\item[] \textit{Note.} The abbreviation pp denotes percentage points. Brackets give two-sided 95\% intervals from 2,000 class-stratified, record-paired bootstrap draws. Each contrast is intervention minus the stated reference. Posterior drift is the mean $\ell_1$ distance from the original-view posterior. Its $\Delta$ compares this drift across arms. ncsAURC uses common support $[0.10,0.90]$. Positive $\Delta$ indicates higher conditional risk. Reassignment may change the fused candidate, so the contrasts do not isolate numerical ordering.
\end{tablenotes}

\end{threeparttable}
\end{table}

Exact within-component copies leave accuracy on HandWritten at 98.0\% for TMC and 98.5\% for RCML. False refinement shifts the posterior by 0.21--0.25 in $\ell_1$ and increases ncsAURC for both models. The accuracy-change interval for TMC includes zero, whereas RCML loses 1.25 percentage points (95\% CI $[-1.91,-0.72]$, \Cref{tab:public_outcome_closure}). The structural response of the budget therefore appears even where the corresponding effect on accuracy is small or unresolved.

All-view merging reduces accuracy on HandWritten by 62.60 percentage points for TMC and by 7.05 points for RCML, notwithstanding the larger posterior drift observed for RCML. Across five realizations of the Scene15--TMC pair, merging reduces accuracy by 8.49 percentage points, as against 4.62 under false refinement. Predictive loss thus depends on the dataset and the model rather than on posterior displacement alone.

False refinement may improve probability scores while it worsens selective ordering. On HandWritten, false refinement relative to the reference grouping lowers both the macro-averaged NLL and the Brier score while increasing ncsAURC in four of the 12 replicated-source choices, six for each model. Some of these assignments also reduce ECE$_{10}$.

% ===========================================================
% SECTION 6.1.1: Exhaustive coarsening of original feature views
% ===========================================================
\subsubsection{Budget contraction and nonmonotone selective risk}
\label{subsec:partition_coarsening_surface}

\begin{figure}[pos=t]
\centering
\includegraphics[width=0.9\linewidth]{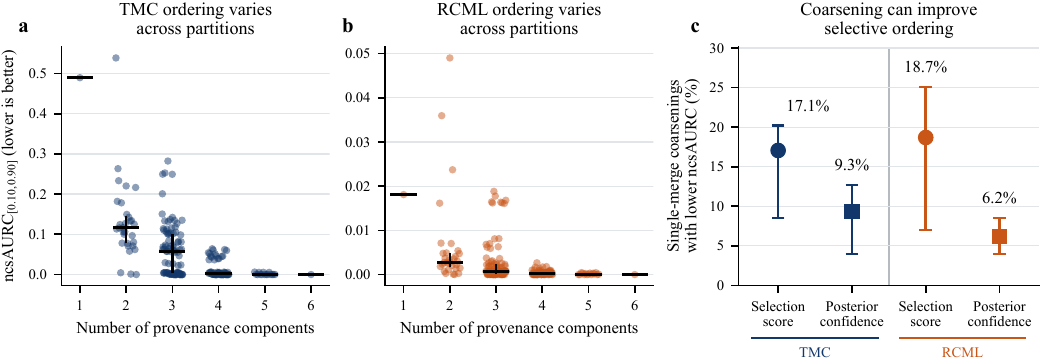}
\caption{Evidence-budget contraction does not determine selective risk under exhaustive coarsening of six-view HandWritten outputs. Panels (a,b) show selection-score ncsAURC across 203 partitions by component count for TMC and RCML, respectively, using different vertical scales. Panel (c) reports the percentage of the 856 single-merge relations for which coarsening lowers ncsAURC under the selection score and posterior confidence. Whiskers show 95\% class-stratified record-bootstrap intervals.}
\label{fig:partition_coarsening_surface}
\end{figure}

Across the 203 partitions of the six HandWritten views, all 856 single-merge relations satisfy the predicted budget monotonicity, yet the ncsAURC response is not monotone. Under the selection score, coarsening lowers ncsAURC for 17.1\% of these merges with TMC and for 18.7\% with RCML. Under posterior-confidence ranking, the corresponding proportions are 9.3\% and 6.2\% (\Cref{fig:partition_coarsening_surface}).

\FloatBarrier

% ============================================================
% SECTION 6.2: Camera-acquisition and model-family grouping
% ============================================================
\subsection{Camera-acquisition and model-family grouping}
\label{subsec:habit_checks}

The HABIT analyses~\citep{Song2026HABIT} use 1,128 paired pre-release events from 696 episodes across six tasks, together with the temporal windows defined in \Cref{subsec:external_transfer_design}. Prompt outputs derived from the same camera image form one acquisition component in the camera-grouping analysis. This observed identity of the acquisition defines the reference grouping at the granularity evaluated here and does not represent a structure of statistical independence. Model-family labels are evaluated separately as a surrogate grouping hypothesis against the empirical association of errors.

% ========================================
% SECTION 6.2.1: Camera-acquisition grouping
% ========================================
\subsubsection{Acquisition grouping changes budgets and candidates}
\label{subsec:native_view_fm_transfer}

At two prompts per camera, acquisition grouping yields lower ncsAURC than per-output counting under the selection score for both checkpoints. At four prompts, the per-output-minus-acquisition contrast remains positive for Qwen3-VL-8B at 0.120 (95\% CI $[0.089,0.150]$), whereas the interval for Qwen3-VL-32B includes zero. For Qwen3-VL-8B, posterior-confidence ranking reverses the relative ordering of acquisition grouping and per-output counting. The consequence of counting for ncsAURC therefore depends on both the checkpoint and the score (\Cref{fig:native_view_fm_transfer}).

The accuracy response also changes sign across checkpoints. Relative to per-output counting, acquisition grouping improves the accuracy of Qwen3-VL-32B by 3.16 percentage points but lowers that of Qwen3-VL-8B by 3.14 points. All-view merging lowers accuracy relative to acquisition grouping for both checkpoints, yet selection-score ncsAURC changes only from 0.226 to 0.233 for the 32B checkpoint and from 0.256 to 0.257 for the 8B checkpoint (\Cref{tab:native_view_fm_transfer}). Accuracy and selective ordering may therefore respond differently to the same counting assignment.

At four prompts, equal-cardinality shuffled assignments retain less evidence than acquisition grouping for both checkpoints. Component size alone therefore does not explain this budget contrast. The posterior-confidence and equal-cardinality-shuffling analyses are post-hoc.

Unlike the rule-specific scores in \Cref{tab:native_view_fm_transfer}, the common-score comparison fixes the acquisition-grouped numerical mass scores across both counting rules. For Qwen3-VL-32B, the corresponding ncsAURC values are 0.226 and 0.247, which gives a per-output-minus-acquisition difference of 0.021 (95\% paired episode-bootstrap CI $[0.0015,0.0449]$). For Qwen3-VL-8B, the values are 0.256 and 0.250, which gives a difference of $-0.006$ (95\% CI $[-0.0134,-0.0004]$). Holding the numerical ranking fixed does not remove the checkpoint-dependent difference in the construction of candidates.

\begin{figure}[pos=t]
\centering
\includegraphics[width=0.9\textwidth]{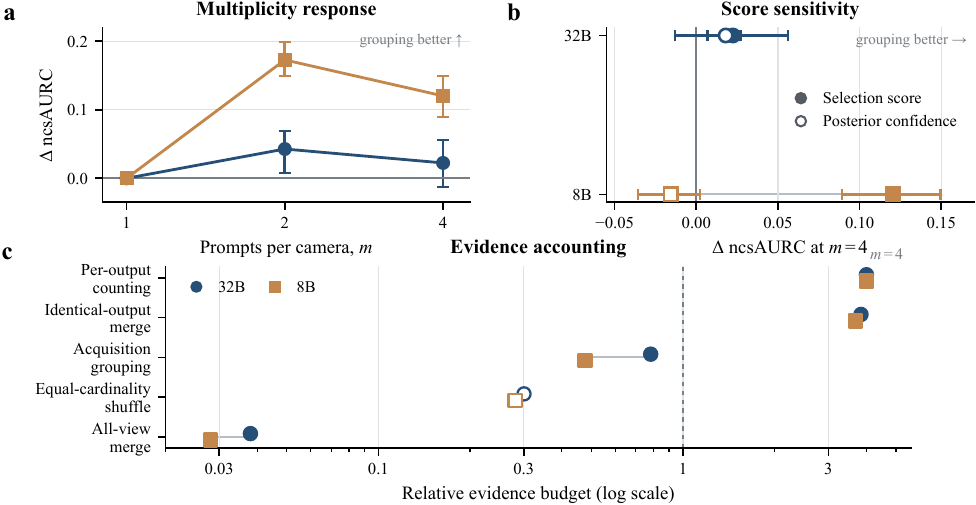}
\caption{Acquisition grouping changes retained evidence, while its downstream consequences depend on checkpoint and score across 1,128 HABIT events. (a) Difference in ncsAURC between per-output counting and acquisition grouping as prompt multiplicity increases. (b) At four prompts, the same contrast under the selection score (filled) and posterior confidence (open). (c) Evidence budgets under alternative counting rules relative to one output per camera. Hollow markers denote equal-cardinality shuffled groupings. Whiskers in panels (a,b) are 95\% paired episode-bootstrap intervals. Positive differences in panels (a,b) favor acquisition grouping.}
\label{fig:native_view_fm_transfer}
\end{figure}

% ========================================
% SECTION 6.2.2: Model-family grouping
% ========================================
\subsubsection{Error association does not uniquely identify model-family grouping}
\label{subsec:checkpoint_family_grouping}

The supplied model-family grouping changes the retained budget, but observed error associations do not single it out among the tested pairings. Appendix~\ref{sec:additional_experimental_details} gives the operator comparison.

\begin{figure}[pos=t]
\centering
\includegraphics[width=0.9\textwidth]{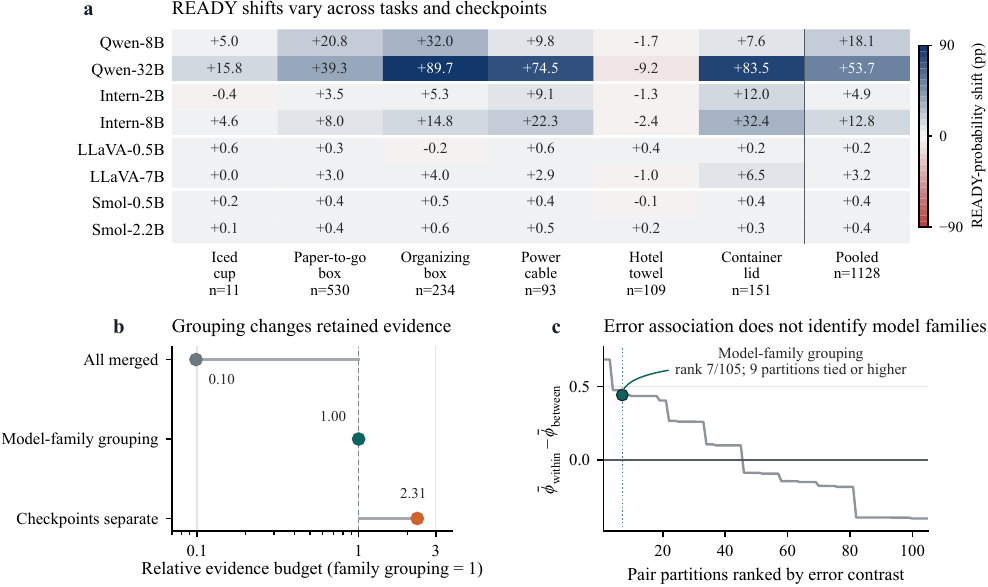}
\caption{Observed error association does not uniquely recover model-family grouping. (a) Early-to-event-proximal change in mean \textsc{ready} probability across eight checkpoints and six HABIT tasks. Here $n$ denotes paired events. (b) Evidence budgets under checkpoint merging and separation relative to model-family grouping. (c) Within-minus-between error association for all 105 pair partitions under pooled event-window weighting. Model-family pairing ranks seventh, with nine pairings tied or higher.}
\label{fig:balanced_fm_panel}
\end{figure}

Within the supplied model-family grouping, exact copies leave the evidence budget and the posterior unchanged. Separating checkpoints increases the mean event-window budget to $2.306\times$ the value obtained under model-family grouping, and accuracy is left unchanged. Merging all checkpoints reduces the mean budget ratio to 0.099 and lowers accuracy from 86.6\% to 76.3\%. Appendix~\ref{sec:additional_experimental_details} reports the corresponding posterior drifts.

Model-family pairing ranks seventh among 105 pairings, with nine tied or higher under pooled event-window weighting (\Cref{fig:balanced_fm_panel}). Alternative weightings and post-hoc sensitivity are reported in Appendix~\ref{sec:additional_experimental_details}.

% ========================================================
% SECTION 6.3: HRC admission with complementary evidence
% ========================================================
\subsection{HRC admission with complementary evidence}
\label{subsec:habit_target_temporal_transfer}

The HRC analyses use 60 evaluation episodes that do not overlap with the development set. No evidence-mass cutoff is applied at admission. Candidate evidence, target identity, and temporal evidence form complementary release requirements, and in the learned event-proximity analysis the candidate evidence and the temporal signal share camera acquisitions. A candidate is termed \emph{reference-consistent} when it agrees with the dataset reference action and \emph{reference-inconsistent} otherwise.

For Panel A of \Cref{tab:habit_target_temporal_summary}, the learned event-proximity model is trained on development episodes drawn from all six task categories and is evaluated on unseen episodes from those same categories. The separate leave-one-task-out comparison, which excludes the evaluated task from fitting, is reported in Appendix~\ref{sec:additional_experimental_details}.

\begin{table}[pos=htbp]
\centering
\caption{Offline HRC admission on 60 evaluation episodes that do not overlap with the development set.}
\label{tab:habit_target_temporal_summary}
\begin{threeparttable}
\IFTableSetup
\begingroup
\footnotesize
\renewcommand{\arraystretch}{1.00}
\setlength{\tabcolsep}{2.7pt}
\begin{tabular*}{0.96\linewidth}{@{}l@{\extracolsep{\fill}}*{4}{S[table-format=2.0]}@{}}
\toprule
\multicolumn{5}{@{}l}{\textbf{Panel A.} Admission requirements with learned event proximity (Qwen3-VL-8B)} \\
Admission evidence
& \multicolumn{3}{c}{Reference-inconsistent admissions}
& {\makecell[c]{Reference-consistent\\admissions}} \\
\cmidrule(lr){2-4}
& {Off-target}
& {\makecell[c]{On-target\\early-window}}
& {Total}
& {} \\
\midrule
Candidate evidence only    & 66 & 25 & 91 & 53 \\
Target identity            & 0  & 25 & 25 & 53 \\
Event proximity            & 53 & 1  & 54 & 51 \\
\textbf{Target + event proximity} & 0 & 1 & 1 & 51 \\
\midrule
\multicolumn{5}{@{}l}{\textbf{Panel B.} Counting rules with target identity and strict positive change} \\
& \multicolumn{2}{c}{Qwen3-VL-32B}
& \multicolumn{2}{c}{Qwen3-VL-8B} \\
\cmidrule(lr){2-3}\cmidrule(lr){4-5}
Counting rule
& {Consistent}
& {Inconsistent}
& {Consistent}
& {Inconsistent} \\
\midrule
\textbf{Camera-grouped PACT} & 47 & 0 & 43 & 0 \\
Per-output counting          & 48 & 0 & 42 & 0 \\
One output per camera        & 45 & 0 & 44 & 0 \\
\bottomrule
\end{tabular*}
\endgroup

\begin{tablenotes}[flushleft]\footnotesize
\item[] \textit{Note.} Reference-consistent admissions are counted among 60 reference-\textsc{ready} episode targets. Reference-inconsistent admissions are counted across 720 queries per checkpoint. Episodes, not queries, are the resampling unit. Panel A uses learned event proximity. Panel B fixes target identity and the strict positive-change temporal criterion. Consistency denotes agreement with the dataset reference action, not task success. Zero denotes no observed reference-inconsistent admission in this finite evaluation, not a population guarantee. The evidence-mass score is not a release threshold.
\end{tablenotes}

\end{threeparttable}
\end{table}

For Qwen3-VL-8B in Panel A, target identity removes all 66 observed off-target admissions without any reduction in the 53 reference-consistent admissions. The learned event-proximity requirement reduces on-target early-window admissions from 25 to one while 51 of those 53 reference-consistent cases are retained. For reference-inconsistent admissions, the joint-minus-candidate-only difference in rate is $-12.50$ percentage points (95\% CI $[-13.75,-11.25]$). The comparison uses episode-stratified, equal-task weighting, with episodes as the resampling unit. Appendix~\ref{sec:additional_experimental_details} gives the counts and the rates.

Panel B instead adopts a strict increase in the fused \textsc{ready} posterior from the early to the proximal window as its temporal requirement. Across its matched counting arms, candidate selection and target identity are fixed, while the counting rule determines the fused temporal posterior on which this requirement operates.

With camera-acquisition grouping, PACT admits 47 of 57 reference-consistent candidates from Qwen3-VL-32B and 43 of 53 reference-consistent candidates from Qwen3-VL-8B. At the reported operating point, camera-grouped PACT, per-output counting, and one-output-per-camera counting each yield no observed reference-inconsistent admission across 720 queries for either Qwen3-VL checkpoint. Their retention ordering nevertheless varies across checkpoints (\Cref{tab:habit_target_temporal_summary}).

Under the binary equal-mass mapping, with four prompt variants per camera and all other admission inputs held fixed, duplicating each prompt output within its camera from multiplicity one to eight leaves the budget, the posterior, the temporal comparison, and all 720 typed responses per checkpoint unchanged.

Per-output counting may instead amplify the evidence budget under duplication. This increase alone does not determine the typed response, since release depends on the temporal comparison rather than on an evidence-mass cutoff (Appendix~\ref{sec:additional_experimental_details}).

Retention loss is concentrated in the Hotel Towel task, where camera-grouped PACT admits none of the ten reference-\textsc{ready} episodes at either checkpoint. The temporal requirement rejects all ten candidates from Qwen3-VL-32B and the nine candidates from Qwen3-VL-8B that pass candidate selection. Appendix~\ref{sec:additional_experimental_details} gives the results for the counting arms. These counts identify where retention is lost but do not establish whether withholding was physically necessary.

% =======================
% SECTION 7: Discussion
% =======================
\section{Discussion}
\label{sec:discussion}

% =================================================================
% SECTION 7.1: Provenance partition as an evidence-counting variable
% =================================================================
\subsection{Evidence countability as a structural fusion variable}
\label{subsec:discussion_if_implications}

Insertion non-amplification prevents an additional output from raising the retained budget of an existing component, and this holds even where the output differs from the existing members and carries predictive value. Predictive ordering depends in addition on the data and on the selection score.

Some reassignments relative to the reference grouping improve proper scores, and in certain cases ECE$_{10}$, while selective performance worsens (\Cref{subsec:public_feature_view_transfer}). The quality of the posterior alone therefore cannot justify a decision to treat replicated outputs as separately countable where the reference grouping places them together.

% ==========================================================================
% SECTION 7.2: Implications for multimodal foundation models and embodied systems
% ==========================================================================
\subsection{Implications for multimodal foundation models and embodied systems}
\label{subsec:discussion_foundation_models}

Repeated inference with foundation models can generate further outputs from the same acquired observations. Self-consistency samples multiple reasoning paths from a single input~\citep{Wang2023SelfConsistency}, and related language models may share correlated errors~\citep{Kim2025CorrelatedErrors}. In the camera studies, prompt outputs obtained from the same image inherit a common acquisition parent. Model-family labels provide a different, surrogate grouping hypothesis, yet the observed associations of errors do not uniquely recover that grouping (\Cref{subsec:checkpoint_family_grouping}). Agreement alone does not identify countability, and the counting relation therefore requires acquisition records or some other justification.

Selective retention, typed admission, and physical safety control address different decisions. Admission requirements supplement predictive selection. Execution still requires physical safeguards.

Temporal requirements likewise determine when evidence can support admission. Absolute event-proximity scores shift across tasks, while within-episode change compares a later observation with its own earlier reference. A relative-change criterion cannot support the earlier decision, since its second observation does not yet exist. Sequential admission must therefore respect both which evidence may be counted separately and when that evidence becomes available.

\paragraph{Consequences of partition coarsening.}
For a fixed source index set, evidence collection, $W$, base eligibility, and threshold $\tau$, \Cref{cor:pact_partition_monotonicity} gives $B_{\Pi_2}\leq B_{\Pi_1}$ and $s_{\Pi_2}\leq s_{\Pi_1}$ under coarsening. The coarser partition therefore retains a subset of the instances retained by the finer partition at that threshold. With the candidate $k^*$ and the observed sources fixed, coarsening also cannot increase the number of complete supporting components, since every complete supporting coarse block contains a complete supporting fine block. These bounds on the budget and on the count place no constraint on conditional risk or on physical safety. Refitting the threshold can break the set inclusion, and a change of candidate can change admission. The score-retention result does not apply to the HRC studies, in which no mass cutoff is used.

% ============================================================
% SECTION 7.3: Scope, limitations, and research directions
% ============================================================
\subsection{Scope, limitations, and research directions}
\label{subsec:limitations}

The HRC study measures agreement with dataset reference actions rather than physical task success. Observing no reference-inconsistent admission at a particular operating point does not imply zero population risk. Retention remains dependent on the task, as illustrated by the Hotel Towel failures in \Cref{subsec:habit_target_temporal_transfer}. Studies involving robot execution and human participants would be needed to assess task success, the necessity of withholding, and the outcomes of physical interaction.

Missing-source conventions change how fusion operators combine the same zero-evidence vector. These conventions, together with the evidence scales, form part of the fusion method.

The experiments use constructed provenance in the benchmark and observed acquisition identity in the camera studies. Where provenance is only partly observed, acquisition records and other information about the derivation of sources may assist in inferring the grouping. Evidence scales require validation for each application, and admission costs need to be elicited and validated for both the release and the withholding of actions.
\Needspace{8\baselineskip}

% =========================================
% SECTION 8: Conclusion
% =========================================
\section{Conclusion}
\label{sec:conclusion}

Evidence countability is made explicit in PACT through a supplied provenance partition. Under singleton fidelity and insertion non-amplification, the coordinatewise meet is the unique pointwise greatest admissible within-component rule. Component budgets are added across units supplied as separately countable, under the commensurability and additivity assumptions in \Cref{assump:commensurate_evidence,assump:separate_component_additivity}. The cumulative accounting rule is then invariant under exact-copy insertion and satisfies partition-coarsening monotonicity.

The experiments indicate why predictive quality alone cannot validate a counting assignment. Some false-refinement assignments relative to the reference grouping improve NLL and the Brier score and, in certain cases, ECE$_{10}$, while selective ordering is worsened. In the offline HRC study, exact within-camera duplication leaves the typed responses unchanged whenever all other admission inputs are held fixed. Predictive and admission outcomes nevertheless depend on the model, the selection score, the missing-source convention, and the release conditions. Repeated computation may refine a candidate without thereby establishing a further evidential origin.

\FloatBarrier
\appendix

% =====================================================
% APPENDIX A: Proofs and Additional Stability Results
% =====================================================

\section{Proofs and Additional Stability Results}
\label{sec:proofs}

% ==================
% APPENDIX A.1: Notation
% ==================
\subsection{Notation}
\Cref{tab:notation} summarizes the symbols used in the main text and in the appendix.

\begin{table}[pos=t]
\centering
\caption{Notation for PACT fusion and typed admission.}
\label{tab:notation}
\IFTableSetup
\begingroup
\footnotesize
\renewcommand{\arraystretch}{1.00}
\setlength{\tabcolsep}{2.4pt}
\begin{tabularx}{\linewidth}{@{}>{\raggedright\arraybackslash}p{2.25cm}>{\raggedright\arraybackslash}X>{\raggedright\arraybackslash}p{2.35cm}>{\raggedright\arraybackslash}X@{}}
\toprule
\multicolumn{2}{c}{Source and evidence} & \multicolumn{2}{c}{Fusion and decision} \\
\cmidrule(lr){1-2}\cmidrule(lr){3-4}
\multicolumn{1}{c}{Symbol} & \multicolumn{1}{c}{Meaning} & \multicolumn{1}{c}{Symbol} & \multicolumn{1}{c}{Meaning} \\
\midrule
$\mathcal K$ & finite set of $K\geq2$ action contracts with a fixed tie-breaking order & $\mathbf b_C$ & component evidence budget, the meet of member evidence vectors \\
$x$ & decision instance & $\mathbf E_{\Pi}$ & cumulative evidence under partition $\Pi$ \\
$J$ & size of the expected source set & $B_{\Pi}$ & total budget $\lVert\mathbf E_{\Pi}\rVert_1$ \\
$\xi_i$, $\Phi_i$ & source representation and its evidence adapter & $\boldsymbol\pi$ & projected categorical posterior \\
$\mathbf p_i$ & categorical opinion or zero vector & $\widehat k$ & predicted contract \\
$q_i$, $d_i$ & source quality and conflict scores & $u$, $s$ & vacuity and selection score $s=1-u$ \\
$z_i$, $v_i$ & unavailability and required-field numerical-validity indicators & $F_{\Pi}$ & PACT fusion map \\
$P_i(x)$ & provenance parent set & $\tau$ & score-selection threshold \\
$\mathcal I_{\mathrm{obs}}$, $\mathcal I_k$ & observed sources and sources supporting $k$ & $\nu$, $\mathcal U_3$ & required number of complete supporting components and the indicator that all three expected sources are observed and support the selected candidate \\
$G_{\mathrm{prov}}$, $\mathcal E_P$, $\Pi_P$ & parent-overlap graph, its edge set, and induced component partition & $\zeta_{\mathrm{base}}$ & eligibility condition \\
$\rho_i$, $\kappa_i$ & reliability factor and source scale & $\mathcal A_{\tau}$, $\mathcal V$ & score-selection and admission maps \\
$\mathbf e_i$ & nonnegative evidence vector of source $i$ & $y^{(0)}$ & preliminary selection decision \\
$W$, $\mathbf a$ & prior strength and base rate & $\sigma^{(0)}$, $\varrho^{(0)}$ & preliminary selection status and typed response \\
$\preceq$, $\wedge$ & coordinatewise order and meet & \makecell[l]{$f_{\mathrm{cur}},f_{\mathrm{struct}},$\\$f_{\mathrm{risk}},f_{\mathrm{prov}}$} & failure indicators for command consistency, source validity, risk support, and component corroboration \\
\bottomrule
\end{tabularx}
\endgroup
\end{table}

% =============================================
% APPENDIX A.2: Proofs of the main-text propositions
% =============================================
\subsection{Proofs of the main-text propositions}

% ==================================================
% APPENDIX A.2.1: Auxiliary definitions and decision maps
% ==================================================
\subsubsection{Auxiliary definitions and decision maps}

For the benchmark evaluation, the reliability factor is
\begin{equation}
\rho_i
:=
[q_i]_0^1\bigl(1-[d_i]_0^1\bigr),
\qquad
[a]_0^1:=\min\{1,\max\{0,a\}\}.
\label{eq:pact_reliability}
\end{equation}
The reliability mapping in \Cref{eq:pact_reliability} is fixed. The shared fusion concentration and the source-specific scaling used only in the sensitivity analysis are selected on training data and are then transferred to the held-out instances.

\begin{definition}[Provenance-aware evidence conservation]
\label{def:provenance_evidence_conservation}
Consider any extension of the source set that preserves all existing evidence vectors and parent assignments. A fusion operator satisfies provenance-aware evidence conservation when two conditions hold for every such extension. Adding a source that joins the existing provenance structure cannot increase cumulative evidence coordinatewise, while adding an exact duplicate with matching parents preserves cumulative evidence. A source that forms a disconnected component may contribute through cross-component accumulation.
\end{definition}

For $s\in[0,1]$ and $\tau\in[0,1]\cup\{+\infty\}$, the exact preliminary selection map is
\begin{equation}
\mathcal A_{\tau}
\bigl(\widehat k,s,\zeta_{\mathrm{base}}\bigr)
:=
\begin{cases}
(\mathrm{selected},\widehat k,\mathrm{none}),
&
s\geq\tau \ \land\ \zeta_{\mathrm{base}}=1,
\\
(\mathrm{withheld},\widehat k,\mathrm{hold}),
&
\text{otherwise}.
\end{cases}
\label{eq:pact_preliminary_selection}
\end{equation}

Write the preliminary decision as $y^{(0)}=(\sigma^{(0)},\widehat k,\varrho^{(0)})$, where $\sigma^{(0)}$ is the selection status and $\varrho^{(0)}$ is the typed response. The indicators $f_{\mathrm{cur}}$, $f_{\mathrm{struct}}$, $f_{\mathrm{risk}}$, and $f_{\mathrm{prov}}$ correspond, respectively, to failures of command consistency, source validity, risk support, and component corroboration in \Cref{tab:typed_admission_policy}. Each equals one exactly when its associated condition fails. The typed admission map uses the first applicable case in the order shown.
\begin{equation}
\mathcal V(y^{(0)},x)
:=
\begin{cases}
y^{(0)},
&
\sigma^{(0)}\neq\mathrm{selected},
\\
(\mathrm{withheld},\widehat k,\mathrm{hold}),
&
f_{\mathrm{cur}}(x)=1,
\\
(\mathrm{withheld},\widehat k,\mathrm{fallback}),
&
f_{\mathrm{struct}}(x)=1,
\\
(\mathrm{withheld},\widehat k,\mathrm{hold}),
&
f_{\mathrm{risk}}(x)=1,
\\
(\mathrm{withheld},\widehat k,\mathrm{confirm}),
&
f_{\mathrm{prov}}(\widehat k;x)=1,
\\
(\mathrm{admitted},\widehat k,\mathrm{none}),
&
\text{otherwise}.
\end{cases}
\label{eq:pact_typed_verifier}
\end{equation}

If an exact within-component copy inherits the original evidence vector and parent assignment without altering base eligibility or any remaining admission condition, then $\mathbf E_{\Pi}$, $\boldsymbol\pi$, $s$, $\widehat k$, $y^{(0)}$, and the final decision $Y(x)$ all remain unchanged.

Let $P_{\Sigma}=\sum_i|P_i(x)|$ and $P_{\max}=\max_i|P_i(x)|$. Reading and hash-indexing the parent sets requires expected $O(P_{\Sigma})$ time and $O(P_{\Sigma})$ space. With expected $O(1)$ time per hash-set membership query, overlap discovery and aggregation require expected
\[
O\!\left(P_{\Sigma}+\sum_{i<j}\min\{|P_i(x)|,|P_j(x)|\}+JK\right)
\]
time and $O(J^2+JK+JP_{\max})$ space. Once overlap discovery is complete, storing only the discovered edges reduces connected-component traversal and evidence aggregation to $O(J+|\mathcal E_P(x)|+JK)$ time and space. The bounded-cardinality and sparse-graph reductions stated in \Cref{subsec:selection_properties} follow directly.

% ==================
% APPENDIX A.2.2: Proofs
% ==================
\subsubsection{Proofs}

\begin{proof}[Proof of \Cref{prop:opinion_only_indistinguishability}]
Both inputs present $\{\boldsymbol\alpha,\boldsymbol\alpha\}\in\mathcal M(\mathbb A)$ to $H$ and therefore yield the same output.
\end{proof}

\begin{proof}[Proof of \Cref{prop:pact_within_component_characterization}]
Fix $\mathbf v\in\mathcal E$. Starting from $\{\mathbf v\}$ and inserting all remaining occurrences yields $g(\mathcal E)\preceq\mathbf v$. Since $\mathbf v$ was arbitrary, $g(\mathcal E)\preceq\bigwedge_{\mathbf v\in\mathcal E}\mathbf v$. The meet itself satisfies both requirements and is therefore the unique pointwise greatest admissible rule.
\end{proof}

\begin{proof}[Proof of \Cref{prop:pact_common_budget}]
For each component $C$, admissibility gives $\mathbf g_C\preceq\mathbf b_C$, and $\mathbf b_C$ itself is admissible. Summing over the components proves greatestness. If an inserted source connects a family $\mathcal C_{\mathrm{join}}$ of existing components, their previous contribution is $\sum_{C\in\mathcal C_{\mathrm{join}}}\mathbf b_C$. The budget of the new merged component is the meet of the evidence vectors of the inserted source and of all members of these components. It is coordinatewise no greater than any $\mathbf b_C$ for $C\in\mathcal C_{\mathrm{join}}$, and therefore no greater than their nonnegative sum. An exact copy with matching parents remains in the original component. Every source sharing one of those parents was already connected to the original source, so the copy cannot connect previously distinct components, and its duplicate vector preserves the meet. A disconnected component contributes the nonnegative term $\mathbf b_D$.
\end{proof}

\begin{proof}[Proof of \Cref{cor:pact_partition_monotonicity}]
Each block of $\Pi_2$ is a union of blocks of $\Pi_1$. Its coordinatewise meet is no greater than the meet of any constituent fine block and therefore no greater than their nonnegative sum. Summing over the disjoint coarse blocks yields \eqref{eq:pact_partition_monotonicity}.
\end{proof}

% ===============================================
% APPENDIX A.3: Stability under a given partition
% ===============================================
\subsection{Stability under a given partition}

\begin{theorem}[Distribution and score stability under a given partition]
\label{thm:pact_partition_stability}
Fix the source index set, provenance partition $\Pi_P$, prior strength $W>0$, and base rate $\mathbf a$. For two evidence collections $\{\mathbf e_i\}$ and $\{\mathbf e_i'\}$, define
\begin{equation}
\Delta_{\Pi}
=
\sum_{C\in\Pi_P}
\sum_{k=1}^{K}
\max_{i\in C}
|e_{ik}'-e_{ik}|.
\label{eq:pact_stability_delta}
\end{equation}
Let $Z=W+\lVert\mathbf E_{\Pi}\rVert_1$ and $Z'=W+\lVert\mathbf E_{\Pi}'\rVert_1$. Then
\begin{equation}
\lVert\mathbf E_{\Pi}'-\mathbf E_{\Pi}\rVert_1
\leq
\Delta_{\Pi},
\qquad
\lVert\boldsymbol\pi'-\boldsymbol\pi\rVert_1
\leq
\frac{2\Delta_{\Pi}}{\max\{Z,Z'\}}
\leq
\frac{2\Delta_{\Pi}}{W},
\label{eq:pact_distribution_stability}
\end{equation}
and
\begin{equation}
|s'-s|
\leq
\frac{W\Delta_{\Pi}}{ZZ'}
\leq
\frac{\Delta_{\Pi}}{W}.
\label{eq:pact_score_stability}
\end{equation}
\end{theorem}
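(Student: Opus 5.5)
Three bounds are needed: one on the cumulative evidence, one on the posterior, and one on the score. I would prove them in that order, since each later step uses only the quantity $D:=\lVert\mathbf E_{\Pi}'-\mathbf E_{\Pi}\rVert_1$ and the normalizers $Z,Z'$.

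\textbf{Step 1 (budget).} The argument is coordinatewise per component. For fixed $C$ and $k$, the map $(e_{ik})_{i\in C}\mapsto\min_{i\in C}e_{ik}$ is $1$-Lipschitz in the sup norm. If $e_{ik}'\le e_{ik}+\delta$ for all $i\in C$, then $\min_i e_{ik}'\le\min_i e_{ik}+\delta$, and the symmetric inequality also holds. Hence
\[
|(\mathbf b_C')_k-(\mathbf b_C)_k|\le\max_{i\in C}|e_{ik}'-e_{ik}|.
\]
Summing over $k$ and $C$, and applying the triangle inequality to $\mathbf E_{\Pi}=\sum_C\mathbf b_C$, gives $D\le\Delta_\Pi$. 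Note that $\Delta_\Pi$ keeps the per-component maximum inside the sum over $k$, so no further slack is needed.

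\textbf{Step 2 (posterior).} Write $\boldsymbol\pi=(W\mathbf a+\mathbf E)/Z$ and $\boldsymbol\pi'=(W\mathbf a+\mathbf E')/Z'$, where $\mathbf v:=W\mathbf a+\mathbf E$ and $\mathbf v':=W\mathbf a+\mathbf E'$ are nonnegative with $\lVert\mathbf v\rVert_1=Z$ and $\lVert\mathbf v'\rVert_1=Z'$. Assume without loss of generality that $Z'\ge Z$. Decompose
\[
\boldsymbol\pi'-\boldsymbol\pi=\frac{\mathbf v'-\mathbf v}{Z'}+\mathbf v\Bigl(\frac1{Z'}-\frac1Z\Bigr).
\]
The first term has $\ell_1$ norm $D/Z'$. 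The second has norm $Z\cdot|Z'-Z|/(ZZ')=|Z'-Z|/Z'$. Since $|Z'-Z|=\bigl|\lVert\mathbf E'\rVert_1-\lVert\mathbf E\rVert_1\bigr|\le D$, the total is at most $2D/Z'=2D/\max\{Z,Z'\}$. If instead $Z\ge Z'$, run the same decomposition with the roles of the primed and unprimed quantities swapped. Finally, $\max\{Z,Z'\}\ge W$ because the budgets are nonnegative, and combining with Step 1 gives the stated chain.

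\textbf{Step 3 (score).} Since $s=1-W/Z$,
\[
s'-s=W\Bigl(\frac1Z-\frac1{Z'}\Bigr)=\frac{W(Z'-Z)}{ZZ'},
\]
so $|s'-s|\le W D/(ZZ')\le W\Delta_\Pi/(ZZ')$. Using $Z,Z'\ge W$ gives $W/(ZZ')\le 1/W$.

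\textbf{Main obstacle.} The only delicate point is Step 2, where the bound must have $\max\{Z,Z'\}$ in the denominator rather than $\min\{Z,Z'\}$. This depends on normalizing by the larger of the two constants in the decomposition. I would check that choice explicitly. The Lipschitz property of the minimum in Step 1 is standard, and Step 3 is routine.
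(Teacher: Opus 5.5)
Your proposal is correct and follows the paper's proof essentially step for step. Both arguments use the same three moves: coordinatewise nonexpansiveness of the minimum summed over components for the budget bound; the numerator/normalizer decomposition normalized by the larger of $Z,Z'$ for the posterior bound (you assume $Z'\ge Z$ where the paper assumes $Z\ge Z'$, which is only a relabeling); and $s=1-W/Z$ together with $|Z'-Z|\le\Delta_{\Pi}$ for the score bound.
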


\begin{proof}
Within each component, the scalar minimum in each coordinate is nonexpansive under perturbations of its arguments, and summing these coordinatewise bounds gives the first inequality. Let $\mathbf x=W\mathbf a+\mathbf E_{\Pi}$ and $\mathbf x'=W\mathbf a+\mathbf E_{\Pi}'$. Since $\mathbf a$ is a base-rate vector and all evidence vectors are nonnegative, the coordinate sums of $\mathbf x$ and $\mathbf x'$ are $Z$ and $Z'$, respectively. The difference between these numerators equals the difference in cumulative evidence, so its norm is bounded by $\Delta_{\Pi}$. The reverse triangle inequality gives $|Z-Z'|\leq\sum_{k\in\mathcal K}|x_k-x_k'|$. If $Z\geq Z'$, then
\begin{equation}
\left\lVert
\frac{\mathbf x}{Z}
-
\frac{\mathbf x'}{Z'}
\right\rVert_1
\leq
\frac{\lVert\mathbf x-\mathbf x'\rVert_1}{Z}
+
\frac{|Z-Z'|}{Z}
\leq
\frac{2\lVert\mathbf x-\mathbf x'\rVert_1}{Z}.
\end{equation}
The case $Z'\geq Z$ follows symmetrically. Writing $E_{\mathrm{tot}}=\lVert\mathbf E_{\Pi}\rVert_1$ gives $s=E_{\mathrm{tot}}/(W+E_{\mathrm{tot}})$ and $|s'-s|=W|E'_{\mathrm{tot}}-E_{\mathrm{tot}}|/[(W+E_{\mathrm{tot}})(W+E'_{\mathrm{tot}})]$. The first evidence bound gives $|E'_{\mathrm{tot}}-E_{\mathrm{tot}}|\leq\Delta_{\Pi}$, which yields the stated score bound.
\end{proof}

With the source set and the provenance partition fixed, the posterior and the selection score satisfy Lipschitz bounds for perturbations measured by $\Delta_{\Pi}$, with global constants $2/W$ and $1/W$, respectively. These global constants decrease as the prior strength $W$ increases, although the sharper score bound above need not be monotone in $W$.

For $\mathbf x\in\mathbb R^K$, let $(\mathbf x)_+$ denote its coordinatewise positive part, with $[(\mathbf x)_+]_k=\max\{x_k,0\}$.

\begin{corollary}[Within-component near-copy insertion]
\label{cor:pact_near_copy}
Insert $\mathbf e_{+}$ into an existing component $C$ without merging previously distinct components, and define $\varepsilon_{+}=\lVert(\mathbf b_C-\mathbf e_{+})_{+}\rVert_1$. Then
\begin{equation}
\mathbf E_{\Pi}^{+}
=
\mathbf E_{\Pi}
-
(\mathbf b_C-\mathbf e_{+})_{+},
\qquad
\lVert\boldsymbol\pi^{+}-\boldsymbol\pi\rVert_1
\leq
\frac{2\varepsilon_{+}}{W},
\qquad
s^{+}\leq s.
\label{eq:pact_near_copy_bound}
\end{equation}
If $\lVert\mathbf e_{+}-\mathbf e_j\rVert_1\leq\epsilon$ for some $j\in C$, then $\varepsilon_{+}\leq\epsilon$.
\end{corollary}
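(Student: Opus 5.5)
The plan is to compute the new component budget exactly and then feed the resulting evidence change into \Cref{thm:pact_partition_stability}. Since the insertion merges no previously distinct components, the partition changes only by enlarging $C$ to $C^{+}=C\cup\{+\}$; every other component and its budget $\mathbf b_{C'}$ stays the same. Hence $\mathbf E_{\Pi}^{+}-\mathbf E_{\Pi}=\mathbf b_{C^{+}}-\mathbf b_C$. Coordinatewise, $(\mathbf b_{C^{+}})_k=\min\{(\mathbf b_C)_k,e_{+k}\}$. The scalar identity $\min\{b,e\}=b-\max\{b-e,0\}$ then gives $\mathbf b_{C^{+}}=\mathbf b_C-(\mathbf b_C-\mathbf e_{+})_{+}$, which is the first equality in \eqref{eq:pact_near_copy_bound}.

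For the posterior bound, I would not reuse $\Delta_{\Pi}$ directly. That quantity compares two collections indexed by the same sources, whereas here a source is added. Instead, I would repeat the final step of the proof of \Cref{thm:pact_partition_stability} using the exact difference $\mathbf x^{+}-\mathbf x=-(\mathbf b_C-\mathbf e_{+})_{+}$, whose $\ell_1$ norm is $\varepsilon_{+}$. The normalization argument there gives $\lVert\boldsymbol\pi^{+}-\boldsymbol\pi\rVert_1\le 2\lVert\mathbf x^{+}-\mathbf x\rVert_1/\max\{Z,Z^{+}\}\le 2\varepsilon_{+}/W$. Alternatively, one can pad the original collection with a dummy source carrying $\mathbf e_{+}$ but placed outside $C$ and contributing nothing. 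That is more awkward, so the direct computation is preferable.

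Monotonicity of the score follows because $B_{\Pi}^{+}=B_{\Pi}-\varepsilon_{+}\le B_{\Pi}$ and $s=B/(W+B)$ is increasing in $B$. This is also consistent with \Cref{prop:pact_common_budget}.

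For the final claim, suppose $\lVert\mathbf e_{+}-\mathbf e_j\rVert_1\le\epsilon$ with $j\in C$. Since $\mathbf b_C\preceq\mathbf e_j$ and $x\mapsto(x)_+$ is monotone, $(\mathbf b_C-\mathbf e_{+})_{+}\preceq(\mathbf e_j-\mathbf e_{+})_{+}\preceq|\mathbf e_j-\mathbf e_{+}|$ coordinatewise. Summing over coordinates gives $\varepsilon_{+}\le\epsilon$.

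The only delicate point is justifying that the posterior bound applies although the source set changes. I will handle it by working directly with the cumulative evidence vectors, since the normalization argument of \Cref{thm:pact_partition_stability} only uses $\mathbf x$ and $\mathbf x^{+}$.
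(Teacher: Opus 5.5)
Your proof is correct and follows essentially the same route as the paper. It uses the coordinatewise identity $\mathbf b_C\wedge\mathbf e_{+}=\mathbf b_C-(\mathbf b_C-\mathbf e_{+})_{+}$ for the budget, reuses the normalization step of \Cref{thm:pact_partition_stability} directly on the posterior numerators (which, as you correctly note, is why the bound applies despite the enlarged source set), invokes monotonicity of $s$ in $B_{\Pi}$, and uses $\mathbf b_C\preceq\mathbf e_j$ for the final inequality.
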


The budget identity follows coordinatewise from $\mathbf b_C\wedge\mathbf e_{+}=\mathbf b_C-(\mathbf b_C-\mathbf e_{+})_{+}$. The normalization bound in the proof of \Cref{thm:pact_partition_stability} depends only on the posterior numerators and on their coordinate sums, so it applies to this change of budget. Monotonicity of the selection score in the total evidence budget gives $s^{+}\leq s$, and $\mathbf b_C\preceq\mathbf e_j$ yields the final inequality.

\begin{corollary}[Margin-conditional prediction and score-selection stability]
\label{cor:pact_selection_stability}
Suppose $\widehat k$ is the unique maximizer of $\boldsymbol\pi$ with margin $\mu_{\pi}=\pi_{\widehat k}-\max_{j\neq\widehat k}\pi_j>0$. Sufficient conditions for preserving both the predicted contract and the score-selection status are
\begin{equation}
\frac{2\Delta_{\Pi}}{\max\{Z,Z'\}}<\mu_{\pi},
\qquad
\frac{W\Delta_{\Pi}}{ZZ'}<|s-\tau|.
\end{equation}
The first condition preserves the predicted contract. For fixed $\tau$ and $\zeta_{\mathrm{base}}$, the second preserves the select--withhold status. Taken together, they preserve the selection-stage decision $y^{(0)}$.
\end{corollary}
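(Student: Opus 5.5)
The argument reduces both claims to \Cref{thm:pact_partition_stability}, applied to the two collections $\{\mathbf e_i\}$ and $\{\mathbf e_i'\}$ on the fixed partition, followed by a comparison of margins for each conclusion.

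\textbf{Predicted contract.} The distribution bound in \Cref{eq:pact_distribution_stability} gives $\lVert\boldsymbol\pi'-\boldsymbol\pi\rVert_1\le 2\Delta_{\Pi}/\max\{Z,Z'\}$. I would not pass through the $\ell_\infty$ norm with a factor of two. Instead, for any $j\neq\widehat k$ I would compare the gaps directly:
\[
(\pi_{\widehat k}'-\pi_j')-(\pi_{\widehat k}-\pi_j)\ \ge\ -\bigl(|\pi_{\widehat k}'-\pi_{\widehat k}|+|\pi_j'-\pi_j|\bigr)\ \ge\ -\lVert\boldsymbol\pi'-\boldsymbol\pi\rVert_1 .
\]
This yields $\pi_{\widehat k}'-\pi_j'\ge\mu_\pi-\lVert\boldsymbol\pi'-\boldsymbol\pi\rVert_1>0$. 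Hence $\widehat k$ remains the unique maximizer of $\boldsymbol\pi'$, and the tie-breaking order never becomes relevant. This step is where I expect the only subtlety: the sum of two coordinate deviations must be bounded by the $\ell_1$ norm itself, not by twice the $\ell_\infty$ norm. That is what makes the stated threshold $\mu_\pi$ sufficient without an extra factor.

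\textbf{Score selection.} The bound in \Cref{eq:pact_score_stability} gives $|s'-s|\le W\Delta_{\Pi}/(ZZ')<|s-\tau|$. Therefore $s'$ lies strictly on the same side of $\tau$ as $s$. If $s\ge\tau$, the strict inequality forces $s>\tau$, and then $s'>\tau$. If $s<\tau$, then $s'<\tau$. The value $\tau=+\infty$ is handled trivially, since both scores lie in $[0,1]$ and are withheld. Because $\zeta_{\mathrm{base}}$ is held fixed, the condition in \Cref{eq:pact_preliminary_selection} has the same truth value under both collections, so the selection status $\sigma^{(0)}$ is preserved.

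\textbf{Conclusion.} The map $\mathcal A_\tau$ depends only on $(\widehat k,s,\zeta_{\mathrm{base}})$, and only through $\widehat k$ and the indicator $\mathbb 1\{s\ge\tau\land\zeta_{\mathrm{base}}=1\}$. Both of these are preserved by the two steps above. Hence $y^{(0)}$, which is $\mathcal A_\tau$ evaluated at these inputs, is unchanged.
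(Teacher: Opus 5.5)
Your proof is correct and follows the paper's route: the paper's one-line justification is that the $\ell_1$ posterior bound of \Cref{thm:pact_partition_stability} keeps every top-versus-competitor gap positive and that the score bound keeps $s$ from crossing $\tau$, and you fill in those details, including the strict-side argument and the case $\tau=+\infty$. As a small aside, the $\ell_\infty$ route you chose to avoid would also yield the stated threshold $\mu_\pi$. Because $\boldsymbol\pi$ and $\boldsymbol\pi'$ both sum to one, $2\lVert\boldsymbol\pi'-\boldsymbol\pi\rVert_\infty\le\lVert\boldsymbol\pi'-\boldsymbol\pi\rVert_1$, so no extra factor of two would arise.
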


The posterior condition keeps every top-versus-competitor gap positive under the stated perturbation bound, while the score condition prevents $s$ from crossing $\tau$.

% =============================================
% APPENDIX B: Additional Experimental Details
% =============================================
\section{Additional Experimental Details}
\label{sec:additional_experimental_details}

\subsection{Benchmark and comparator specifications}
\begin{table}[pos=htbp]
\centering
\caption{Outcomes of alternative component-corroboration rules.}
\label{tab:component_corroboration_rules}
\begin{threeparttable}
\IFTableSetup
\begingroup
\footnotesize
\renewcommand{\arraystretch}{1.00}
\setlength{\tabcolsep}{2.5pt}
\begin{tabularx}{\linewidth}{@{}>{\centering\arraybackslash}p{1.25cm}>{\centering\arraybackslash}p{1.55cm}>{\raggedright\arraybackslash}X>{\centering\arraybackslash}p{1.35cm}>{\centering\arraybackslash}p{1.55cm}>{\centering\arraybackslash}p{1.35cm}@{}}
\toprule
\makecell[c]{Supporting\\sources} & \makecell[c]{Complete\\components} & Supporting roles & Unanimity & \makecell[c]{Component\\count} & Cross-role \\
\midrule
1 & 1 & Any & -- & -- & -- \\
\multirow{3}{*}{2} & 1 & Language + geometry/risk & -- & \textsc{confirm} & \textsc{confirm} \\
& 2 & Language + geometry/risk & -- & \textsc{confirm} & \textsc{admit} \\
& 2 & No language + geometry/risk pair & -- & \textsc{confirm} & \textsc{confirm} \\
\multirow{3}{*}{3} & 1 & Language + geometry/risk & \textsc{confirm} & \textsc{confirm} & \textsc{confirm} \\
& 2 & Language + geometry/risk & \textsc{confirm} & \textsc{confirm} & \textsc{admit} \\
& 3 & Language + geometry/risk & \textsc{admit} & \textsc{admit} & \textsc{admit} \\
\bottomrule
\end{tabularx}
\endgroup
\begin{tablenotes}[flushleft]\footnotesize
\item[] \textit{Note.} Unanimity applies to high-confidence support from all three sources. Component count requires at least two supporting sources and admits only at the complete-component threshold. Cross-role admits when language and geometry or path risk support the candidate in two complete components. A dash denotes a rule that is not invoked.
\end{tablenotes}
\end{threeparttable}
\end{table}

Removing individual checks changes admission in different ways. With the nested-Dirichlet candidates and scores held constant, removing command consistency adds 3,256 wrong admissions, whereas removing source validity, risk support, or component corroboration adds 2,400, 0, or 2,400, respectively. The risk-support rejections observed elsewhere arise under provenance-discounted pooling. For the fixed nested-Dirichlet candidates, the admission rule admits all 2,400 constructed adversarial-consensus cases with three supporting components and maps them to confirmation with two. The same rule maps these cases to fallback where geometry is incomplete, and maps all 4,062 otherwise admitted candidates to \textsc{hold} where language is marked stale.

Admission outcomes depend on the manner in which supporting components are counted. Under the unanimity rule, PACT and nested Dirichlet obtain ncsAURC values of 0.086 and 0.148. The component-count variant closely matches them at 0.087 and 0.149 while retaining 18 fewer correct candidates per method. The cross-role rule raises ncsAURC to 0.389 and 0.412 and yields $R_{\mathrm{all}}=7.69\%$ at near-matched coverage. For PACT, these values coincide with those of the provenance-partition configuration without the component-corroboration check in \Cref{tab:selective_performance}, although the two rules map two-source cases differently. Under the cross-role rule, the 2,400 additional wrong admissions obtained with PACT have three agreeing source outputs but only two complete provenance components.

Under the illustrative decision cost in \Cref{eq:decision_cost}, an always-withhold policy has $C_{10}=1$. The unanimity rule attains 0.870 (95\% CI $[0.864,0.876]$), as against 6.311 ($[6.303,6.318]$) for always continuing with the candidate produced by PACT. Reducing the component requirement to $\nu=2$ raises the cost to 1.639 ($[1.625,1.653]$), which exceeds the always-withhold cost. These comparisons use the illustrative 10:1 relative cost and do not estimate application-specific utility. A shared non-release cost also does not distinguish the costs of hold, confirmation, and fallback.

Among instances with no unavailable source, corroboration strengthens the partition contrast from $-0.0025$ (95\% CI $[-0.0029,-0.0021]$) to $-0.0276$ ($[-0.0298,-0.0254]$). Over $[0.10,0.35]$, the corresponding contrasts are $-0.00017$ ($[-0.00038,0.00005]$) and $-0.0216$ ($[-0.0234,-0.0198]$).

The benchmark maps language, handover geometry, and path risk to opinions over the five action contracts. Close-to-hand geometry and increasing path risk shift support away from unconstrained transfer and toward slower, withheld, or corrective responses. Occlusion lowers source quality and increases conflict. Near-copy perturbations use $\epsilon\in\{0.001,0.005,0.01,0.02\}$ and move $\epsilon$ probability mass from the leading geometry contract to the second-highest geometry contract while provenance is preserved. The multiplicity study repeats the geometry output within the same component. Missing-source sensitivity compares retention of an unavailable source as zero evidence with omission of that source before component construction. In this benchmark, omission, source removal, and null-source isolation are equivalent, since the unavailable source bridges no observed components.

The admission summaries in \Cref{tab:typed_admission_policy} are defined over the observed sources. For source $i$, $p_i^{\mathrm{peak}}=\max_{k\in\mathcal K}p_{ik}$ and $\Delta_i=p_{i,h_i(x)}-\max_{j\neq h_i(x)}p_{ij}$. Where $\mathcal I_{k^*}(x)$ is nonempty, $p_{\min}^{\mathrm{peak,obs}}$, $q_{\min}^{\mathrm{obs}}$, and $d_{\max}^{\mathrm{obs}}$ denote the minimum peak probability, the minimum quality, and the maximum conflict among the sources supporting the selected candidate $k^*$.

The primary benchmark specification uses $K=W=5$ and equal source scales. The nested-Dirichlet concentration is selected within each outer training fold and is then shared with PACT. Hierarchy-matched cautious fusion participates in the outer-fold comparison but not in concentration selection. Source-specific rescaling is evaluated separately as a sensitivity analysis.

Quality-weighted fusion averages the available distributions with weights proportional to $q_i$. Product fusion discounts each distribution toward the uniform base rate, $r_{ik}=\rho_i p_{ik}+(1-\rho_i)/K$, multiplies the discounted probabilities for each class across sources, and normalizes across classes. Nested Dirichlet composition forms source evidence using the fold-selected concentration $\kappa$, and PACT uses the same concentration under equal source scaling:
\begin{equation*}
e_{ik}=\rho_i\kappa p_{ik},
\qquad
b_{ik}=\frac{e_{ik}}{K+\sum_\ell e_{i\ell}},
\qquad
u_i=\frac{K}{K+\sum_\ell e_{i\ell}}.
\end{equation*}

For nested Dirichlet, the available opinions are composed sequentially in the fixed order $L,G,R$ using
\begin{equation*}
c=\sum_{k\neq\ell}b_kb'_\ell,
\qquad
\widetilde b_k=\frac{b_kb'_k+b_ku'+b'_ku}{1-c},
\qquad
\widetilde u=\frac{uu'}{1-c}.
\end{equation*}
This is followed by $\pi_k=\widetilde b_k+\widetilde u/K$ under the TMC parameterization. The corresponding selection score is $1-\widetilde u$~\citep{Han2021TMC}. Global cautious fusion applies the cautious rule of Den{\oe}ux to all available masses without a provenance partition~\citep{Denoeux2008Cautious}.

Hierarchy-matched cautious fusion takes the cautious minimum within each provenance component before conjunctive combination across components. Unavailable sources contribute vacuous mass, and the pignistic transformation supplies the decision probabilities~\citep{SmetsKennes1994TBM}. For pooling, source $i$ receives log-weight $2q_i-2d_i-4m_i+\log\delta_i$, where $m_i$ marks an unavailable source and $\delta_i=(1+\eta_i)^{-1}$ discounts the source weight according to the total edge weight $\eta_i$ to connected sources with the same predicted class. The normalized weights define the weighted log pool, and the value-only counterpart sets $\delta_i=1$. The remaining comparators omit unavailable sources, while nonmissing invalid distributions are mapped to the uniform distribution. If every source is unavailable, quality-weighted fusion and product fusion return the uniform distribution. Global cautious and hierarchy-matched cautious fusion reduce the empty or vacuous collection to a uniform pignistic distribution. Nested Dirichlet returns the uniform distribution with native score zero, and the instance is ineligible.

Score comparisons use posterior peak, top-two margin, inverse normalized entropy, and pooled correctness. For pooled correctness, one standardized $\ell_2$-regularized logistic model (penalty $10^{-4}$) is fitted in each outer fold to the pooled outer-training predictions from product fusion, nested Dirichlet, global cautious fusion, and PACT. Its inputs are the five posterior probabilities, normalized entropy, posterior peak, top-two margin, and observed-source count, and method identity is excluded. The fitted model and the standardization of its training fold score the held-out predictions from those four methods and, without refitting of the score model, from provenance-discounted pooling and hierarchy-matched cautious fusion. Score thresholds are fitted separately on the outer-training instances of each method. The fixed-representative ablation retains the first source in each component under the common source order and recovers the budget only where $\mathbf e_{i^\star}=\mathbf b_C$.

Relative to the unduplicated output, product fusion and nested Dirichlet change the predicted contract in 33.4\% and 34.0\% of the exact-copy comparisons, respectively. Provenance-discounted pooling preserves the predicted contract yet reaches a posterior $\ell_1$ drift of 0.227. For the undegraded near-copy sweep, a descriptive zero-intercept linear fit of posterior $\ell_1$ drift against $\epsilon$ has slope 0.216.

For the fixed-representative ablation, the selection score changes on 82.3\% of all instances and on 89.9\% of the instances for which the representative and the meet predict the same contract.

At a provenance-misspecification rate of 50\%, the wrong-admission rate under false refinement remains 43.8\% below the singleton reference and reaches that reference only at 100\%, the largest rate tested. Merging and recoverable deletion remain below the singleton reference throughout the tested grid. Incorrect distinct-parent assignments and false refinement produce identical outputs under this construction.

For the stratification by availability and validity, product fusion has a pre-admission ncsAURC of 0.747 on the 19,200 three-valid-source instances and of 0.952 on the 12,000 instances with an unavailable or invalid source. Restricting eligibility to all-valid instances while the full denominator is retained gives 0.616, 0.699, and 0.635 for PACT, nested Dirichlet, and product fusion, respectively.

Within the 19,200 instances with three valid sources, the native-score post-admission ncsAURC is 0.0012 for PACT and 0.0199 for nested Dirichlet. Under posterior-peak ranking, the corresponding values are 0.0956 and 0.0979, while product fusion reaches 0.0909. Once the constructed adversarial-consensus condition is removed, PACT and nested Dirichlet both report 0.0532 under posterior-peak ranking, while product fusion reaches 0.0517 on the remaining 16,800 three-valid-source instances.

Across the 856 single-merge relations, coarsening increases selection-score ncsAURC for 73.2\% of these merges with TMC and for 73.6\% with RCML, decreases it for 17.1\% and 18.7\%, respectively, and leaves it unchanged for the remaining merges.

\begin{table}[pos=htbp]
\centering
\caption{Pooled correctness-score ncsAURC on the three-valid-source subset (19,200 instances). Lower values indicate lower risk over the common coverage support.}
\label{tab:pooled_score_valid_sources}
\begin{tabular}{lcc}
\toprule
Fusion rule & Before admission & After admission \\
\midrule
PACT & 0.803 & 0.367 \\
Nested Dirichlet & 0.432 & 0.048 \\
Product fusion & 0.405 & 0.232 \\
Global cautious fusion & 0.384 & 0.109 \\
\bottomrule
\end{tabular}
\end{table}

\subsection{Threshold and learned-comparator details}

Fusion-concentration candidates are ranked on outer-training data using the illustrative relative decision cost
\begin{equation}
C_{\lambda}
=
\frac{\lambda N_{\mathrm{wrong}}+N_{\mathrm{nonrelease}}}{N},
\qquad
\lambda=10,
\label{eq:decision_cost}
\end{equation}
where $N_{\mathrm{wrong}}$ and $N_{\mathrm{nonrelease}}$ count wrong admissions and non-release decisions among $N$ evaluations. For each concentration candidate, the cost is averaged across the target coverages $0.10$, $0.13$, and $0.15$. The fusion-concentration candidates are $4,8,12,16,24$. Ties are resolved by fewer wrong admissions and then by greater correct retention. Remaining ties follow a fixed order of quality-weighted fusion, product fusion, global cautious fusion, lineage-unaware pooling, provenance-discounted pooling, and nested Dirichlet with concentration $4,8,12,16,24$, in that order.

For method $b$, outer fold $f$, and stage $a\in\{\mathrm{pre},\mathrm{post}\}$, let $g_b$ denote the selection score, $\mathcal D_f^{\mathrm{tr}}$ the full outer-training set, and $\mathcal D_{f,b,\mathrm{elig},a}^{\mathrm{tr}}$ its base-eligible subset. Where $a=\mathrm{post}$, this subset is further restricted to the instances that are admission-eligible under method $b$. At the target coverage $\gamma_{\mathrm{tar}}$, the cutoff is
\begin{equation}
r_f=\operatorname{round}_{\mathrm{even}}\!\left(\gamma_{\mathrm{tar}}|\mathcal D_f^{\mathrm{tr}}|\right),\quad
j_{f,b,a}=\min\{|\mathcal D_{f,b,\mathrm{elig},a}^{\mathrm{tr}}|,r_f\},\quad
\tau_{f,b,a}=
\begin{cases}
g_{b,(j_{f,b,a})},&j_{f,b,a}\geq1,\\
+\infty,&j_{f,b,a}=0.
\end{cases}
\label{eq:selective_threshold}
\end{equation}

Here $\operatorname{round}_{\mathrm{even}}$ rounds to the nearest integer, with half-integer ties resolved to the even integer, and $g_{b,(j_{f,b,a})}$ is the $j_{f,b,a}$th largest score among the instances eligible for method $b$ at stage $a$. The target count is defined relative to the full outer-training set and is capped by the number of eligible instances. Instances tied at the cutoff are retained. Scores lie in $[0,1]$, so the extended cutoff $+\infty$ for an empty eligible set selects no instance.

Risk--coverage curves use 36 equally spaced coverage points, linear interpolation, and trapezoidal integration. For PACT, the base-eligible subset contains 28,800 instances, of which 17,769 are misclassified, which gives the random-ordering reference $A_{\mathrm{PACT}}^{\mathrm{rand}}=17{,}769/28{,}800\approx0.616979$. Confidence intervals are contrast-specific and are unadjusted for multiplicity.

For the quality of the posterior probabilities, the unnormalized multiclass Brier score is $\sum_{k=1}^{K}(\pi_k-\mathbb 1\{k=k_{\mathrm{pref}}\})^2\in[0,2]$, where $k_{\mathrm{pref}}$ denotes the reference contract. ECE$_{10}$ uses the posterior peak as confidence and ten equal-width bins weighted by empirical frequency. Empty bins contribute zero.

Let $A_b$ denote the observed ncsAURC of the combination $b$ of method and selection score. The ncsAURC gap between product fusion and PACT comprises a difference under random ordering and a difference in excess risk above the respective random-ordering references.
\begin{equation*}
\begin{aligned}
A_{\mathrm{prod}}-A_{\mathrm{PACT}}
&=
\bigl(A_{\mathrm{prod}}^{\mathrm{rand}}-A_{\mathrm{PACT}}^{\mathrm{rand}}\bigr)
+
\bigl[(A_{\mathrm{prod}}-A_{\mathrm{prod}}^{\mathrm{rand}})
-(A_{\mathrm{PACT}}-A_{\mathrm{PACT}}^{\mathrm{rand}})\bigr]
\\
&\approx0.0725+0.0615\approx0.1340.
\end{aligned}
\end{equation*}
Of the gap of 0.1340, a portion of 0.0725 (54.1\%) is already present under random ordering and reflects differences in the candidates produced by the fusion rules. Relative to their own random-ordering references, PACT and product fusion have excess ncsAURC values of 0.0125 (95\% CI $[0.0053,0.0188]$) and 0.0740. The product-minus-PACT difference in excess ncsAURC accounts for the remaining 0.0615. Neither native score improves correctness ordering over its own random-ordering reference in this benchmark.

In the joint scene-and-condition holdout, the separately trained Set Transformer has an ncsAURC of 0.558 (95\% CI $[0.536,0.580]$) and performs worse than each analytic rule in that comparison.

A Set Transformer receives a 19-dimensional representation of each source. It includes five class probabilities, quality and conflict scores, and indicators of missingness, staleness, invalid source format, and missing conflict information. An invalid-format indicator marks an explicitly invalid source specification, while the missing-conflict indicator marks an absent conflict value in an otherwise present source. Three further indicators describe whether the source is current, whether stale evidence carries the previous command, and whether the nonempty current-command hash differs from the nonempty hash stored with the source evidence. Three source-type indicators and two lineage-group indicators complete the representation.

The model uses two four-head self-attention blocks of width 96, feed-forward width 192 with rectified linear unit activations, and a learned four-head pooling query. Separate heads predict the action contract and the selection score. Training minimizes selection-weighted cross-entropy with denominator floor $10^{-6}$. The loss also includes a coefficient-32 penalty where mean selection falls below 0.40, together with a coefficient-0.5 full-sample cross-entropy term. AdamW uses learning rate 0.003, weight decay 0.0001, and batches of 1,024 for at most 60 epochs. Early stopping monitors the mean validation selective loss and stops after eight epochs without an improvement of $10^{-5}$. Results are averaged over three initializations for each held-out condition and outer fold. Training, model selection, and cutoff fitting exclude the test scenes and the held-out source condition.

Each of the 13 source conditions contributes 2,400 of the 31,200 instances. The conditions cover source loss or invalidity, degraded quality, stale context, unsafe urgency, adversarial agreement, missing conflict information, partial disagreement, and high-confidence contradiction. These source conditions vary the local evidence state, whereas the matched provenance interventions separately vary the manner in which support is counted.

\subsection{Learned-prediction evaluation details}

HandWritten supports budget, posterior, accuracy, and ordering analyses for TMC and RCML. The PIE--RCML pair supports budget analysis only. The Scene15--TMC pair adds a second dataset for the full provenance, accuracy, and selective-ordering analysis.

Across the HandWritten and PIE dataset--model pairs, the largest false-splitting expansion occurs on the PIE--RCML pair ($3.21\times$ the original-view budget), while the strongest all-view contraction occurs on the HandWritten--TMC pair (0.16\% of the original-view budget).

\begin{table}[pos=htbp]
\centering
\caption{Effects of camera grouping with four prompt variants.}
\label{tab:native_view_fm_transfer}
\begin{threeparttable}
\IFTableSetup
\begingroup
\footnotesize
\renewcommand{\arraystretch}{0.98}
\setlength{\tabcolsep}{2.5pt}
\begin{tabular*}{\linewidth}{@{}l@{\extracolsep{\fill}}
S[table-format=1.4]
S[table-format=1.4]
S[table-format=2.2]
S[table-format=1.3]
S[table-format=1.3]@{}}
\toprule
& \multicolumn{3}{c}{Predictive and selective performance}
& \multicolumn{2}{c}{Structural response} \\
\cmidrule(lr){2-4}\cmidrule(lr){5-6}
Condition
& {\makecell[c]{Selection\\ncsAURC$\downarrow$}}
& {\makecell[c]{Posterior\\ncsAURC$\downarrow$}}
& {Acc. (\%)$\uparrow$}
& {\makecell[c]{Budget\\ratio}}
& {\makecell[c]{Posterior\\$\ell_1$ drift}} \\
\midrule
\multicolumn{6}{@{}l}{\textbf{Qwen3-VL-32B}} \\
One output per camera (ref.) & 0.2554 & 0.2064 & 74.46 & 1.000 & 0.000 \\
Acquisition grouped          & 0.2257 & 0.1702 & 78.35 & 0.783 & 0.145 \\
Per-output counting          & 0.2481 & 0.1881 & 75.19 & 4.000 & 0.159 \\
Identical-vector merge       & 0.2661 & 0.1845 & 75.59 & 3.839 & 0.158 \\
All-view merge               & 0.2327 & 0.2380 & 54.93 & 0.038 & 0.298 \\
\midrule
\multicolumn{6}{@{}l}{\textbf{Qwen3-VL-8B}} \\
One output per camera (ref.) & 0.4675 & 0.2524 & 53.25 & 1.000 & 0.000 \\
Acquisition grouped          & 0.2562 & 0.2560 & 59.22 & 0.476 & 0.213 \\
Per-output counting          & 0.3764 & 0.2404 & 62.36 & 4.000 & 0.275 \\
Identical-vector merge       & 0.4315 & 0.2426 & 61.30 & 3.677 & 0.276 \\
All-view merge               & 0.2570 & 0.2551 & 57.71 & 0.028 & 0.492 \\
\bottomrule
\end{tabular*}
\endgroup
\begin{tablenotes}[flushleft]\footnotesize
\item[] \textit{Note.} Acc. denotes accuracy. Metrics are computed within task and averaged equally across the six tasks. Both ncsAURC columns use common support $[0.10,0.90]$. Budget ratio and posterior $\ell_1$ drift are relative to one output per camera. Budget ratios remain checkpoint-specific because the meet depends on checkpoint predictions.
\end{tablenotes}
\end{threeparttable}
\end{table}

Checkpoint separation and all-checkpoint merging produce posterior $\ell_1$ drifts of 0.006 and 0.095, respectively, relative to model-family grouping.

The checkpoint comparison uses pooled event-window aggregation over the common support $[0.10,0.90]$. Under model-family grouping, PACT fusion reaches 86.6\% accuracy, close to the 86.5\% obtained by nested Dirichlet and by hierarchy-matched cautious fusion. Product fusion has the lowest NLL (0.441 against 0.596 for PACT fusion). Across product fusion, nested Dirichlet, hierarchy-matched cautious fusion, PACT fusion with model-family grouping, and PACT fusion with checkpoint separation, ncsAURC spans 0.1642--0.1654. The value for nested Dirichlet is lower than that for model-family-grouped PACT by 0.0007 (95\% CI $[0.0003,0.0011]$). The operators are close in accuracy and in ncsAURC, yet they differ in probability quality and in calibration.

Under pooled event-window weighting, the model-family pairing ranks seventh among 105 pairings, with nine tied or higher ($9/105\approx0.086$). Equal task weighting moves it to 22nd, with 24 tied or higher ($24/105\approx0.229$). A post-hoc sensitivity analysis excludes the two LLaVA checkpoints, which predict \textsc{ready} for nearly every event. The model-family pairing then ranks third among 15 pairings, which leaves the association unconfirmed (one-sided tail fraction $3/15=0.20$).

For HABIT, the log odds under both label mappings are oriented as \textsc{ready} minus \textsc{not ready} and averaged. Applying the sigmoid to the averaged log odds gives $p_{\mathrm{ready}}$, which yields evidence $2(1-p_{\mathrm{ready}},p_{\mathrm{ready}})$. In the admission study, the evaluation comprises 120 episode-target and 600 counterfactual-target queries per checkpoint across the six tasks. The separate checkpoint panel pairs the 8B/32B Qwen3-VL, 2B/8B InternVL3, 0.5B/7B LLaVA-OneVision, and 0.5B/2.2B SmolVLM2 variants. The 105 partitions of the eight checkpoints into four pairs are ranked by mean within-pair binary-error $\phi$ minus mean between-pair binary-error $\phi$. Bootstrap resampling draws episodes within tasks while all windows and checkpoints are retained.

The common-score comparison uses acquisition-level records from the two Qwen3-VL checkpoints. It holds the numerical $m=4$ acquisition-grouped mass score fixed while the candidate produced by each counting rule is evaluated. Risk is evaluated at 36 target coverages from 0.10 to 0.90, averaged first within tasks and then equally across tasks, and integrated over that support. At a coverage boundary inside a tied-score group, the group contributes fractionally using its mean error rate. Paired 95\% percentile intervals use 2,000 episode-stratified bootstrap resamples, with both methods retained for every sampled episode.

Event proximity is estimated by regularized logistic regression over frozen ResNet-50 visual features~\citep{He2016ResNet} and term frequency--inverse document frequency text features. Regularization is selected by episode-grouped cross-validation. The primary model uses all development episodes, and each task-held-out model excludes the evaluated task. Absolute interfaces use a threshold of 0.5, development-selected thresholds, or Platt scaling. Positive change in the event-proximity score is indicated by $\mathbf 1\{s_{\mathrm{proximal}}>s_{\mathrm{early}}\}$, with ties assigned zero. Estimates are averaged within episodes and equally across tasks. Admission and temporal-control intervals use task-stratified episode bootstraps.

\begin{table}[pos=htbp]
\centering
\caption{Reference consistency and admission retention across checkpoints.}
\label{tab:checkpoint_admission_retention}
\begin{threeparttable}
\IFTableSetup
\begingroup
\footnotesize
\renewcommand{\arraystretch}{0.98}
\setlength{\tabcolsep}{2.0pt}
\begin{tabular*}{\linewidth}{@{}l@{\extracolsep{\fill}}*{6}{S[table-format=2.1]}l@{}}
\toprule
& \multicolumn{3}{c}{Candidate evidence only}
& \multicolumn{3}{c}{Identity + learned event proximity}
& \multicolumn{1}{c}{$\Delta$ inconsistent rate} \\
\cmidrule(lr){2-4}\cmidrule(lr){5-7}
Checkpoint
& {Consistent}
& {Inconsistent}
& {Ready recall (\%)}
& {Consistent}
& {Inconsistent}
& {Coverage (\%)}
& {pp (95\% CI)} \\
\midrule
Qwen3-VL-8B  & 53 & 91 & 88.3 & 51 & 1 & 7.2 & $-12.50\;[-13.75,-11.25]$ \\
Qwen3-VL-32B & 57 & 86 & 95.0 & 54 & 1 & 7.6 & $-11.81\;[-13.19,-10.42]$ \\
InternVL3-8B  & 27 & 71 & 45.0 & 24 & 1 & 3.5 & $-9.72\;[-11.67,-7.78]$ \\
\bottomrule
\end{tabular*}
\endgroup
\begin{tablenotes}[flushleft]\footnotesize
\item[] \textit{Note.} Reference-consistent admissions and ready recall use the 60 reference-\textsc{ready} episode targets. Reference-inconsistent admissions are counted over 720 offline queries per checkpoint, and coverage is relative to the same 720 queries. Here pp denotes percentage points and CI denotes confidence interval. Signed differences compare the joint rule with candidate evidence alone and use task-stratified episode resampling. Reference consistency denotes agreement with the dataset action, not physical task success.
\end{tablenotes}
\end{threeparttable}
\end{table}

\Cref{tab:checkpoint_admission_retention} compares Qwen3-VL-8B, Qwen3-VL-32B, and InternVL3-8B. Target identity and learned event proximity jointly leave one reference-inconsistent admission per checkpoint, with coverage below 8\%. For Qwen3-VL-8B, they exclude 90 of 91 inconsistent admissions and retain 51 of 53 consistent admissions.

For joint fusion and admission, each checkpoint supplies 2,400 outputs from 60 episodes, two temporal windows, five cameras, and four prompt variants. Prompt multiplicity, camera removal, and exact-copy multiplicity vary repeated computation and acquired viewpoints separately. Task-stratified bootstrap intervals are computed by resampling intact episodes.

With per-output counting under the binary mapping, adding seven copies of mass two to each of the 20 prompt outputs can raise the event-window budget by $20\times7\times2=280$ units. Release still depends on the temporal comparison, not an evidence-mass cutoff.

For Hotel Towel, all ten Qwen3-VL-32B candidates satisfy target identity and have an early-window reference, yet none passes the PACT temporal comparison. Per-output counting admits one, while one-output-per-camera counting admits none. For Qwen3-VL-8B, candidate selection fails first in one episode. None of the remaining nine passes either the PACT or the per-output temporal comparison, whereas four pass where one output per camera determines the temporal posterior.

\Cref{tab:joint_policy_factorial} reports all combinations of the identity and temporal requirements. In the supplementary all-output-merging arm at four prompts per camera, the same complete typed admission rule retains 46 reference-consistent candidates from Qwen3-VL-32B and 42 from Qwen3-VL-8B, with no observed reference-inconsistent admission among 720 queries per checkpoint.

\begin{table}[pos=htbp]
\centering
\caption{Offline HRC admission across counting rules and admission requirements.}
\label{tab:joint_policy_factorial}
\begin{threeparttable}
\IFTableSetup
\begingroup
\footnotesize
\renewcommand{\arraystretch}{0.98}
\setlength{\tabcolsep}{2.4pt}
\begin{tabular*}{\linewidth}{@{}l@{\extracolsep{\fill}}*{4}{S[table-format=2.0]}cc@{}}
\toprule
& \multicolumn{2}{c}{Qwen3-VL-32B}
& \multicolumn{2}{c}{Qwen3-VL-8B}
& \multicolumn{2}{c}{Requirements} \\
\cmidrule(lr){2-3}\cmidrule(lr){4-5}\cmidrule(lr){6-7}
Counting rule
& {Consistent} & {Inconsistent}
& {Consistent} & {Inconsistent}
& Identity & Temporal \\
\midrule
All three rules & 57 & 86 & 53 & 91 & \xmark & \xmark \\
All three rules & 57 & 22 & 53 & 25 & \cmark & \xmark \\
\addlinespace[1pt]
\textbf{Camera-grouped PACT} & 47 & 38 & 43 & 51 & \multirow{3}{*}{\xmark} & \multirow{3}{*}{\cmark} \\
Per-output counting & 48 & 38 & 42 & 51 & & \\
One output per camera & 45 & 38 & 44 & 52 & & \\
\addlinespace[1pt]
\textbf{Camera-grouped PACT} & 47 & 0 & 43 & 0 & \multirow{3}{*}{\cmark} & \multirow{3}{*}{\cmark} \\
Per-output counting & 48 & 0 & 42 & 0 & & \\
One output per camera & 45 & 0 & 44 & 0 & & \\
\bottomrule
\end{tabular*}
\endgroup
\begin{tablenotes}[flushleft]\footnotesize
\item[] \textit{Note.} \cmark\ and \xmark\ denote enabled and disabled requirements, respectively. Temporal is the strict positive-change criterion. The row for all three rules gives identical counts for camera-grouped PACT, per-output counting, and one output per camera. Reference-consistent admissions use the 60 reference-\textsc{ready} episode targets. Reference-inconsistent admissions use all 720 queries per checkpoint. Candidate selection and target identity are fixed across counting arms. Consistency denotes agreement with the dataset reference action. Zero denotes no observed reference-inconsistent admission in this finite evaluation.
\end{tablenotes}
\end{threeparttable}
\end{table}

\begin{table}[pos=htbp]
\centering
\caption{Leave-one-task-out temporal-rule comparison under fixed candidates and target identity.}
\label{tab:habit_loto_temporal_transfer}
\begin{threeparttable}
\IFTableSetup
\begingroup
\footnotesize
\renewcommand{\arraystretch}{0.98}
\setlength{\tabcolsep}{2.6pt}
\begin{tabular*}{0.72\linewidth}{@{}l@{\extracolsep{\fill}}*{2}{S[table-format=2.0]}S[table-format=2.1]@{}}
\toprule
\multirow{2}{*}{Temporal rule}
& \multicolumn{2}{c}{Admissions}
& \multicolumn{1}{c}{\multirow{2}{*}{\makecell[c]{Reference-\textsc{ready}\\recall (\%)}}} \\
\cmidrule(lr){2-3}
& {Consistent} & {Inconsistent} & \\
\midrule
Absolute score (0.5) & 37 & 10 & 61.7 \\
Dev balanced accuracy & 34 & 10 & 56.7 \\
Dev FPR $\leq0.05$ & 38 & 10 & 63.3 \\
Dev Platt calibration & 37 & 10 & 61.7 \\
\textbf{Paired positive change} & \bfseries 53 & \bfseries 0 & \bfseries 88.3 \\
\bottomrule
\end{tabular*}
\endgroup
\begin{tablenotes}[flushleft]\footnotesize
\item[] \textit{Note.} Each evaluated task is excluded from temporal-model fitting and threshold or calibration selection. Only the temporal rule varies. Consistent admissions use the 60 reference-\textsc{ready} episode targets, whereas inconsistent admissions use all 720 queries. Consistency denotes agreement with the dataset reference action. Zero denotes no observed reference-inconsistent admission in this finite evaluation. Bold marks the column best. Dev denotes development-set selection. FPR denotes false-positive rate.
\end{tablenotes}
\end{threeparttable}
\end{table}

Within-task discrimination remains substantial across the six leave-one-task-out folds (equal-task-weighted area under the receiver operating characteristic curve (AUROC) 0.863, 95\% CI $[0.840,0.888]$), although task-dependent score baselines limit the transfer of the tested absolute thresholds (\Cref{fig:habit_score_origin}). A fixed 0.5 threshold admits ten reference-inconsistent candidates, and neither development-only thresholds nor Platt scaling eliminates reference-inconsistent admissions. Under the common admission policy, the within-episode positive-change rule instead retains 53 reference-consistent and zero reference-inconsistent candidates (\Cref{tab:habit_loto_temporal_transfer}). For the tested absolute rules, threshold placement limits transfer but is not established as the sole source of error.

The larger block-exchangeable analysis supports event-specific temporal pairing. Once one nonexchangeable episode is excluded, early-to-event-proximal concordance is 0.963, as against 0.926 under the task- and event-profile-stratified permutation reference ($p=0.0001$). A cross-task sensitivity analysis gives the same directional result. The evidence for pairing specificity is less conclusive in the smaller balanced 60-episode subset ($p=0.1106$). The 53 reference-consistent and zero reference-inconsistent admissions in this subset characterize the common admission policy rather than the temporal mechanism.

Positive change alone is insufficient for sequential admission. After averaging equally across tasks, the early-to-event-proximal rate of positive score change exceeds the pre-early-to-early rate by an absolute difference of 0.333 (episode-bootstrap 95\% CI $[0.233,0.433]$). The same sign rule would still admit 14 reference-inconsistent early-window queries in the offline evaluation while 53 reference-consistent queries are retained. Under this offline policy, relative change supports admission only once a temporal reference exists. Before that reference exists, the temporal condition is unmet, and the ordered policy then returns \textsc{confirm} where an otherwise selected candidate has a target mismatch and \textsc{hold} otherwise.

\begin{figure}[pos=htbp]
\centering
\includegraphics[width=0.9\textwidth]{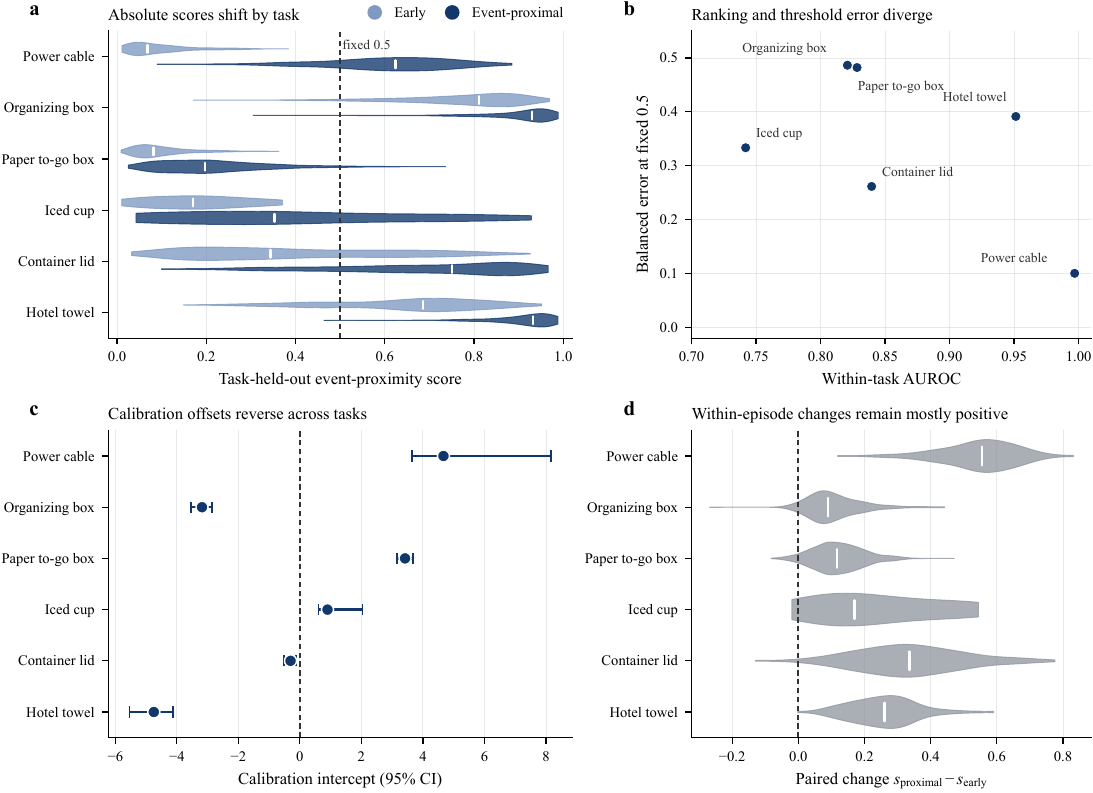}
\caption{Within-episode score changes are more stable across tasks than absolute score levels. Descriptive panels use all 1,128 events from 696 episodes. Task-specific score baselines shift while within-episode changes remain mostly positive. (a) Early and event-proximal distributions across six HABIT tasks. White ticks mark medians, and the dashed line marks 0.5. (b) Within-task area under the receiver operating characteristic curve (AUROC) versus balanced error at 0.5. (c) Calibration intercepts with episode-clustered 95\% intervals. (d) Paired early-to-proximal changes. The dashed line marks zero.}
\label{fig:habit_score_origin}
\end{figure}

Temporal tests use 10,000 fixed-point-free permutations of complete early-score profiles within each task and event-occurrence-profile stratum, with plus-one-corrected one-sided upper-tail $p$-values. Removing the sole singleton block leaves 1,126 events in 695 episodes. The statistic is the equal-task mean of episode-level strict concordance. The cross-task sensitivity analysis matches events by occurrence profile using donors from different tasks without reusing episodes. Task-dependent baseline analyses use all 1,128 pairs from 696 episodes. In the separate 60-event sequential comparison, the pre-early frame is fixed at 1.0~s before the designated early frame for each of the five cameras.

% ===========================================
% APPENDIX B.1: Sensitivity analysis and Scene15 evaluation
% ===========================================
\subsection{Sensitivity analysis and Scene15 evaluation}
\label{subsec:scene15_tmc_details}

Across all 256 Sobol settings, post-admission ncsAURC$_{[0.10,0.35]}$ remains lower for PACT than for nested Dirichlet and for provenance-discounted pooling. The corresponding ranges are 0.066--0.069, 0.107--0.223, and 0.180--0.286, and no wrong admission is observed at the operating point targeting a coverage of 0.13. Perturbation of the parameters preserves the aggregate ordering, whereas the held-condition analysis reveals reversals.

The 256-point scrambled Sobol design varies the language peak probability over $\{0.65,0.75,0.85,0.90,0.95\}$, the opinion-generation concentration over $\{50,100,200\}$, and the occlusion strength over $\{0.5,1.0,1.5\}$, while the quality and conflict multipliers are sampled within $[0.8,1.2]$. Geometry and risk perturbations preserve the original top class, and the fusion concentration remains fixed within each fold.

The source scales used with PACT have unit geometric mean and are selected by training NLL. Every fold selects approximately $(0.63,0.63,2.52)$ for language, geometry, and risk, respectively. Before admission, rescaled PACT and nested Dirichlet yield ncsAURC values of 0.632 and 0.775, respectively. After admission, the corresponding values are 0.115 and 0.148. Equal-scale PACT yields 0.629 and 0.086.

The Scene15--TMC evaluation uses 30 stratified random 80/20 train--test realizations with 897 held-out instances each. Following the preprocessing used by the TMC reference implementation~\citep{Han2021TMC}, the primary replication applies MinMax scaling independently within the training and test partitions. Mean accuracy is 66.79\% (population standard deviation (SD) 2.22 percentage points), 0.95 points below the published 67.74\%.

The provenance interventions are applied to held-out predictions from five realizations, and the learned outputs are left unchanged (4,485 predictions, 897 per realization). Original grouping, multiplicity-eight false refinement, and all-view merging yield budgets of 5.94, 19.74, and 0.316. The corresponding accuracies are 66.40\%, 61.78\%, and 57.90\%, and the corresponding ncsAURC values over $[0.01,1]$ are 0.231, 0.290, and 0.287. Scores are sorted in descending order with stable tie ordering. Cumulative risk is linearly interpolated onto 101 equally spaced coverage points and integrated by the trapezoidal rule over this interval. A 2,000-draw two-stage paired bootstrap, resampling realizations and then held-out instances within realizations, gives two-sided 95\% percentile intervals of $[12.80,14.78]$ and $[-5.94,-5.30]$ for the signed budget changes under splitting and merging relative to original grouping.

A separate preprocessing sensitivity analysis fits the scaler on the training partition and applies it unchanged to the test partition. Mean accuracy under this convention is 71.82\% across five realizations (population SD 1.88 percentage points, 4,485 held-out predictions).

\paragraph{Latency measurement.}
On the evaluated three-source benchmark running on an Intel Core Ultra 9 285K, the decision layer of PACT has a median latency of $195.8\,\mu\mathrm{s}$ and a 95th percentile of $251.7\,\mu\mathrm{s}$. The fold-specific cutoff is determined before timing and is applied unchanged. Latency covers the complete decision layer from provenance-graph construction through fusion, selection, and typed admission. It excludes perception, model inference, communication, provenance acquisition, and physical actuation. Increasing the source count from three to 64 raises the median connected-component discovery time on sparse provenance graphs with disjoint source pairs from $1.0$ to $150.8\,\mu\mathrm{s}$. The corresponding 95th percentiles are $1.1$ and $166.4\,\mu\mathrm{s}$.

% =========================================
% Data availability
% =========================================
\section*{Data availability}
Code, experimental configurations, and redistributable evaluation materials supporting the reported analyses are available at \url{https://github.com/ZekaiJ/PACT}. Third-party datasets remain subject to their original licenses.

% ===================================
% Declaration of competing interest
% ===================================
\section*{Declaration of competing interest}
Zhen Dong is a faculty member at the University of California, Santa Barbara (UCSB), and is also affiliated with NVIDIA Corporation as a researcher. The remaining authors declare that they have no known competing financial interests or personal relationships that could have appeared to influence the work reported in this paper.

% =============================================
% CRediT authorship contribution statement
% =============================================
\section*{CRediT authorship contribution statement}
\textbf{Zekai Jin:} Conceptualization, Data curation, Formal analysis, Investigation, Methodology, Software, Validation, Visualization, Writing -- original draft. \textbf{Hanrong Zhang:} Formal analysis, Writing -- review \& editing. \textbf{Yihong Tang:} Formal analysis, Methodology, Writing -- original draft. \textbf{Fei Hu:} Data curation, Writing -- original draft. \textbf{Zhen Dong:} Conceptualization, Supervision, Writing -- review \& editing. \textbf{Yi Shao:} Conceptualization, Funding acquisition, Supervision, Writing -- review \& editing.

% ===============
% Funding
% ===============
\section*{Funding}
This work was supported by the McGill Engineering Doctoral Award (MEDA). The funder had no role in study design, data generation, analysis, interpretation, manuscript preparation, or the decision to submit the article.

\renewcommand{\bibfont}{\fontsize{8pt}{8.4pt}\selectfont}
\bibliographystyle{unsrtnat}
\bibliography{cas-refs}
\end{document}